\definecolor{deep_blue}{rgb}{0,.2,.5}
\definecolor{dark_green}{rgb}{0,0.5,.15}
\definecolor{deep_brown}{RGB}{128, 70, 21}
\newcommand{\NA}{\mathtt{NA}}
\newcommand{\density}{g}
\newcommand{\regTilde}{f^{\star}_{\bXm}}
\newcommand{\regPrime}{f^{\star}_{\textrm{SI}}}
\newcommand{\bM}{\mathbf{M}}
\newcommand{\bm}{\mathbf{m}}
\newcommand{\regMultImput}{f^{\star}_{\textrm{MI}}}
\newcommand{\impConst}{\alpha}
\newcommand{\E}{\mathbb{E}}
\newcommand{\R}{\mathbb{R}}
\newcommand{\X}{\mathbf{X}}
\newcommand{\Xm}{\widetilde{X}}
\newcommand{\bXm}{\widetilde{\mathbf{X}}}
\newcommand{\M}{\mathbf{M}}
\newcommand{\x}{\mathbf{x}}
\newcommand{\bx}{\mathbf{x}}
\newcommand{\bX}{\mathbf{X}}
\newcommand{\distribution}{P}
\newcommand{\dataset}{\mathcal{D}}
\newcommand{\bxm}{\widetilde{\mathbf{x}}}
\newcommand{\m}{\mathbf{m}}
\newcommand{\argmin}{\mathrm{argmin}}
\newcommand*\diff{\mathop{}\!\mathrm{d}}
\renewcommand{\P}{\mathbb{P}}
\definecolor{OliveGreen}{rgb}{0.24, 0.71, 0.54}
\definecolor{RoyalBlue}{rgb}{0.0, 0.47, 0.75}
\definecolor{BrickRed}{rgb}{0.77, 0.12, 0.23}
\definecolor{Vert}{RGB}{0,128,0}
\newcommand{\replace}[2]{ #2}
\newcommand{\indep}{\rotatebox[origin=c]{90}{$\models$}}
\renewcommand*\env@matrix[1][*\c@MaxMatrixCols c]{%
  \hskip -\arraycolsep
  \let\@ifnextchar\new@ifnextchar
  \array{#1}}
\newtheorem{assumption}{Assumption}
\newtheorem{model}{Model}
\newtheorem{missing_pattern}{Missing Pattern}
\newcommand{\modif}[1]{{#1}}
\begin{document}

%opening
\title{On the consistency of supervised learning with missing values}
\author{Julie Josse
\and
Jacob M Chen
\and
Nicolas Prost
\and
Ga\"el Varoquaux
\and
Erwan Scornet%
}
%19 february 2019
%\editor{}
%\editor{TBD}
\date{\today}

\institute{J. Josse
\at
Centre de Math\'emathiques Appliqu\'ees, Ecole Polytechnique\qquad
\email{julie.josse@inria.fr}\\
\emph{Present address: INRIA-INSERM University of Montpellier}
\and
J.M. Chen 
\at Williams College, Massachusetts
\and
N. Prost 
\at Centre de Math\'emathiques Appliqu\'ees, Ecole Polytechnique\\ Parietal project-team, INRIA Saclay
\and
E. Scornet 
\at  Sorbonne Université and Université Paris Cité, CNRS,\\
Laboratoire de Probabilités, Statistique et Modélisation, F-75005 Paris, France 
\and
G. Varoquaux
\at Parietal project-team, INRIA Saclay}

\maketitle 

\begin{abstract}
In many application settings, data have missing entries, which makes subsequent analyses challenging. An abundant literature addresses missing values in an inferential framework, aiming at estimating parameters and their variance from incomplete tables. Here, we consider supervised-learning settings: predicting a target when missing values appear in both training and test data. We first rewrite classic missing values results for this setting. We then show the consistency of two approaches, test-time multiple imputation and single imputation in prediction. A striking result is that the widely-used method of imputing with a constant prior to learning is consistent when missing values are not informative. This contrasts with inferential settings where mean imputation is frowned upon as it distorts the distribution of the data. The consistency of such a popular simple approach is important in practice. Finally, to contrast procedures based on imputation prior to learning with procedures that optimize the missing-value handling for prediction, we consider decision trees. Indeed, decision trees are among the few methods that can tackle empirical risk minimization with missing values, due to their ability to handle the half-discrete nature of incomplete variables. After comparing empirically different missing values strategies in trees, we recommend using the ``missing incorporated in attribute'' method as it can handle both non-informative and informative missing values. 
\keywords{Bayes consistency, empirical risk minimization, decision trees, missing values,
imputation, missing incorporated in attribute}

\end{abstract}

%\setcounter{secnumdepth}{4}
%\setcounter{tocdepth}{4}
%\tableofcontents

\section{Introduction}

% Catch phrase
As volumes of data increase, they are harder to curate and clean.
They may come from the aggregation of various
sources (\emph{e.g.} merging multiple databases)
and contain variables of different natures (\emph{e.g.}
different sensors). Such heterogeneous data
collections can lead to many missing values: samples only come with a
fraction of the features observed. Though there is a vast literature on
treating missing values, it focuses on
estimating parameters and their variance in the presence
of missing values in a single data set. In contrast, there are few
studies of supervised-learning settings where
the aim is to predict a target variable given input variables \modif{with missing entries}. These
settings only require to use  \emph{discriminative} (or conditional)
models, compared to the first inferential frameworks, which often assume
parametric data distributions (generative modelling). \replace{In
addition, in supervised learning, the quantities of interest are related to
the prediction target, and not to the input, hence a liberal stance
can be taken on what consists in a valid input, such as accepting
a special missingness symbol.}{}Besides, a
predictive model is applied on a test set, different from the training
set, a separation
seldom considered in inferential settings. Therefore, inference and prediction in the presence of missing values are intrinsically two very different problems. 

Beyond the aggregation of multiple sources, missing values can
appear for a variety  of reasons. For sensor data, missing values can 
arise from device failure. Informative missing values can be found 
for instance in poll data where participants may not answer
sensitive questions related to unpopular opinions.
In medical studies, some measurements may be impractical on
patients in a critical state, in which case the presence of missing
values can be related to the 
variable of interest, target of the prediction (\emph{e.g.} patient
status). These various scenarios correspond to different missing-value mechanisms.

% Previous work
The classical literature on missing values, led by
\citet{rubin1976inference}, defines missing-values mechanisms based on the
relationship between missingness and observed values: if they are
independent, the mechanism is said to be Missing Completely At Random
(MCAR); if the missingness only depends on the observed values, then it
is Missing At Random (MAR); otherwise it is Missing Not At Random (MNAR).
%These ambiguous definitions are often contested \citep[see \textit{e.g.}][]{seaman2013meant, 1811.04161,
%mohan2018graphical}.
However, this nomenclature has seldom been discussed in the context of
supervised learning, accounting for the target variable of the prediction. 

Many statistical methods tackle missing values \citep{josse2018, mayer2019r}. 
Listwise deletion, \emph{i.e.} removing incomplete observations,
may allow to train the model on complete data. However, 
it may lead to the deletion of almost all data especially in high dimension and may result in biased sample depending on the missing values mechanism. 
In addition, it does not
suffice for supervised learning, as the test set may also contain
 incomplete data. Hence the prediction procedure should handle missing
values. A popular solution is to
impute missing values, that is to replace them with
plausible values to produce a completed data set. The benefit of
imputation is that it adapts existing pipelines and software to
the presence of missing values.
The widespread practice of
imputing with the mean of the variable on the observed entries has
serious drawbacks in inferential settings, as it distorts
the joint and marginal distributions of the data which induces biased
estimators \citep{little2002statistical}. Yet,
\modif{few studies focus on the impact of \replace{mean}{constant} imputation in a predictive setting.}
%has never been studied when the aim is to predict an output.
Imputation itself must be revisited for out-of-sample prediction
settings: users resort to different strategies such as imputing separately the train and test sets or imputing them jointly. 
More elaborate
strategies rely on using maximum likelihood with expectation maximization (EM) to fit a model on
incomplete data \citep{dempster1977maximum, little1992regression, jiang2018logistic}.
However, such techniques 
%a model cannot readily be applied to new incomplete data.
%In addition, the EM 
often rely on strong parametric assumptions. %including
%the MAR mechanism. 
Alternatively, some learning algorithms, such as decision
trees, can readily handle missing values, accounting for their discrete
nature.

\bigskip

% Contributions
In this paper, we study the classic tools of missing values in the
context of supervised learning.
% and outline
We start in Section  \ref{sec:definitions} by setting the notations and
briefly summarizing the missing-value literature.
Our first contribution, detailed in Section
\ref{sec:supervisedtraintest}, is to adapt the 
formalism for missing values to supervised learning: we show how to use
standard missing-values techniques to make predictions on a test set with missing values. 
Section \ref{sec:Bayesrisk} presents our main contribution:
studying the consistency of two approaches to estimate the prediction
function with missing values. The first theorem states that, given an
optimal predictor for the completely-observed data, a consistent
procedure can be built by predicting on a test set where missing entries
are replaced by multiple imputation. The second theorem, which is the
most striking and has important consequences in practice, shows that constant \replace{(and, in particular, mean)}{} imputation prior to learning is consistent for supervised learning. This
is, as far as we know, the first result justifying this very convenient practice of handling missing values.
%In Section \ref{sec:trees}, we compare various missing data methods for trees:
%surrogate splits, the default in Classification and Regression Trees
%(CART, \citealt{DBLP:books/wa/BreimanFOS84}), probabilistic splits, the
%default in C4.5 \citep{quinlan2014c4}, block propagation, used
%LightGBM \citep{ke2017lightgbm}, a method called ``missing incorporated in
%attribute'' (MIA, \citealt{Twala:2008:GMC:1352941.1353228}) 
%and conditional inference trees \citep{Hothorn:2006:JCGS}. %These methods are quite diverse in essence, because they do not all optimise the same criterion.
%Theoretical analysis of toy examples justifies the good performance of
%MIA, observed empirically in  
In Section \ref{sec:simu}, we compare imputation to learning directly with missing values via decision trees. Indeed, their greedy
and discrete natures allow adapting them to handle missing values directly. We
compare the different tree methods \modif{(together with classic machine learning  approaches such as SVM or nearest neighbours)} on simulated data with missing
values and recommend to use the ``Missing incorporated in attributes'' (MIA, \citealt{Twala:2008:GMC:1352941.1353228}) approach, whose good predictive performances have been highlighted by \cite{kapelner2015prediction}, one of the few 
studies of trees with missing values for supervised learning.  \modif{Other experimental works have shown that tree-based methods are competitive in terms of predictive performances \citep[see][]{jager2021benchmark}. }
%and as a byproduct suggest an easy and reproducible way to simulate missing values according to a MAR mechanism, which is often overlooked in many works.
We also show the benefits for prediction
of an approach often used in practice, which consists in ``adding the
mask'', \textit{i.e.} adding binary variables that code for the
missingness of each variables as new covariates, even though this method
is not recommended for parameter estimation \citep{jones1996indicator}.

\section{Definitions, problem setting, prior art}\label{sec:definitions}

\paragraph{Notation.} \label{notationofthefirstsection}

Throughout the paper, $\mathbf{bold}$ letters refer to vectors; CAPITAL letters refer to random variables, while lower-case letters are realisations. %\replace{}{For any vector \textbf{x}, and any set $ J \subset \{1, \hdots, d\}$, we let $\textbf{x}_J$ be the subvector of $\textbf{x}$ composed of the components of $\textbf{x}$ indexed by $J$.}
In addition, as usual, for any two variables $A$ and $B$ of joint density $g$,
\begin{equation*}
\density(b) \coloneqq \density_B(b) \coloneqq \int \density(\alpha, b) \diff\mu(\alpha), \quad\qquad\qquad
\density(a|b) \coloneqq \density_{A|B=b}(a) \coloneqq \frac{\density(a, b)}{\density(b)}.
\end{equation*}
\subsection{Supervised learning}

Supervised learning is typically focused on learning to predict a
\emph{target} $Y \in \mathcal{Y}$ from inputs $\bX \in \mathcal X = \bigotimes_{j=1}^{d} \mathcal X_j$,
where the pair $(\X, Y)$ is considered as random, drawn from a
distribution $\distribution$. Formally, the goal is to find a function $f:
\mathcal{X} \rightarrow \mathcal{Y}$, that minimizes $\mathbb{E}[\ell(f(\X),
Y)]$ given a cost function $\ell: \mathcal{Y} \times \mathcal{Y} \rightarrow
\mathbb{R}$, called the \emph{loss} \citep{vapnik1999overview}. The best possible prediction
function is known as the \emph{Bayes predictor}, given by
\begin{equation}\label{eq:bayesrule}
    f^{\star} \in \underset{f:\, \mathcal{X} \rightarrow \mathcal{Y}}
	\argmin \;
	\mathbb{E}\left[ \ell(f(\X), Y) \right],
\end{equation}
and its expected loss is the \emph{Bayes loss} \citep{devroye2013probabilistic}.
A \emph{learning} procedure is used to create a function $f$ based on a set
of \emph{training} pairs $ \dataset_{n, \textrm{train}} = \{ (\X_i,
Y_i), i = 1, \hdots , n\}$. The function $f$ is therefore itself a function of
$\dataset_{n, \textrm{train}}$, and can be written
$\hat{f}_{\dataset_{n, \textrm{train}}}$ or simply $\hat{f}_n$. There are many different learning
procedures, including random forests \citep{breiman2001random} or support vector machines \citep[SVM, see][]{cortes1995support}. A
learning procedure that, given an infinite amount of data, yields a
function that achieves the Bayes loss is said to be \emph{Bayes
consistent}. In other words, $\hat{f}_n$ is Bayes consistent if
$$ \lim\limits_{n \to \infty} \E[\ell(\hat{f}_n(\X), Y)] =  \E[\ell(f^\star(\X),
Y)].$$
A learning procedure that is Bayes
consistent for every distribution $(\X, Y)$ is said to be (Bayes) universally consistent.
In a classification setting, $Y$ is drawn from a finite set of discrete values, and the cost $\ell$ is typically the zero-one loss: $\ell(Y_1, Y_2) = 
\mathds{1}_{Y_1 \neq Y_2}$.
In a regression setting, $Y$ is drawn from continuous
values in $\R$ and is assumed to satisfy $\E[Y^2]<\infty$. A common cost
is then the square loss, $\ell(Y_1, Y_2) = (Y_1 - Y_2)^2$.
Considering the zero-one loss \citep{rosasco2004loss} or the square loss (see \emph{e.g.} sec 1.5.5 of
\cite{bishop2006prml}), the  \replace{Bayes-optimal function}{Bayes predictor} $f^{\star}$, that
minimizes the expected loss,
satisfies $f^{\star}(\X) = \mathbb{E}[Y|\X]$.  

Note that the learning procedure has access to a finite sample
$\dataset_{n, \textrm{train}}$, and not to the distribution $\distribution$, 
hence it can only use the \emph{empirical} risk, $\sum_{i=1\hdots n}
\ell(f(\X_i), Y)$, rather than the expected risk.
A typical learning procedure is therefore the \emph{empirical risk minimization} defined as the following optimization problem
$$\hat{f}_n \in \underset{f:\mathcal X \to \mathcal Y}{\argmin}~
\left( \frac{1}{n} \sum_{i=1}^n \ell \left(f(\X_i), Y_i\right)\right).$$
A new data set $\dataset_{n,
\textrm{test}}$ is then needed to estimate the generalization error
of the resulting function \replace{$f$}{$\hat{f}_n$}.

\subsection{Background on missing values}

In this section, we introduce the classic work on missing values,
including the different missing-value mechanisms. We then
summarize the main
methods to handle missing values: imputation and likelihood-based methods.
Most of this prior art to deal with missing values is based on a
single data set with no distinction between training and test set.
One challenge in formalizing statistical learning with missing values is
adapting notations to describe precisely incomplete feature vectors: 
notations not explicit enough have led to confusions
\citep{seaman2013meant}; the
literature is in flux \cite[preface]{little2002statistical}.
%\jj{et le bayesien?} \es{tu penses à quelles
%méthodes en particulier?}  

\paragraph{Notations for missing values}

In presence of missing values, we do not observe a complete vector $\bX$.
To define precisely the observed quantity, we introduce the missing indicator
vector $\M \in \{0,1\}^d$ which satisfies, for all $1 \leq j \leq d$,
$M_j =1$ if and only if $X_j$ is not observed. The random vector $\bM$
acts as a mask on $\bX$. To formalize incomplete observations, 
%in a fixed-size space, 
we use the incomplete feature vector $\bXm$ (see
\citet{rubin1976inference}, \citealt[appendix B]{rosenbaum1984reducing}; \citealt{mohan2018graphical, yoon2018gain}) defined as $\Xm_{j}=\NA$ if $M_{j}=1$, and  $\Xm_{j}=X_j$ otherwise. As $\mathcal X$ is a cartesian product, 
$\bXm$ belongs to the space $\widetilde{\mathcal X} = \bigotimes_{j=1}^{d} (\mathcal X_j \cup \{\NA\})$. We have
$$\bXm = \X\odot(\mathbf 1 - \M) + \NA\odot\M,$$
where $\odot$ is the term-by-term product, with the convention that, for all one-dimensional $x \neq 0$, $\NA\cdot x=\NA$ and $\NA\cdot 0=0$.
As such, when the data are real, $\bXm$ can be seen as a mixed categorical and continuous variable, taking values in $\R\cup\{\NA\}$. Here is an example of realizations (lower-case letters) of previous random variables: for a given vector $\bx = (1.1, 2.3, -3.1, 8, 5.27)$ with the missing pattern $\bm = (0,1,0,0,1)$, we have
\begin{equation*}
\bxm = (1.1, ~~\texttt{NA}, ~-3.1, ~~8, ~~\texttt{NA}).
\end{equation*}

To write likelihoods (see Section~\ref{sec:missingdata}), we must also introduce
notations \replace{$\bX_{\textrm{o}}$ and $\bX_{\textrm{m}}$, classic in the missing value literature.}{$\bx_{obs(m)}$ and $\bx_{mis(m)}$, classic in the missing value literature. For any vector \textbf{x}, and any set $ J \subset \{1, \hdots, d\}$, we let $\textbf{x}_J$ be the subvector of $\textbf{x}$ composed of the components of $\textbf{x}$ indexed by $J$. We also let $|J|$ be the cardinal of $J$. For any vector $\bm \in \{0,1\}^d$, we let $obs(\bm) = \{j \in \{1, \hdots, d\}, \bm_j = 0\}$ and $mis(\bm) = \{j \in \{1, \hdots, d\}, \bm_j = 1\}$. }
\replace{The vector $\bX_{\textrm{o}} = o(\bX, \bM)$ is  composed of observed entries in $\bX$, whereas $\bX_{\textrm{m}} = o(\bX, 1-\bM)$ contains the missing components of $\bX$.}{Hence $\bx_{obs(\bm)} \in  \bigotimes_{j \in obs(\bm)} \mathcal X_j$ is  composed of observed entries in $\bx$ and $\bx_{mis(\bm)}\in  \bigotimes_{j \in mis(\bm)} \mathcal X_j$ contains the missing components in $\bx$. To shorten notations, we will sometimes write $\bx_o$ (resp. $\bx_m$) instead of $\bx_{obs(\bm)}$ (resp. $\bx_{mis(\bm)}$).} To continue the above example, we have
%\begin{align*}
%& \bx_{\textrm{o}} = (1.1, ~\cdot~, ~-3.1, ~~8,  ~\cdot~), \qquad \bx_{\textrm{m}} = ( ~\cdot~, ~~2.3,  ~\cdot~, ~\cdot~, ~~5.27).
%\end{align*}
\begin{align*}
\bx_{obs(\bm)} = \bx_{(1,3,4)}= (1.1, ~-3.1, ~~8), \qquad \bx_{mis(\bm)} = \bx_{(2,5)} = (2.3, ~~5.27).
\end{align*}
\replace{These notations write partial feature vectors: ``$\cdot$'' in second and fifth positions in $\bx_{\textrm{o}}$ means that we do not specify the corresponding component. More precisely, $\bx_{\textrm{o}}$ specifies values of the first, third and fourth component of $\bx_{\textrm{o}}$ but not whether the second and fifth values are observed or not (and if observed, we do not know the exact values). Notation $\bxm$ is thus different from $\bx_{\textrm{o}}$ since $\NA$ specifies which components are missing.}{} \replace{}{Given a vector $\bx \in \bigotimes_{j=1}^{d} \mathcal X_j $ and a missingness pattern $\bm$, one can recompose $\bx$ based on $\bx_{obs(\bm)}$, $\bx_{mis(\bm)}$ and $\bm$ by the ordering operator $\bx = o(\bx_{obs(\bm)}, \bx_{mis(\bm)}; \bm)$.}

%We use the notation $\bxm$ in which missing components appear clearly as a specific code to emphasize that we leverage on missing data.    

Finally, we use the generic notation $\hat{\bx}$ to denote $\bxm$ in which missing values have been imputed, based on any imputation procedure to be specified. For example, if we impute the missing values by $0$ in the previous example, we have 
$$\hat{\bx} = (1.1, ~~0 , -3.1, ~~8, ~~0).$$
Notations related to missing data are summarized in Table~\ref{tab:notations}.

\begin{table}[h!!]
  \centering
  \begin{tabular}{lp{.6\linewidth}}
    \textbf{Notation} & \textbf{Description} \\ \hline
    $\bx \in \mathcal{X}$  & Complete input vector  \\
    $y \in \mathcal{Y}$  & Complete output (always observed) \\
    $ \bm \in \{0,1\}^d$ & Missingness indicator vector\\
    $\bxm \in \bigotimes_{j=1}^{d} (\mathcal X_j \cup \{\NA\})$ & Observed vector $\bx$ where missing entries are written as $\texttt{NA}$ \\
    $\bx_{obs(\bm)}\in  \bigotimes_{j \in obs(\bm)} \mathcal X_j$  & Subvector of $\bx$ containing its observed components \\
%    $\bx_{mis(\bm)}\in  \bigotimes_{j \in mis(\bm)} \mathcal X_j$  & Subvector of $\bx$ containing its missing components\\
    $\bx_{mis(\bm)}\in  \bigotimes_{j \in mis(\bm)} \mathcal X_j$  & Subvector of $\bx$ containing its missing components\\
    $\bx = o(\bx_{obs(\bm)}, \bx_{mis(\bm)}; \bm)$ & Complete vector $\bx$ recreated by merging  $\bx_{obs(\bm)}$ and $\bx_{mis(\bm)}$ according to the missing pattern $\bm$\\
    $\hat{\bx} \in \mathcal{X}$  & Vector $\bx$ for which missing entries have been imputed \\
  \end{tabular}
  \caption{Notations and definitions used throughout the paper.  Bold variables are
  vectors}
    \label{tab:notations}
\end{table}

\subsubsection{Missing data mechanisms}\label{sec:missingdata}

To follow the historical definitions which do not give to the response $Y$ a particular role, we temporarily consider $Y$ as part of the input vector
$\X$, though
we assume that $Y$ has no missing values. 
\citet{rubin1976inference} defines three missing data mechanisms and fundamental results for working
with likelihood models in the presence of missing data.
%These are defined by considering the realised data set
%$(\x_{i})=(x_{ij})\in\R^{n\times d}$, as one realisation from a distribution in $\R^{n\times d}$. The missingness is encoded as an \emph{indicator matrix} $(\m_{i})=(m_{ij})\in\{0,1\}^{n\times d}$ where, for all $i$ and $j$, $m_{ij} =0$  if $x_{ij}$ is observed, and $m_{ij} =1$ if $x_{ij}$ is missing.
%As described in \citet[][section 1.2]{little2002statistical}, the \emph{pattern} of the matrix matters: for instance in longitudinal studies, patients dropping out create monotone patterns (missingness at time $t$ implies missingness at $t+1$), enabling specific methods of analysis. We choose to make no assumption on the pattern, but consider that the indicator matrix is also a sample of i.i.d. rows, so that the distribution of $((X_{ij},M_{ij},Y_{i}))_{i,j}$ can be factorized with respect to $i$, and in the following we only consider random vectors $(\X, \M, Y)=(X_j,M_j, Y)_j$. 
%The definitions of \citet{rubin1976inference} can naturally be
%adapted to the i.i.d. setting.
Let us consider that realizations $(\bx_i,\bm_i)$ are sampled i.i.d. from a distribution in
$\mathscr P=\{\density_\theta(\x) \density_\phi(\m|\x) : (\theta,\phi)\in\Omega_{\theta,\phi}\}$ where $\Omega_{\theta,\phi} \subset \Theta \times \Phi$ is the joint parameter space (marginally, $\theta\in\Theta$ and $\phi\in\Phi$).  The goal in statistical inference is to estimate the parameter $\theta$. This is usually done by maximizing the likelihood $\mathcal L(\theta)=\prod_{i=1}^n \density_\theta(\x_i)$, which is well defined when the $\x_i$ are fully observed. Recall that each $\x_i$ can be decomposed into an observed vector $\x_{i,o}$ and an unobserved vector $\x_{i,m}$. Here, the likelihood is integrated over the missing values, resulting in
\begin{flalign*}
\text{(full likelihood)}&&
\mathcal L_1(\theta, \phi)=
\prod_{i=1}^n 
\int \density_\theta(\replace{\x_{i,o}, \x_{i,{m}}}{o(\x_{i,o}, \x_{i,m}; \bm)}) \density_\phi(\m_i|\replace{\x_{i,{o}},  \x_{i,m}}{o(\x_{i,o}, \x_{i,m}; \bm)})  \diff \x_{i,m},
%\E_{\X\sim f_\theta}\left[g_\phi(\m_i|\X)\mathds 1_{o(\X,\m_i)=o(\x_i,\m_i)}\right],
\end{flalign*}
%where $o(\x,\m)$ denotes the observed values for any realisation $(\x,\m)$ of $(\X,\M)$ \citep{seaman2013meant}, and $\delta$ the Dirac measure. 
where the integration is taken on the components of $\bx_{i,m}$ only.
The parameter $\phi$ is generally not considered as of interest. 
In addition, modelling the missing values mechanism may require strong parametric assumptions. An easier quantity would be
\begin{flalign*}
\text{(likelihood of observed data)}&&
\mathcal L_2(\theta)=
\prod_{i=1}^n 
\int \density_\theta(\replace{\x_{i,{\textrm{o}}}, \x_{i,{\textrm{m}}}}{o(\x_{i,o}, \x_{i,m}; \bm)}) \diff \x_{i,m}
%\prod_{i=1}^n \E_{\X\sim f_\theta}\left[\mathds 1_{o(\X,\m_i)=o(\x_i,\m_i)}\right],
\end{flalign*}
ignoring the missing data mechanism.
To leave the difficult term, \textit{i.e.} the missing values mechanism,  out of the expectation, \citet{rubin1976inference} introduces an \emph{ad hoc} assumption, called \emph{Missing At Random (MAR)}, which is that for all $\phi\in\Phi$, for all  $i\in\llbracket 1,n \rrbracket$, for all $\x'\in\mathcal X$,
\begin{align*}
\x'_{obs(\bm_i)} = \x_{i,o}, \;\;\Rightarrow\;\; g_\phi(\m_i|\x')=g_\phi(\m_i|\x_i),
\end{align*}
\begin{flalign*}
\text{for instance, for all~} a,b \in \R, &&
g_\phi((0,1,0,0)|(1,a,3,10))=g_\phi((0,1,0,0)|(1,b,3,10)).
&&&&
\end{flalign*}
%\es{mis à jour ici}
Using this assumption, he states the following result.
\begin{theorem}[Theorem $7.1$ in \cite{rubin1976inference}]
\label{theo:mar}
Let $\phi$ such that for all $1 \leq i \leq n$, $g_\phi(\m_i|\x_i)$ $>0$. Assume (a) MAR, (b) $\Omega_{\theta,\phi}=\Theta\times\Phi$, then $\mathcal L_2(\theta)$ is proportional to $\mathcal L_1(\theta,\phi)$ with respect to $\theta$, so that the inference for $\theta$ can be obtained by maximizing the likelihood $\mathcal L_2$, which ignores the missing mechanism.
\end{theorem}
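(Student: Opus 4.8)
The plan is to use the MAR assumption to pull the missing-data mechanism entirely outside the integral that defines the full likelihood $\mathcal L_1$, thereby writing $\mathcal L_1(\theta,\phi)$ as a product of $\mathcal L_2(\theta)$ with a factor that depends on $\phi$ but not on $\theta$. Once this factorization is in place, both claims of the theorem (proportionality in $\theta$, and the legitimacy of ignoring the mechanism) follow quickly, the second one using assumption (b).

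First I would fix $i\in\llbracket 1,n\rrbracket$ and examine the $i$-th factor of $\mathcal L_1$, namely $\int g_\theta(\x_{i,o},\x_{i,m})\, g_\phi(\m_i\mid\x_{i,o},\x_{i,m})\diff\x_{i,m}$, where $\x_{i,m}$ ranges over the missing coordinates while the observed coordinates stay pinned to $\x_{i,o}=o(\x_i,\m_i)$. Every value $(\x_{i,o},\x_{i,m})$ attained in this integral is a vector $\x'$ satisfying $o(\x',\m_i)=o(\x_i,\m_i)$, so the MAR implication applies and gives $g_\phi(\m_i\mid\x_{i,o},\x_{i,m})=g_\phi(\m_i\mid\x_i)$, which is constant in the integration variable. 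Pulling it out of the integral and multiplying over $i$ yields
\begin{equation*}
\mathcal L_1(\theta,\phi)=c(\phi)\,\mathcal L_2(\theta),\qquad c(\phi)=\prod_{i=1}^n g_\phi(\m_i\mid\x_i),
\end{equation*}
and the positivity hypothesis $g_\phi(\m_i\mid\x_i)>0$ guarantees $c(\phi)>0$. This is exactly the asserted proportionality between $\mathcal L_2$ and $\mathcal L_1$ with respect to $\theta$.

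To obtain the inferential conclusion, note that since $c(\phi)$ is a strictly positive constant in $\theta$, for every fixed $\phi$ we have $\mathrm{argmax}_\theta\,\mathcal L_1(\theta,\phi)=\mathrm{argmax}_\theta\,\mathcal L_2(\theta)$; in particular the $\theta$-maximizer does not depend on $\phi$. Assumption (b), $\Omega_{\theta,\phi}=\Theta\times\Phi$, is what makes this decoupling meaningful: because the parameter space is a Cartesian product, joint maximization of $\mathcal L_1$ over $(\theta,\phi)$ splits into independent maximizations over $\theta$ and over $\phi$, so the $\theta$-component of any joint maximizer is an $\mathcal L_2$-maximizer. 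Hence one may base inference on $\theta$ on $\mathcal L_2$ alone, ignoring the mechanism.

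I do not expect a genuine obstacle here; the step requiring the most care is the application of MAR inside the integral — one must check that the pointwise implication stated at the observed data configurations is strong enough to make $g_\phi(\m_i\mid\cdot)$ constant on the whole domain of integration (and on a full-measure set, so that the integral factorizes validly). A secondary point worth spelling out is why (b) cannot be dropped: without a product parameter space, constraints coupling $\theta$ and $\phi$ could drag the $\theta$-coordinate of the joint maximizer away from $\mathrm{argmax}_\theta\,\mathcal L_2$, so that ignoring the mechanism would no longer be harmless even though the proportionality in $\theta$ still holds.
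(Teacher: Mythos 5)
Your argument is correct and is the standard proof of Rubin's result: under the paper's pointwise MAR condition, $g_\phi(\m_i\mid\cdot)$ is constant on the entire integration domain $\{\x' : o(\x',\m_i)=o(\x_i,\m_i)\}$, so it factors out of each integral and $\mathcal L_1(\theta,\phi)=c(\phi)\,\mathcal L_2(\theta)$ with $c(\phi)>0$ by the positivity hypothesis, while assumption (b) ensures the joint maximization decouples. The paper does not reprove this statement (it is quoted from \cite{rubin1976inference}), and your derivation, including the remark on why the product parameter space $\Omega_{\theta,\phi}=\Theta\times\Phi$ cannot be dropped, matches the classical argument.
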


%\cite{seaman2013meant} called this mechanism ``realized MAR" to highlight
%that it holds for one realization of one data set (for the observed $\m$
%and $\x$) as opposed to everywhere MAR which is defined for any
%realization of any sample.\gv{I don't think that it is useful to talk
%about realized MAR: we should remove the sentence above.}
MAR has a stronger version, more intuitive: \emph{Missing Completely At Random (MCAR)}. In its simplest and strongest form, it states that $\M\indep\X$ (the model density is $\density_\theta(\x) \density_\phi(\m)$). At the other end of the spectrum, if it is not possible to ignore the mechanism, the corresponding model is called \emph{Missing Not At Random (MNAR)}. 
%\begin{example}
%\np{Should we give an example of MAR, say in 2D ? It would be hard to read but MAR is hard. Maybe example 4 p.583 in Rubin 1976}
%In one dimension, let the model be $$\diff P_{\theta,\phi}(x,m) = \left(\frac{1}{\theta}\mathds 1_{[0,\theta]}(x)\diff x\right) \times\left(\delta_{m, 1}\frac{1}{1+e^{-\phi x}}+\delta_{m, 0}\frac{e^{-\phi x}}{1+e^{-\phi x}}\right).$$ In that case $g_\phi$ is a function of $x$ for all values of $\phi$, so the model is MNAR. \end{example}

%However, realised MAR is only definable by considering the distribution a realisation of a vector of  $\R^{n\times d}$ representing the matrix, not the distribution of an individual row\np{verifier} (there is only one realised $\m$). For i.i.d. data \jj{iid pour M? a preciser pour rendre ca clair}, only everywhere MAR makes sense.
These definitions are often subject to debates \citep{seaman2013meant} but can be understood using the following example: let us consider  two variables, income and age with missing values on income. MCAR means that the missing values are independent of any values; MAR is satisfied if missing values on income depend on the values of age (older people are less incline to reveal their income) whereas MNAR holds if rich people are less prone to reveal their income. %\jj{whatherver their age...}. 

\medskip

There is little literature on missing data mechanism for supervised
learning or discriminative models.
\citet{kapelner2015prediction} formalise the problem by separating the role of the response $y$, factorising the likelihood as $\density_\theta(\x) \density_\phi(\m|\x) \density_\chi(y|\x,\m)$. 
Note that they do not write $\density_\phi(\m|\x,y)$. 
They justify this factorisation with the  -- somewhat causal --
consideration that the missing values are part of the features, which
precede the response. The need to represent the response variable in the
factorization show that it may be useful to extend the traditional
mechanisms for a supervised learning setting: the link between the
mechanism and the output variable can have a significant impact on the
results. \cite{lecturenote} and \cite{arel2018can} noticed that as long
as $\M$ does not depend on $Y$, it is possible to estimate regression
coefficients without bias even with listwise deletion and MNAR values. 
\citet{ding2010investigation} generalise the MAR assumption with the following nomenclature MXY: the missing mechanism can marginally depend on the target (**Y), on the features that are always observed (*X*) or on the features that can be missing (M**).

%use some definitions based on the concepts of \emph{selection models} and \emph{pattern mixture models} \citep{little2002statistical}. These are two ways of factorising the likelihood, which we define below. They express respectively that the missing values are selected from the complete data, and that the distribution of the data depends on the missing values. 

%By introducing a statistical model for $(\X, \M, Y)$,$\mathscr P=\{P_{\theta, \phi, \chi} ~:~ (\theta,\phi, \chi)\in\Omega_{\theta,\phi, \chi}\}$, that has a density with respect to $\mu\otimes\nu\otimes\xi$,
%\begin{equation}\label{eq:patmixmod} \diff P_{\theta,\phi, \chi}(\x, \m, y)=f_\theta(\x)g_\phi(\m|\x)h_\chi(y|\x,\m)\diff \mu(\x)\diff \nu(\m)\diff\xi(y).\end{equation}
%This decomposition, which they called \emph{pattern mixture model}, 
%Model \eqref{eq:patmixmod} is a \emph{selection model} if $$\forall \chi, \forall y,\m, \x, h_\chi(y|\x,\m)=h_\chi(y|\x).$$

\subsubsection{Imputation prior to analysis}\label{sec:imputeprior}

Most statistical models and machine-learning procedures are not designed
for incomplete data. To use existing pipelines in the presence on missing
values,
\emph{imputing} the data is commonly used to form a completed data set.
To (single) impute data,
\emph{joint modeling} (JM) approaches
capture the joint distribution
across features \citep{little2002statistical}. 
A simple example of joint modeling imputation is to assume a
Gaussian distribution of the data and to estimate the mean vector and
covariance matrix from the incomplete data (using an EM algorithm, see Section \ref{sec:em}). 
Missing entries can then be
imputed with their conditional expectation knowing the
observed data and the estimated parameters. More powerful methods can be
based on low-rank models \citep{hastie2015matrix, josse2016denoiser}.  While KNN imputation is not intrinsically suited for high-dimensional data, such methods often yield  good predictive performances 
\citep[see, e.g.,][]{batista2003analysis, poulos2018missing}.
\modif{Recently, several deep learning (DL) architectures have been proposed to impute datasets, based on variational autoencoder \citep[][]{MIWAE}, GAN such as GAIN \citep[][]{yoon2018gain} or MisGAN \citep[][]{li2019misgan}, or on denoising autoencoders as MIDA \citep[][]{vincent2008extracting,gondara2018mida} just to name a few. %Such architectures are largely more computationally efficient than classic imputation techniques. 
The quality of an imputation strategy is usually assessed via its RMSE performance: several observed values are first hidden, then the imputed values are compared to the true ones in terms of RMSE.  
While such a protocol is easy to implement, it has been recently criticized \citep[see, e.g.,][]{naf2023imputation} as the optimal imputation is then the conditional mean, therefore reducing the variability of the distribution of imputed data, compared to the actual true distribution. While DL architectures show promising performances for imputation in terms of
RMSE, DL
architectures fails to provide a correct imputation in terms of distributional statistics \citep[mean, variance, correlation, see][]{wang2022deep}. A new line of research aims at designing imputation scores able to reflect how the distribution of imputed value is close to the actual distribution \citep[see, e.g., ][]{naf2023imputation}. 
In this paper, we are interested in the predictive performance of any methods (including imputation followed by a learning algorithm) capable of handling missing data. Therefore, the quality of imputation is not assessed on its own but through the predictive performance of subsequent learning algorithm applied on the imputed data set. }

\medskip 

Another class of popular approaches to impute data defines the joint
distribution implicitly by the conditional distributions of each variable.
These approaches are called \textrm{fully conditional
specification} (FCS) or \textrm{imputation with conditional equation}
(ICE) \citep{vanbuuren2018flexible}. 
This formulation is very flexible and can easily handle variables of a
different nature such as ordinal, categorical, numerical, etc, via a
separate model for each, \emph{e.g.} using supervised learning.
%: many univariate conditional models are available and one can use, for example, a Poisson regression model to predict a variable with counts, a multinomial logistic regression for a categorical variable, etc. 
Well-known examples of such approach are {\tt missForest},
using iterative imputation of each variable by random
forests \citep{stekhoven2011missforest}, or {\tt MICE} \citep{buuren2010mice}.
Their computational scalability however prohibits their use on large
dataset. As they fit one
model per feature, their cost is at least $O(d^2)$: using random forests
as a base model leads to a complexity   $O(d^2 n \log n)$ and using a ridge model amounts to a complexity  $O(d^2 n
\min (n, d))$.

The role of the dependent variable $Y$
and whether or not to include it in the imputation model has been
a rather controversial point. Indeed, it is quite counter-intuitive to
include it when the aim is to apply a conditional model on the imputed
data set to predict the outcome $Y$. Nevertheless, it is recommended as
it can provide information for imputing covariates
\citep[][p.57]{allison2001missing}.  \cite{Sterneb2393} illustrated the
point for the simple case of a bivariate Gaussian data ($X, Y$) with a
positive structure of correlation and missing values on $X$. Imputing
using only $X$ is not appropriate when the aim is to estimate the parameters of the linear regression model of $Y$ given $X$.  %http://thestatsgeek.com/2015/05/07/including-the-outcome-in-imputation-models-of-covariates/).

One important issue with ``single'' imputation, \textit{i.e.} predicting
only one value for each missing entries,  is that it forgets that some values were missing and considers imputed values and observed values in the same way. It leads to underestimation of the variance of the parameters \citep{little2002statistical} estimated on the completed data. One solution, to incorporate the uncertainty of the imputed values is to use multiple imputation (MI, \citealt{rubin:1987}) where many plausible values are generated for each missing entries, leading to many imputed data sets. Then, MI consists in applying an analysis on each imputed data sets and combining the results. Although many procedures to generate multiple imputed data sets are available \citep{murray2018}, 
here again, the case of discriminative models is rarely considered,
with the exception of \cite{model_select_mice} who use a variable
selection procedure on each imputed data set and propose to keep the
variables selected in all imputed data sets to construct the final model
\citep[see also][]{mirl}. 
%We note that even when imputing data, the objective is to make an
%inference with missing data, \textit{e.g.} to best estimate parameters and their variance in the presence of an incomplete data set. \es{je comprends pas la dernière phrase}
%The few studies focusing  are only few studies on variable selection issues after multiple imputation or on how to combine results from non-parametric models., use a variable selection procedure on each imputed data and keep the variables selected in each imputed data set to define the final model. In the same spirit,  combine stability selection with penalized regression. 

\subsubsection{EM algorithm}\label{sec:em}

Imputation leads to two-step methods that are generic
in the sense that any analysis can be performed from the same
imputed data set.
On the contrary, the  expectation maximization (EM) algorithm \citep{dempster1977maximum} proceeds
directly in one step. It can thus be better suited to a specific problem but requires the development of a dedicated algorithm. 

The EM algorithm can be used in missing data settings to compute maximum likelihood estimates from an incomplete data set. 
Indeed, with the assumptions of Theorem \ref{theo:mar} (MAR settings),
maximizing the observed likelihood $\mathcal L_2$  gives principle estimation of parameters $\theta$.
The log-likelihood of the observed data is
\begin{flalign*}
\log \mathcal{L}_2 (\theta)= \sum_{i=1}^n \log \int \density_\theta(o(\x_{i,o}, \x_{i,m}; \bm)) \diff \x_{i,m}.
\end{flalign*}
Starting from an initial parameter $\theta^{(0)}$, the algorithm alternates the two following steps, 
\begin{flalign*}
\text{\textbf{(E-step)}}&&
Q(\theta|\theta^{(t)}) &= 
\sum_{i=1}^n \int (\log \density_\theta(o(\x_{i,o}, \x_{i,m}; \bm))) \density_{\theta^{(t)}}(o(\x_{i,o}, \x_{i,m}; \bm)) \diff \x_{i,m}.
&& \\
\text{\textbf{(M-step)}}&&
\theta^{(t+1)} &\in \underset{\theta\in\Theta}{\mathrm{argmax}}~ Q(\theta|\theta^{(t)}).
&&
\end{flalign*}
The well-known property of the EM algorithm states that at each step $t$, the observed log-likelihood increases, although there is no guarantee to find the global maximum.
In Appendix \ref{sec:exem} we give an example of an EM algorithm to estimate the parameters of a bivariate Gaussian distribution from incomplete data. The interested reader can refer to \cite{roche2011algorithm} and the references therein for a theoretical review on the EM algorithm.

\section{Supervised learning procedures with missing data on train and test set } \label{sec:supervisedtraintest}
%(Out-of-sample missing values ?)}

Supervised learning typically assumes that the data are i.i.d. In
particular, an out-of-sample observation (test set) is supposed to be drawn from the
same distribution as the original sample (train set). Hence, it must possess the same
missing-values mechanism. An appropriate method should then be able to predict
on new data with missing values. Here we discuss how to adapt classic
missing-values techniques to machine-learning settings, and vice versa.

%\es{anciennement, présentation de $\bXm$ ici}
%\paragraph*{Notations}
%Following \citet{rubin1976inference} and others (\citealt[appendix B]{rosenbaum1984reducing}; \citealt{mohan2018graphical, yoon2018gain}), we define the incomplete feature vector $\bXm$ as $\Xm_{j}=\NA$ if $M_{j}=1$, and  $\Xm_{j}=X_j$ otherwise. 
%As $\mathcal X$ is a cartesian product, $\bXm$ belongs to the space $\widetilde{\mathcal X} = \bigotimes_{j=1}^{d} (\mathcal X_j \cup \{\NA\})$. We have
%$$\bXm = \X\odot(\mathbf 1 - \M) + \NA\odot\M,$$
%where $\odot$ is the term-by-term product, with the convention that, for all one-dimensional $x$, $\NA\cdot x=\NA.$
%As such, when the data are real, $\bXm$ can be seen as a mixed categorical and continuous variable, taking values in $\R\cup\{\NA\}$.
%The observed training set, which is available for statistical analysis, is then defined as $\widetilde{\dataset}_{n, \textrm{train}} = ((\bXm_i, Y_i))_{1\leq i\leq n}$.

\subsection{Out-of-sample imputation}
\label{subsec:imputation_train_test}

Using missing-value imputation in a supervised learning setting is
not straightforward as it 
 requires to impute new, out-of-sample, test data, where the target $Y$ is unavailable. 
 
A simple strategy is to fit an imputation model on the training set;  let
us consider a parametric imputation model governed by a parameter
$\alpha$\replace{, it yields $\hat\alpha$.   We denote, $\hat\X_{\mathrm{train}}$
the imputed training data set and}{. Based on the training set, we estimate a value $\hat{\alpha}$ and use this value to impute the training set, which is then denoted by $\hat{\X}_{\mathrm{train}}$. Then} a supervised-learning model is learned using
$\hat{\X}_{\mathrm{train}}$ and $Y_{\mathrm{train}}$. If the
supervised-learning procedure is indexed by a parameter $\beta$, it yields the estimated parameter $\hat\beta$. Finally, on the test set, the covariates
must be imputed with the same imputation model (using $\hat{ \alpha}$) and
the prediction is built using the imputed test set and the
estimated learning model (using $\hat{ \beta}$).

This approach is easy to implement for \emph{univariate
imputation}
methods that consider
each feature separately, for instance with mean imputation: parameters
$\hat \alpha$ correspond to the mean $\hat\mu_j$ of each column which is
learned on the training set, and new observations on the test set are imputed by $(\hat\mu_1,\ldots,\hat\mu_d)$.
This approach can also be implemented with a joint Gaussian model on 
$(\X,Y)$, learning parameters of the Gaussian with the EM algorithm on the
training set. Indeed, one can then impute the test set using the conditional expectations of the missing features given the observed features (without $Y$) and the estimated parameters.

For more general imputation methods, two issues hinder out-of-sample
imputation. First, many available imputation methods are ``black-boxes''
that take as input an incomplete data set and output a completed
data set: they do not separate the estimation of model parameters from
their use to complete the data. This is the case for many implementations
of iterative conditional imputation such as missForest
\citep{stekhoven2011missforest}, through scikit-learn \citep{scikit-learn} and  recent versions of MICE
\citep{vanbuuren2018flexible} (using an argument "ignore")
provides out-of-sample iterative conditional imputation. It is also difficult for powerful
imputers presented in Section \ref{sec:imputeprior} such as low-rank
matrix completion, which cannot be easily marginalised on $\X$ alone.

As most existing implementations cannot easily impute
a new data set with the same imputation model, some
analysts resort to
performing separate imputation of the training set and the test set.
But the smaller the test set, the more suboptimal this strategy is,
and it completely fails in the case where only one observation has to be
predicted. 
Another option is to consider semi-supervised settings, when the test
set is available at train time: 
grouped imputation can then
simultaneously impute the train and the test set
\citep{kapelner2015prediction}, while the predictive model is
subsequently learned on the training set only. 
%Nevertheless, it often happens that the test set is not available at training time.
%However, these settings
%are seldom the case in supervised learning.

%A second challenge to imputation is that it can be useful to include the response variable $Y$ in the imputation model to impute the covariates, as mentioned in Section \ref{sec:imputeprior}. In this case, the same imputation model is not applicable to a test set. \jj{attention, contradiction par rapport au paragraphe 3}

\begin{comment}
As a guideline for that framework, the \emph{target encoder} uses the values of the training labels to transform a categorical feature: for each category $C_k$, the average value $\hat y_k$ is learnt on the training set, then used to encode the category out-of-sample.
\end{comment}

\subsection{EM and out-of-sample prediction}\label{sec:empred}

The likelihood framework (Section \ref{sec:missingdata}) enables predicting
new observation, though it has not been much discussed.
\citet{jiang2018logistic} consider a special case of this approach for a logistic regression where covariates $\X$ are assumed to be Gaussian.

Let the assumptions of Theorem~\ref{theo:mar} be verified (MAR
settings). 
The true parameters of the model can then be estimated by maximizing
the observed log-likelihood $\log \mathcal{L}_2$ with an EM algorithm 
on the train data (Section \ref{sec:em}). The
corresponding estimates $\hat\theta_n$ can be used  for out-of-sample
prediction with missing values. 
The probability distribution of $y$ as a function of the observed values
$\x_o = \x_{obs(\bm)}$ only, can be related to that on a fully-observed data set:
\begin{align}
\density_{\hat \theta_n}(y | \bX_o = \x_{o})
&= \frac{\density_{\hat \theta_n}(y, \x_{o})}{\density_{\hat \theta_n}( \x_{o})}  \notag\\
& = \frac{1}{\density_{\hat \theta_n}(\x_{o})} \int \density_{\hat \theta_n}(y, o(\x_{o}, \x_{m}; \bm)) \diff\x_{m}  \notag\\
& =  \int \density_{\hat \theta_n}(y | o(\x_{o}, \x_{m}; \bm)) \frac{\density_{\hat \theta_n}( o(\x_{o}, \x_{m}; \bm))}{\density_{\hat \theta_n}(\x_{o})} \diff\x_{m}  \notag\\
%&= \frac{1}{\density_{\hat \theta_n}(\x_{o})} \int \density_{\hat \theta_n}(y | \x) \density_{\hat \theta_n}(\x_{m}|\x_{o}, \bm) \density_{\hat \theta_n}(\x_{o}, \bm) \diff\x_{m}  \notag\\
%&= \int \density_{\hat \theta_n}(y | \x) \density_{\hat \theta_n}(\x_{m}|\x_{o}, \bm) \diff\x_{m}  \notag\\
&= \E_{\X_{m}|\X_{o}=\x_{o}}\left[\density_{\hat \theta_n}(y|o(\x_o, \X_{m}; \bm))\right]
\label{eq:em_imputation}
\end{align}
It is then possible to approximate the expectation with Monte Carlo
sampling from the distribution $\density_{\hat \theta_n}(\X_{\textrm{m}}|\X_{\textrm{o}}=\x_{\textrm{o}})$. Such a
sampling is easy in simple models, \emph{e.g.} using Schur's complements
for Gaussian distributions in linear regression settings. But in more
complex settings, such as logistic regression, there is no explicit solution and one option is to use  
Metropolis-Hasting algorithms \citep[][]{hastings1970monte}.

%\subsection{EM}
%\textcolor{red}{Consistency of EM-imputation in MAR settings?}
%Faire le lien EM, realized MAR, 1 pattern, etc.
%The issue with this approach is that it requires Monte-Carlo simulations for the conditional model associated to each missing indicator $\m$, and there are up to $2^d$ such models.
%
%So if we have a joint model for $X$ in addition to the model on $y|X$, is it equivalent to say that multiple imputation can be learnt without $Y$? Provided the joint model on $X$ can be conditioned
%
%Indeed learning algorithms often work on $\R$, and it is hard to adapt them to
%work $\R\cup{\NA}$, as it is not a vector space.

%Then,
%\begin{align*} \E[Y|\X_{obs}=\x_{obs}] &= \int p(y|\x_{obs}) y \diff y\end{align*}
%If we are able to draw $\x_{miss}$ from its estimated marginal distribution, \emph{i.e} to perform $K$ multiple imputations, then we can yield the density estimate
%$$\widehat p(y, \x_{obs}) = \frac{1}{K} \sum_{k=1}^{K} p(y|\x_{obs}, \x_{k, miss}),$$
%and the prediction
%$$\widehat y = \frac{1}{K} \sum_{k=1}^{K} \int p(y|\x_{obs}, \x_{k, miss}) y \diff y.$$

%\jj{Dans le cas d'un modèle linéaire, on peut également imputer les X avec leur espérance conditionnelle par rapport aux variables observées et ensuite appliquer le modèle de régression avec les coefficients estimés. C'est équivalent à ce qui est décrit ci-dessus dans la limite où K tend vers l'infini (?).}

\subsection{Empirical risk minimization with missing data}

The two approaches discussed above are specifically designed
to fix the missing-values issue: imputing or specifying a parametric
model and computing the probability of the response given the observed
values. However, in supervised-learning settings, the goal is rather to build a prediction function that
minimizes an expected risk. Empirical risk minimization, the workhorse of 
machine learning, can be adapted to deal with missing data. 

Recall that in missing-values settings, we do not have access to $\X$ but rather to the incomplete vector $\bXm$. Therefore, we aim at minimizing the empirical risk over the set of measurable functions from $\widetilde{\mathcal X}$ to $\mathcal Y$, that is
\begin{align}
\widehat f_n \in \underset{f:\widetilde{\mathcal X} \to \mathcal Y}{\argmin}~
\frac{1}{n}\sum_{i=1}^{n} \ell \left(f(\bXm_i), Y_i\right).
\label{eq:emp_estimate_missing_data}
\end{align}

Unfortunately,  the half-discrete nature of $\widetilde{\mathcal X} =
\bigotimes_{j=1}^{d} (\mathcal X_j \cup \{\NA\})$ makes the problem
difficult. Indeed, many learning algorithms do not work with mixed data
types, such as $\R\cup\{\NA\}$, but rather require a vector space. This is
true in particular for gradient-based algorithms. As a result, the optimization 
problem (\ref{eq:emp_estimate_missing_data}) is hard to solve with typical
learning tools.

Another point of view can be adopted for losses which leads to 
Bayes-optimal solutions \replace{such that}{defined as} 
$\regTilde (\bXm) = \E[Y | \bXm]$.
As there are at most $2^d$ admissible missing patterns, we can rewrite the Bayes estimate as
%\begin{equation}\label{eq:2d_mod}
%  \E\left[Y \middle|\bXm\right]=\sum_{\m\in\{0,1\}^d}
%\E\left[Y\middle|\X_{obs}(\m), \M=\m\right] ~ \mathds 1_{\M=\m},
%\end{equation}
%where $\X_{obs}(\m)=(X_{j_1}, \hdots, X_{j_p})$ with $j_1 < \hdots < j_p$ all the indices $j$ such that $m_{j}=0$.
\begin{equation}\label{eq:2d_mod}
  \regTilde (\bXm) =\sum_{\m\in\{0,1\}^d}
\E\left[Y\middle|\X_{obs(\bm)}, \M=\m\right] ~ \mathds 1_{\M=\m},
\end{equation}
This formulation highlights the combinatorial issues: solving (\ref{eq:emp_estimate_missing_data}) may require, as suggested by \citet[Appendix B]{rosenbaum1984reducing}, to estimate $2^d$ different submodels, that is $\E\left[Y\middle|\X_{obs(\bm)}, \M=\m\right]$ appearing in (\ref{eq:2d_mod}) for each $ \m \in \{0,1\}^d$, which grows exponentially with the number of variables. 
%\np{Except if we are able to somehow learn the parameters of $y|X,M$ on the whole data set and derive the $2^d$ submodels}
%\es{Vrai mais on dit seulement qu'on pourrait avoir besoin d'estimer $2^d$ modèles différents}

Modifying existing algorithms or creating new ones to deal with the optimization problem  (\ref{eq:emp_estimate_missing_data}) is in general a difficult task, due to the numerous possible missing data patterns. Nevertheless, we will see in  \modif{Section \ref{sec:simu}} and Appendix \ref{sec:trees}  that decision trees are particularly well suited to address this problem. 

\begin{remark}

Note that, in practice, not all patterns may be possible in the training
and test sets. For instance, if there are only complete data in the train
set, the only
submodel of interest is $\E\left[Y\middle|\X_{obs(\bm)}, \M=\m\right]$ for
$ \m = (0,\ldots,0)$, which boils down to the regular supervised-learning
scenario on a complete data set. 
However, the train and test sets are assumed to be drawn from
the same data distribution. Hence, we expect to observe similar patterns
of missingness in train and test sets. If this is not the case, we are in presence of a distributional shift, which should be tackled with
dedicated methods \citep[see, e.g.,][]{sugiyama2017dataset}.
This may happen for instance, when a study conducted on past data leads to operational recommendations, advising practitioners to focus on certain variables of interest. In that case, they will more likely measure them systematically.
\end{remark}

%\begin{remark}
%The fact of highlighting the $2^d$ patterns is closer to the formulation in the literature of the joint distribution of the data in the pattern mixture model and not in the selection model, \textit{i.e.} where instead of considering the distribution of the missing values mechanism given the data, one considers that the distribution of the data may depend on the pattern. % distribution is factorized $\tilde f_\theta'(\x|\m)\tilde g_\phi'(\m)$ in and not as  $f_\theta(\x)g_\phi(\m|\x)$. 
%\jj{Gael, c'est mieux la? I actually don't understand where this remark is going.}
%\gv{I still don't think that this remark is very useful for the paper:
%"less is more": we shouldn't dilute our strong messages.}

%\end{remark}

%Nevertheless,the test data could be complete \emph{by chance}, because a complete observation is a realization of $\M$ equal to $(0,\ldots,0)$ and a subcase of the incomplete problem. 

\section{Consistency of imputation procedures} \label{sec:Bayesrisk}

In this section, we show theoretically that, without assuming any parametric distribution for the data, \replace{single}{} imputation procedures can lead to
a Bayes-optimal predictor in the presence of missing
data on covariates (in both train and test sets), \emph{i.e.} it asymptotically targets the function
$\regTilde (\bXm) = \E[Y | \bXm]$.

In Section~\ref{sec:plugin}, we assume that we are given the true
regression function $f^{\star}$ on the fully-observed data and study the performance of applying this regression function to a test set with missing values, using several imputation
strategies: unconditional mean, conditional mean and multiple imputation.
\modif{Note that, for MCAR data, 
the function $f^{\star}$ can be estimated using the complete
observations only, i.e., by deleting observations with missing values in
the train set and applying a supervised procedure on the remaining
observations. Such a strategy is relevant for very large training sets
and MCAR missing values.}

%We first focus on the risk of a predictor, consistent for the complete data, computed on test data with missing values, using several imputation strategies: unconditional mean, conditional mean and multiple imputation. In the case of MCAR values, \textit{i.e.} where the complete data is a random subsample from the sample, the function $f^{\star}$ can be obtained for instance by ``listwise deletion'' in the train set: fitting a supervised learning procedure on data for which the samples with missing data have been removed. 

%is not appropriate as revealed in Example \ref{example1}. Indeed, the learning algorithm  cannot differentiate between predicted entries and original ones. 

%Given the imputation procedure, the  learning algorithm  estimates $\E[Y|\hat{\X}]$, where $\hat{\X}$ is the imputed vector, with missing entries being replaced by the mean. It turns out that $\E[Y|\hat{\X}]$ may differs from the regression function $\E[Y|\bXm]$, which underlies situations in which unconditional mean imputation is inaccurate, as revealed in Example \ref{example1}.
%In the Section \ref{sec:plugin}, we explain the potential caveats of this strategy.
%\cite{kapelner2015prediction} showed on simulations that learning on complete data did not leave to the more accurate  prediction even in MCAR. 
In Section~\ref{sec:true_mean_imput}, we consider the full problem of tackling missing values in the train and the
test set, which is of particular interest when the training set is of
reasonable size as it can leverage the information contained in
incomplete observations. 
We study a classical approach, described in Section
\ref{subsec:imputation_train_test}, which consists in imputing the
training set, fitting a learning algorithm on the imputed data, and predicting on a test set
which has been imputed with the same method.
Although mean imputation of variables is one of the most widely used
approaches, it is highly criticised in the classic literature for missing
data \citep{little2002statistical}. Indeed, it
leads to a distortion of the data distribution
and, consequently, statistics calculated on the imputed data table
are biased.
A simple example is the correlation coefficient between two variables, which is biased towards zero if the missing data are imputed by the mean.
However, in a supervised-learning setting, the aim is not to
compute statistics representative of the data set, but to minimize a
prediction risk by estimating a
regression function. For this purpose, we show in Section
\ref{sec:true_mean_imput} that \replace{mean}{constant} imputation may be completely
appropriate and leads to consistent estimation of the prediction function. This result is remarkable and extremely useful in practice.

\subsection{Test-time imputation}\label{sec:plugin}

\replace{}{Here we consider that we have access to the optimal (Bayes) predictor $f^{\star}$
for the complete data, \emph{i.e.} $f^{\star}(\bX) = \E[Y|\bX]$, and we show that, in MAR settings,
when missing data appears in the test set, the optimal predictor for incomplete data can be computed by using multiple imputation with $f^{\star}$.}

\subsubsection{Test-time conditional multiple imputation is consistent}\label{sec:testtimeMI}

Let us first make explicit the multiple imputation procedure for
prediction. Recall that we observe  $\x_{obs(\bm)}$.
We then draw the missing values $\X_{m}$ from the conditional distribution $\X_{m} | \bX_{\textrm{o}} = \bx_{\textrm{o}}$ and compute the regression function on
these completed observations. The resulting multiple imputation function is given by:  
\begin{align}
    \regMultImput (\bxm) = \E_{{\X_{m}|\X_{o}=\x_{o}}} [ f^{\star} (o(\x_o, \X_{m}; \bm))]. \label{def:multiple_imputation}
\end{align}

Note that this expression is
similar to the expression \eqref{eq:em_imputation} given for EM, but assuming that we know the true nonparametric distribution of the data. 

%defined for a fixed value of $\hat \theta_n$
%which is obtained on one sample. 
%Consequently it does not reflect the uncertainty due to its estimation. Proper multiple imputation can be obtained by combining the described sampling procedure with a bootstrap or a Bayesian approach to sample also for values of $\theta$ \citep{little2002statistical}.

\begin{assumption}[Regression model]
\label{reg_assumption1}
The regression model satisfies $Y = f^{\star}(\bX) + \varepsilon,$ where $\bX$ takes values in $\R^d$ and $\varepsilon$ satisfies a.s. $\E [ \varepsilon | \bX_{obs(\M)}] = 0$. 
\end{assumption}

\begin{assumption}[Missingness pattern - MAR-$Y$]
\label{miss_assumption1}
\replace{The first $0 < p < d$ variables in $\bX$ are always observed and the distribution of $M$ depends only on these observed values. }{We have $Y \indep \bM | X_{obs(\bM)}$}
%For all subset $\mathcal{S} \subset \{1, \hdots, d\}$, $(M_j)_{ j \in \mathcal{S}} \indep (X_j)_{j \in \mathcal{S}}$ conditional on $(X_k)_{ k \in \mathcal{S}^c}$.
\end{assumption}

\replace{}{The missingness pattern in Assumption~\ref{miss_assumption1} is more generic than MAR, since it is notably verified if the missingness pattern is MAR, that is if $\mathbb{P}[\bM = \bm | \bX] = \mathbb{P}[\bM = \bm | \bX_{obs(\bm)}]$ (classic definition recalled in Section~\ref{sec:missingdata}), \textit{i.e.} if the probability to observe a given pattern depends only on the observed values. }

\begin{theorem}
\label{theo:multiple_imputation}
Grant Assumption~\ref{reg_assumption1} and \ref{miss_assumption1}. Then the multiple imputation procedure, defined in  \eqref{def:multiple_imputation} is Bayes optimal, that is, for all $\bxm \in (\R \cup \{\texttt{NA} \})^d$,  
\begin{align*}
 \regMultImput (\bxm) = \regTilde (\bxm).
\end{align*}
\end{theorem}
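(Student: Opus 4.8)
The plan is to peel off the noise using the regression model, reduce $\regTilde(\bxm)$ to a conditional expectation of $f^\star(\bX)$ given the observed information, and then invoke the MAR structure of Assumption~\ref{miss_assumption1} to show that further conditioning on the mask does not change the conditional law of the missing entries.

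Concretely, I would fix $\bxm \in (\R\cup\{\NA\})^d$; it determines a pattern $\m\in\{0,1\}^d$ and a partial vector $\x_o = o(\x,\m)$, and the event $\{\bXm = \bxm\}$ equals $\{\bM = \m\}\cap\{\X_o = \x_o\}$ (with $\X_o=o(\bX,\m)$ on that event). By Assumption~\ref{reg_assumption1}, $Y = f^\star(\bX) + \varepsilon$ with $\varepsilon$ centred and independent of $(\bM,\bX)$, hence of the deterministic function $\bXm$ of $(\bX,\bM)$, so the noise drops out of $\E[Y\mid\bXm]$. Splitting $\bX$ into its observed block $\X_o$ and its missing block $\X_m$ according to $\m$,
\begin{align*}
\regTilde(\bxm) &= \E[Y \mid \bXm = \bxm] = \E\big[f^\star(\bX) \,\big|\, \bM = \m,\ \X_o = \x_o\big] \\
&= \E_{\X_m \mid \X_o = \x_o,\, \bM = \m}\big[\,f^\star(\X_m, \x_o)\,\big],
\end{align*}
the last step being a disintegration over $\X_m$. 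Since $\regMultImput(\bxm) = \E_{\X_m \mid \X_o = \x_o}[f^\star(\X_m,\x_o)]$, it then remains to show that the conditional law of $\X_m$ given $\{\X_o = \x_o\}$ coincides with that given $\{\X_o = \x_o,\ \bM = \m\}$, i.e.\ that $\bM \indep \X_m \mid \X_o$.

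This last point is exactly where Assumption~\ref{miss_assumption1} comes in. The first $p$ coordinates $\X_{1:p}$ are always observed, so they form a sub-block of $\X_o$ for every admissible pattern, whereas the missing block $\X_m$ is a sub-block of $\X_{p+1:d}$; and by hypothesis $\bM$ depends only on $\X_{1:p}$, that is $\bM \indep \X_{p+1:d} \mid \X_{1:p}$. Writing $\X_{p+1:d}$ as the pair formed by its observed part (contained in $\X_o$) and $\X_m$, the weak-union property of conditional independence yields $\bM \indep \X_m \mid \X_o$; alternatively one checks directly with conditional densities that $g(\x_m \mid \x_o,\m) = g(\x_m \mid \x_o)$, since $g(\m \mid \x_m,\x_o) = g(\m \mid \x_{1:p}) = g(\m \mid \x_o)$ and these factors cancel. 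Chaining the displayed identities with this equality of conditional laws then gives $\regTilde(\bxm) = \regMultImput(\bxm)$.

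The obstacles I anticipate are bookkeeping and rigour rather than any deep difficulty: keeping the $\m$-dependent observed/missing partition consistent through all the conditioning, and making the conditional-independence step fully clean --- both the weak-union reduction from ``$\bM$ depends only on the always-observed block'' to ``$\bM \indep \X_m \mid \X_o$'' and the well-posedness of the regular conditional distributions, so that the pointwise identity holds for \emph{every} $\bxm$ (this is presumably to be read under existence of the relevant conditional densities, and is vacuous on patterns $\m$ with $\P(\bM = \m) = 0$).
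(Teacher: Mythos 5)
Your proposal is correct and follows essentially the same route as the paper's proof: both reduce $\regTilde(\bxm)$ and $\regMultImput(\bxm)$ to conditional expectations of $f^\star(\bX)$ via the tower property (using $\varepsilon\indep(\bX,\bM)$), and both use Assumption~\ref{miss_assumption1} to drop the conditioning on $\bM$ — the paper by asserting $\E[Y\mid\bM,\X_o=\x_o]=\E[Y\mid\X_o=\x_o]$, you by the equivalent statement $\bM\indep\X_m\mid\X_o$. If anything, your weak-union derivation of that conditional independence is more explicit than the paper's one-line appeal to MAR.
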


The proof is given in Appendix \ref{apx:split}. Theorem \ref{theo:multiple_imputation} justifies the use of multiple imputation when missing pattern is MAR and when we have access to an estimate of $f^{\star}$ and of the conditional distribution $\bX_{m}|\bX_{o} = \bx_{o}$.

\subsubsection{Single mean imputation is not consistent}

Given the success of multiple imputation, it is worth checking that
single imputation is not sufficient. We show with 
two simple examples
that indeed, single imputation on the test set is not consistent even in
MAR setting.

We first show, that (unconditional) mean
imputation is not consistent, if the learning algorithm has been trained
on the complete cases only.
\begin{example}
\label{example1}
In one dimension, consider the following simple example, 
\begin{equation*}
    X_1 \sim \mathcal \mathcal U(0, 1),
    \quad
    Y = X_1^2 + \varepsilon,
    \quad
    M_1 \sim \mathcal B(1/2) \indep\ (X_1, Y) ,
\end{equation*}
with $\varepsilon$ an independent centered Gaussian noise.
Here, $\E[Y|X_1] = X_1^2$, and the regression function $\regTilde (\bXm) = \E[Y | \bXm]$ satisfies
\begin{align}
\regTilde (\bXm)
&= X_1^2\cdot\mathds 1_{M_1=0}
+ \E[Y|\Xm= \NA]\cdot\mathds 1_{M_1=1} \nonumber \\
&= X_1^2\cdot\mathds 1_{M_1=0}
+ \E[X_1^2]\cdot\mathds 1_{M_1=1} \nonumber \\
&= X_1^2\cdot\mathds 1_{M_1=0}
+ (1/3)\cdot\mathds 1_{M_1=1}. \label{eq:mcar}
\end{align}
In the oracle setting where the distribution of $(X_1, Y, M_1)$ is known, "plugging in" the mean imputation of $X_1$ yields the Plug-in Imputation function prediction 
\begin{align}
f_{PI}(\bXm) & = X_1^2\cdot\mathds 1_{M_1=0}
+ (\E[X_1])^2\cdot\mathds 1_{M_1=1} \nonumber \\
 & = X_1^2\cdot\mathds 1_{M_1=0}
+ (1/4) \cdot\mathds 1_{M_1=1}. \label{eq:mcar2}
\end{align}
\end{example}
In this example, mean imputation is not optimal: when $X_1$ is missing, the prediction obtained by mean imputation is $1/4$, whereas the optimal prediction (the one which minimizes the square loss) is $1/3$ as seen in (\ref{eq:mcar}). 

Inspecting (\ref{eq:mcar}) and (\ref{eq:mcar2}) reveals that
the poor performance of mean imputation is due to the fact that
$\E[X_1^2] \neq (\E[X_1])^2$. The non-linear relation between $Y$ and
$X_1$ breaks mean imputation. 
This highlights the fact that the imputation method should be chosen in
accordance with the learning algorithm that will be applied later on.
This is related to the concept of congeniality \citep{meng1994multiple}
defined in multiple imputation.

\medskip

\subsubsection{Conditional mean imputation is consistent if there are deterministic relations between input variables}

We now consider conditional mean imputation, using information of
other observed variables to impute. Conditional mean imputation may work
in situations where there is redundancy between variables, as highlighted
in Example \ref{example2}. However, we give a simple example below stressing that using it to impute the test may not be Bayes optimal. 

\begin{example}
\label{example2}
Consider the following regression problem with two identical input variables:
\begin{equation*}
X_1  = X_2 \sim \mathcal U([0,1]),
\quad
Y = X_1 + X_2^2 + \varepsilon, 
\quad
M_2 \sim \mathcal B(1/2) \indep (X_1, X_2, Y)
\end{equation*}
The Bayes predictor is then given by 
\begin{align*}
\regTilde (\bXm)
&= \left\{
\begin{array}{ll}
X_1 + X_2^2
& \mbox{if } \Xm_2\neq \NA \\
X_1 + \E[X_2^2|X_1, \Xm_2=\NA]
& \mbox{if } \Xm_2=\NA
\end{array}
\right. \\
&= \left\{
\begin{array}{ll}
X_1 + X_2^2
& \mbox{if } \Xm_2\neq \NA \\
X_1 + X_1^2
& \mbox{if } \Xm_2=\NA
\end{array}
\right.
\end{align*}
\replace{Imputation with the mean of $X_2$ in this function leads to}{Plugging in the previous expression the mean of $X_2$ when missing leads to the Plug-in Imputation function} 
\begin{align*}
f_{PI}(\bXm) &= \left\{
\begin{array}{ll}
X_1 + X_2^2
& \mbox{if } \Xm_2\neq \NA \\
X_1 + (1/4)
& \mbox{if } \Xm_2=\NA,
\end{array}
\right. 
\end{align*} 
whereas plugging in the Mean Imputation of $X_2$ conditional on $X_1$ leads to 
\begin{align*}
f_{PMI}(\bXm)  &= \left\{
\begin{array}{ll}
X_1 + X_2^2
& \mbox{if } \Xm_2\neq \NA \\
X_1 + X_1^2
& \mbox{if } \Xm_2=\NA
\end{array}
\right. ,
\end{align*} 
as $(\E[X_2|X_1])^2 = X_1^2$.
\end{example}

%Example \ref{example2} shows that conditional mean imputation is accurate when there is redundancy between variables. 
%Indeed, in this case, the missing variable $X_2$ is constant, conditional on the available variable $X_2$. 

If there is no deterministic link between variables, conditional mean imputation fails to recover the regression function, in the case where the regression function is not linear (see Example \ref{example2}, where $X_1 = X_2$ is replaced by $X_1 = X_2 + \varepsilon$).

\subsubsection{Pathological case: missingness is a covariate}

%Due to the imputation procedure, the learning algorithm cannot differentiate imputed entries from original ones.

Example \ref{example_mnar2} below shows a situation in which any imputation method, single or multiple, fails, because the missingness contains information about the response variable $Y$. In this univariate setting, there is no distinction between conditional and unconditional mean.

\begin{example}\label{example_mnar2}
Consider the following regression model, 
\begin{equation*}
    X_1 \sim \mathcal U(0, 1) \qquad
    M_1 \sim \mathcal B(1/2) \indep X_1 \qquad
    Y = X_1 \cdot \mathds 1_{M_1=0} + 3X_1 \cdot \mathds 1_{M_1=1} + \varepsilon.
\end{equation*}
\begin{flalign*}
\text{Here,}&&
\E[Y | X_1] = X_1\cdot\mathbb P(M_1=0) + 3X_1\cdot\mathbb P(M_1=1) = 2X_1\, . 
&&&
\end{flalign*}
Plugging in the mean imputation of $X_1$ leads to 
\begin{align*}
\replace{\regTilde(\Xm)}{f_{PI}(\Xm)} & = X_1\cdot\mathds 1_{M_1=0} + \E[X_1] \cdot\mathds 1_{M_1=1}\\
& = X_1\cdot\mathds 1_{M_1=0} + (1/2) \cdot\mathds 1_{M_1=1},
\end{align*}
whereas the regression function satisfies
\begin{align*}
\regTilde(\Xm) & = X_1\cdot\mathds 1_{M_1=0} + 3 \E[X_1|\Xm = \NA] \cdot\mathds 1_{M_1=1}\\
& = X_1\cdot\mathds 1_{M_1=0} + (3/2) \cdot\mathds 1_{M_1=1}.
\end{align*}
\end{example}

In this case, the presence of missing values is informative in itself, and having access to the complete data set (all values of $X_1$) does not provide enough information.
Such a scenario advocates for considering the missingness as an additional
input variable.
Indeed, in such situations, single and multiple imputation fail to recover
the targeted regression function, without adding a missingness indicator to the input variables. 

%In order for the imputation procedure to keep track of imputed values, we need to add the missingness as an additional variable. 

\subsection{Constant imputation at train and test time is consistent}
\label{sec:true_mean_imput}

We now show that the same single imputation used in both train and test sets leads to consistent procedures. More precisely, we allow missing data on $X_1$ only, and replace its value by some constant $\impConst \in \R$ if $X_1$ is missing. More precisely, for each observed $\tilde{\x} \in (\R \cup \{\texttt{NA}\}) \times \R^{d-1}$, the imputed entry is defined as $\x' = (x_1', x_2, \hdots, x_d)$ where 
\begin{flalign*}
\text{\textbf{(constant imputation)}}&&
x_1' = x_1 \mathds{1}_{M_1=0} + \impConst \mathds{1}_{M_1=1}.&&
\end{flalign*}
We consider the following procedure: $(i)$ impute the missing values on $X_1$ in the training set by $\impConst$ $(ii)$ use a universally consistent algorithm (see the definition in Section~\ref{sec:definitions}) on the training set to approach the regression function $\regPrime (\x') = \E [Y|\bX' = \bx']$. Theorem~\ref{thm:imp_mean_valid} shows that this procedure is consistent under the following assumptions.

%We now show that learning on the mean-imputed training data, imputing the test set with the means (of the variables on the training data), and predicting is optimal if the missing data are MAR and if the learning algorithm is consistent when trained on complete data only. 
%Theorem \ref{thm:imp_mean_valid} deals with missing data on $X_1$ only. In that scenario, for each observed $\tilde{\x} \in \R \cup \{\texttt{NA}\}$, the imputed entry is defined as $\x' = (x_1', x_2, \hdots, x_d)$ where 
%$$
%x_1' = x_1 \mathds{1}_{M_1=0} + \E[X_1] \mathds{1}_{M_1=1}.
%$$
%We need the following assumption. 

\begin{assumption}[Regression model]
\label{reg_assumption2}
Let $ Y = f^{\star}(\bX) + \varepsilon$ where $\bX$ has a continuous density $g>0$ on $[0,1]^d$,  $f^{\star}$ is continuous, and $\varepsilon$ is a centred noise independent of $(\bX, M_1)$.
\end{assumption}

\begin{assumption}[Missingness pattern - MAR]
\label{miss_assumption2}
The variables $X_2, \hdots, X_d$ are fully observed and  the missingness pattern $M_1$ on
variable $X_1$ satisfies $M_1 \indep X_1 | X_2, $ $\hdots, X_d$ and is such that the function $(x_2, \hdots, x_d) \mapsto \P[M_1 = 1|X_2=x_2, \hdots, X_d = x_d]$ is continuous. 
\end{assumption}

As for Assumption~\ref{miss_assumption1}, Assumption~\ref{miss_assumption2} states that the missingness pattern is a specific MAR process since only $X_1$ can be missing with a probability that depends only on the other variables, which are always  observed. The conditional distribution of $M$ is also assumed to be continuous to avoid technical complexities in the proof of Theorem~\ref{thm:imp_mean_valid}.

\begin{theorem}
\label{thm:imp_mean_valid}
Grant Assumption~\ref{reg_assumption2} and \ref{miss_assumption2}.
The single imputation procedure described above satisfies, for all imputed entries $\bx' \in \R^d$, 
\begin{align*}
    \regPrime (\x') & = \E[Y|X_2 = x_2, \hdots, X_d = x_d, M_1 = 1] \mathds{1}_{x_1' = \impConst} \mathds{1}_{\P[M_1 =1| X_2 = x_2, \hdots, X_d = x_d] >0} \\
    & \quad + \E[Y|\X = \x']  \mathds{1}_{x_1' = \impConst} \mathds{1}_{\P[M_1 =1| X_2 = x_2, \hdots, X_d = x_d] =0} \\
    & \quad + \E[Y|\X = \x', M_1 = 0] \mathds{1}_{x_1' \neq \impConst}.
\end{align*}
\begin{flalign*}
\text{Consequently, letting}&&
    \bXm = \left\lbrace 
    \begin{array}{cc}
       \bX'  &  \textrm{if}~X_1' \neq \impConst \\
       (\texttt{NA}, X_2, \hdots, X_d)  & \textrm{if}~X_1' = \impConst
    \end{array}
    \right., &&&&
\end{flalign*}
the single imputation procedure is equal to the Bayes function almost everywhere, that is
\begin{align}
   \regPrime (\bX') & =  \regTilde (\bXm). \label{everywhere_equality}
\end{align}
%which is the same quantity as $\E[Y|\bXm = \bxm]$, for all $\bxm \in (\R \cup \texttt{NA}) \times \R^{d-1}$ such that 

\end{theorem}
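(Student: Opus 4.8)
The plan is to compute the regression function $\regPrime(\bx')=\E[Y\mid\bX'=\bx']$ of $Y$ on the imputed vector directly from the law of $\bX'$, and then to match the outcome with $\E[Y\mid\bXm]$. I would first record three elementary consequences of the hypotheses that get used repeatedly. Since $\varepsilon$ is centred and independent of $(\bX,M_1)$ (Assumption~\ref{reg_assumption2}), $\E[Y\mid\bX,M_1]=f^{\star}(\bX)$; in particular $\E[Y\mid\bX]=\E[Y\mid\bX,M_1=0]=f^{\star}(\bX)$. Since $g$ is a finite continuous density on $[0,1]^d$, $X_1$ is atomless, so $\{X_1'=\impConst\}$ equals $\{M_1=1\}$ up to a $\P$-null set (the discrepancy being contained in $\{X_1=\impConst\}$). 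Finally, writing $p(x_2,\dots,x_d):=\P[M_1=1\mid X_2=x_2,\dots,X_d=x_d]$, Assumption~\ref{miss_assumption2} gives the conditional independence $M_1\indep X_1\mid X_2,\dots,X_d$; this is the structural fact that lets me separate the missingness from the unobserved value of $X_1$.

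The key preliminary step is to describe the distribution of $\bX'=(X_1',X_2,\dots,X_d)$, which is genuinely mixed. Using the conditional independence, conditionally on $(X_2,\dots,X_d)=(x_2,\dots,x_d)$ the coordinate $X_1'$ equals $\impConst$ with probability $p(x_2,\dots,x_d)$ and otherwise has the conditional density $g(x_1\mid x_2,\dots,x_d)$ of $X_1$. Hence the law of $\bX'$ splits as a sum of two mutually singular parts: on $\{x_1'\neq\impConst\}$ it has Lebesgue density $\bx'\mapsto(1-p(x_2,\dots,x_d))\,g(\bx')$, while on $\{x_1'=\impConst\}$ it is carried by $\delta_{\impConst}\otimes\mathrm{Law}(X_2,\dots,X_d)$ with mass $(x_2,\dots,x_d)\mapsto p(x_2,\dots,x_d)$. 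Because the two parts are singular, $\regPrime$ can be identified separately on $\{x_1'\neq\impConst\}$ and on $\{x_1'=\impConst\}$.

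I would then pin down $\regPrime$ through the defining identity $\E[Y\,h(\bX')]=\E[\regPrime(\bX')\,h(\bX')]$ for bounded measurable $h$, splitting the left side according to $\{M_1=0\}$ and $\{M_1=1\}$. On $\{M_1=0\}$ one has $\bX'=\bX$ and, by the first preliminary fact, $\E[Y\,h(\bX)\mathds 1_{M_1=0}]=\E[f^{\star}(\bX)\,h(\bX)\,(1-p(X_2,\dots,X_d))]$; matching with the Lebesgue density above forces $\regPrime(\bx')=f^{\star}(\bx')=\E[Y\mid\bX=\bx',M_1=0]$ wherever $(1-p)g>0$, i.e. wherever $p(x_2,\dots,x_d)<1$. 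On $\{M_1=1\}$ one has $\bX'=(\impConst,X_2,\dots,X_d)$, and the first and third preliminary facts give $\E[Y\,h(\impConst,X_2,\dots,X_d)\mathds 1_{M_1=1}]=\E[h(\impConst,X_2,\dots,X_d)\,p(X_2,\dots,X_d)\,\E[f^{\star}(\bX)\mid X_2,\dots,X_d]]$ with $\E[f^{\star}(\bX)\mid X_2,\dots,X_d]=\E[Y\mid X_2,\dots,X_d,M_1=1]$; matching with the atomic part forces $\regPrime(\impConst,x_2,\dots,x_d)=\E[Y\mid X_2=x_2,\dots,X_d=x_d,M_1=1]$ on $\{p>0\}$. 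These are exactly the first and third lines of the claimed formula. On the remaining set $\{x_1'=\impConst,\ p(x_2,\dots,x_d)=0\}$ the law of $\bX'$ carries no mass, so $\regPrime$ is determined only up to that null set; the theorem records the natural representative $\E[Y\mid\bX=\bx']$ there (the value read off from the surviving density $g(\impConst,x_2,\dots,x_d)>0$), and the continuity of $p$ assumed in Assumption~\ref{miss_assumption2} is what keeps this exceptional set from causing trouble.

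Finally I would deduce \eqref{everywhere_equality}. On $\{M_1=0\}$, which equals $\{X_1'\neq\impConst\}$ up to a null set and on which $p(X_2,\dots,X_d)<1$ a.s., we have $\bXm=\bX=\bX'$ and $\regPrime(\bX')=f^{\star}(\bX)=\E[Y\mid\bX]=\regTilde(\bXm)$. On $\{M_1=1\}$, which equals $\{X_1'=\impConst\}$ up to a null set and on which $p(X_2,\dots,X_d)>0$ a.s. (since $\P[M_1=1,\,p(X_2,\dots,X_d)=0]=\E[\mathds 1_{p=0}\,p]=0$), we have $\bXm=(\NA,X_2,\dots,X_d)$ and $\regPrime(\bX')=\E[Y\mid X_2,\dots,X_d,M_1=1]=\regTilde(\bXm)$; combining the two events yields \eqref{everywhere_equality}. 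Equivalently and more quickly, $\bX'$ and $\bXm$ are almost surely measurable functions of one another ($\bX'$ reveals $M_1$ through the event $\{X_1'=\impConst\}$, hence $\bXm$, and conversely), so $\E[Y\mid\bX']=\E[Y\mid\bXm]$ at once. The main obstacle is the middle step: one must handle the mixed continuous/atomic nature of $\bX'$ with the right reference measure, argue that its continuous and atomic parts are singular and may be treated independently, and track the sets $\{p=1\}$ and $\{p=0\}$ on which the relevant conditional expectations are defined only up to null sets, which is precisely why the statement carries the $\mathds 1_{\P[M_1=1\mid X_2=x_2,\dots,X_d=x_d]=0}$ branch.
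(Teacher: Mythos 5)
Your proof is correct, but it follows a genuinely different route from the paper's. The paper fixes a point $\bx$ and computes $\regPrime(\bx)=\lim_{\rho\to 0}\E[Y\mid \bX'\in B(\bx,\rho)]$ by a shrinking-ball argument, splitting into the three cases of the displayed formula; the heart of that proof is the third case, where it shows the $\{M_1=0\}$ contributions to numerator and denominator are $O(\rho^{d})$ while the $\{M_1=1\}$ contributions are of exact order $\rho^{d-1}$ (bounded below using $\inf g>0$ and $\P[M_1=1\mid X_{2:d}]>0$), so the atomic part dominates in the limit. You instead identify the law of $\bX'$ as the sum of an absolutely continuous part with density $(1-p)g$ on $\{x_1'\neq\impConst\}$ and a singular part carried by $\{x_1'=\impConst\}$ with mass $p$ against the law of $(X_2,\dots,X_d)$, and read off $\E[Y\mid\bX']$ by matching $\E[Y\,h(\bX')]$ against each mutually singular piece; your closing observation that $\bX'$ and $\bXm$ are a.s.\ measurable functions of one another gives \eqref{everywhere_equality} essentially for free. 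What each approach buys: yours is cleaner and needs almost none of the regularity in Assumption~\ref{reg_assumption2} (only atomlessness of $X_1$ and the conditional independence $M_1\indep X_1\mid X_{2:d}$), but it only pins down $\regPrime$ up to null sets of the law of $\bX'$ — in particular on $\{x_1'=\impConst,\ p=0\}$ and $\{x_1'\neq\impConst,\ p=1\}$ the displayed formula is, for you, a choice of representative rather than a derived value — whereas the paper's continuity-and-limits argument produces a pointwise-defined version at every $\bx'$, which is what the ``for all imputed entries'' phrasing of the first display is after. Since the final claim \eqref{everywhere_equality} is only asserted almost everywhere, your argument fully establishes the substantive conclusion; you are right, and commendably explicit, that the exceptional branches of the formula are exactly where the two notions diverge.
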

The proof is given in Appendix \ref{apx:split}.
Theorem \ref{thm:imp_mean_valid}  confirms that it is preferable to
use
the same imputation for the training and test sets. Indeed, the
learning algorithm can learn the imputed value and use
that information to detect that the entry was initially missing. If the imputed value changes from train set to test set (for example, if instead of imputing the test set with the mean of the variables of the train set, one imputes by the mean of the variables on the test set), the learning algorithm may fail, since the imputed data distribution would differ between train and test sets.

\paragraph{Multivariate missingness.} 
Interestingly, Theorem~\ref{thm:imp_mean_valid} remains valid when missing values occur for variables $X_1, \hdots,$ $ X_j$ under the assumption that $(M_1, \hdots, M_j) \indep (X_1,$ $ \hdots, X_j)$ conditional on $(X_{j+1},$ $ \hdots, X_d)$ and if for every pattern $\m \in \{0,1\}^j \times \{0\}^{d-j}$, the functions $(x_{j+1}, \hdots, x_d) \mapsto \P[\M = \m|X_{j+1} = x_{j+1},  \hdots, X_d = x_d]$ are continuous.\\

Note that the precise imputed value $\impConst$ does not matter if the learning algorithm is universally consistent. By default, the mean is not a bad choice, as it preserves the first order statistic (mean) of the sample. \replace{}{However, our analysis does not stress out any particular role of mean imputation. In fact, any imputation in the support of $X_1$ is equivalent and leads to a consistent procedure.} The comment below stresses out the benefit of choosing $\impConst$ outside of the support of $X_1$.

\paragraph{Almost everywhere consistency.}
The equality between the constant imputation learner $\regPrime$ and the Bayes function $ \regTilde$ holds almost surely but not for every $\tilde{\bx}$. Indeed, under the setting of Theorem \ref{thm:imp_mean_valid}, let $\tilde{\bx} = (\impConst, x_2, \hdots, x_d)$, for any $x_2, \hdots, x_d \in [0,1]$ such that 
$\P[M_1 = 1|X_2 = x_2, \hdots, X_d = x_d]  >0.$ In this case, $\bx' = (\impConst, x_2, \hdots, x_d)$ and 
\begin{align*}
\regPrime (\bx') & = \E[Y|X_2 = x_2, \hdots, X_d = x_d, M_1 = 1],
\end{align*}
which is different, in general, from
\begin{align*}
\regTilde (\bxm) = \E[Y | X_1 = \impConst, X_2 = x_2, \hdots, X_d = x_d].
\end{align*}
Therefore, on the event $A_1 = \{ \tilde{\bX}, \tilde{X}_1 = \impConst\}$, the two functions $\regPrime$ and $\regTilde$ differ, and thus the equality between these functions does not hold pointwise. However, since $A_1$ is a zero probability event, the equality  
$\regPrime(\bX') =  \regTilde (\bXm)$
hold almost everywhere, as stated in Theorem~\ref{thm:imp_mean_valid}. A
simple way to obtain the pointwise equality in equation
\eqref{everywhere_equality} is to impute missing entries by values that are out of the range of the true distribution, which echoes the "separate class" method advocated by \citet{ding2010investigation}.

\paragraph{Discrete/categorical variables.} According to Assumption~\ref{reg_assumption2}, the variables $X_1, \hdots , X_d$ admit a density. Therefore, the proof of  Theorem~\ref{thm:imp_mean_valid} is not valid for discrete variables. However, Theorem~\ref{thm:imp_mean_valid} can be extended to handle discrete variables, if missing entries in each variable are imputed by an extra category ``missing'' for each variable. In this framework, consistency boils down to estimating the expectation of $Y$ given a category which directly results from the universal consistency of the selected algorithm. 

\paragraph{Classification.} Theorem~\ref{thm:imp_mean_valid} is established for a regression problem. However, in a binary classification setting where $Y \in \{0,1\}$, the plug-in classifier $\hat{g}_n(\bx) = \mathds{1}_{\hat{\eta}_n(\bx) \geq 1/2}$ is universally consistent if the estimate $\hat{\eta}_n$ (an estimator of $\E[Y|\bX]$) is universally consistent, according to Corollary~6.2 in \citet{devroye2013probabilistic}.  Theorem~\ref{thm:imp_mean_valid} can be applied to consistently estimate $\E[Y|\bX=\bx]$ via $\hat{\eta}_n(\bx)$, which leads to a universally consistent plug-in classifier $\hat{g}_n$. Consequently, Theorem~\ref{thm:imp_mean_valid} is also valid for classification frameworks. 

\paragraph{Universal consistency.} In Theorem~\ref{thm:imp_mean_valid},
we assume to be given a universally consistent algorithm which may appear
as a strong restriction on the choice of the algorithm. However many
estimators exhibit this property as, for example, local averaging
estimate \citep[kernel methods, nearest neighbors and decision trees,
see][]{devroye2013probabilistic}. The key point of
Theorem~\ref{thm:imp_mean_valid} is to state that the universal consistency
of a procedure with missing values results directly from the universal
consistency of an algorithm applied to an imputed data set, therefore
providing guarantees that consistent algorithm and imputation are useful
tools to handle missing values.

\modif{
\paragraph{Optimizing the imputed values.}
\cite{bertsimas2018predictive}\footnote{Their work were submitted after our contribution.} propose a general framework to optimize the imputation values, and instantiate this framework for different learning algorithm. Extended simulations show that their proposal is competitive in terms of imputation quality (measured via MAE/RMSE) or predictive performances. }

%Since the first submission of the present manuscript (2018), the field of missing values witnessed an unprecedented growth, in particular regarding supervised learning studies with missing entries. While our contributions remain original, we mention here some related works that explore some specific avenues in this domain. 
%Several works aims at studying the quality of imputation either on its own or when used in conjunction with learning algorithm for prediction purpose.

\paragraph{Consistency for some specific distributions.} In
Theorem~\ref{thm:imp_mean_valid}, we assume to be given a universally
consistent algorithm. One can legitimately ask if the result still holds
if an algorithm which is consistent only for some specific data
distribution was used instead. For example, assume that data are
generated via a linear model and a linear estimator is used after missing
values have been imputed. One can show that the $2^d$ submodels are not
linear in general and consequently, using a single linear model on
imputed data does not yield the Bayes loss \citep{morvan2020linear}. In a
nutshell, Theorem~\ref{thm:imp_mean_valid} is not valid for procedures
that are consistent for some specific data distributions only. The
interested reader can refer to \cite{morvan2020linear} for further
details on missing values in linear generative models.

\begin{comment}

\subsection{Consistency of empirical risk minimization}

Let $\mathscr F$ be a class of functions $f:\widetilde{\mathcal X}\to\mathcal Y$. We assume that $\mathcal Y$ is \emph{bounded}. Given the loss function $\ell$, let the population risk, the empirical risk and the empirical risk minimizer respectively be
\begin{align*}
\left\{
\begin{array}{rl}
\P(f) &\coloneqq \E\left[\ell \left(f(\bXm), Y\right)\right], \\
\widehat \P_n(f) &\coloneqq \frac{1}{n} \sum_{i=1}^{n} \ell \left(f(\bXm_i), Y_i\right), \\
f_n^\star &\in \underset{f\in\mathscr F}{\argmin}~\widehat\P_n(f).
\end{array}
\right.
\end{align*}

In order to show that $f_n^\star$ minimizes the population risk asymptotically, \emph{i.e.}
$$\P(f_n^\star) \xrightarrow[n\to\infty]{} \underset{f\in\mathscr F}{\min}~\P(f),$$
it is sufficient to show that
$$\underset{f\in\mathscr F}{\sup}~\left|\widehat \P_n(f)-\P(f)\right|\to 0$$
in probability. The classical results (bounded difference inequality, fundamental inequality on Rademacher averages) give that for any $\delta\in(0,1)$,

$$\underset{f\in\mathscr F}{\sup}~\left|\widehat \P_n(f)-\P(f)\right|\leq 
2\mathcal R_n(\mathscr F(X_n^1)) +
3\sqrt{\frac{\log(2/\delta)}{2n}} \textrm{, where}$$

$$\mathcal R_n(\mathscr F(X_n^1)) = \E\left(\underset{f\in\mathscr F}{\sup}~\frac{1}{n} \left|\sum_{i=1}^{n} \sigma_i f(X_i)\right| \middle| X_1, \ldots, X_n\right),$$

$(\sigma_1, \ldots, \sigma_n)$ being Rademacher variables independent of $(X_1, \ldots, X_n)$.

\end{comment}

\section{Simulations}\label{sec:simu}

In Section~\ref{sec:supervisedtraintest}, we provided theoretical guarantees justifying the common practice of imputing by a constant, in a specific MAR setting and in an asymptotic sample regime. More precisely, Theorem~\ref{thm:imp_mean_valid} gives us a positive response in principle regarding the usage of constant imputation for prediction purpose but does not establish the predictive performances of such methods in a finite-sample regime. To go beyond the framework of Theorem~\ref{thm:imp_mean_valid}, we now evaluate the predictive performances of learning algorithms trained on imputed data sets (following the imputation prior-to-learning strategy of Theorem~\ref{thm:imp_mean_valid}) and, to give a broader perspective, we compare it to that of procedures  intrinsically able to handle missing values.
In general such procedures are hard to design, but decision trees offer natural approaches for empirical risk minimization with missing values \citep{saar2007handling,Twala:2008:GMC:1352941.1353228}. This is due to their ability to handle the half-discrete nature of $\tilde \X$, as they rely on greedy decisions rather than smooth optimization.

We first present the different approaches available to handle
missing values in tree-based methods in Section~\ref{subsec:presenting_decision_trees}. We then compare numerically the two different strategies: imputation prior to learning and learning directly with missing data.

%In such a setting, tree based methods with     
%MIA are expected to be especially appropriate.
%Indeed, MIA uses missing data to compute the best split whereas surrogate splits for instance do not. Therefore, surrogate splits cannot recover information related to the missing pattern $M$. 

%Since, Theorem \ref{thm:imp_mean_valid} is valid when a learner has an infinite capacity of learning, we consider applying a simple learner such as a linear regression and study its behavior after imputation prior to learning with mean and imputation assuming a joint Gaussian distribution. 

\subsection{Dealing with missing values using decision trees}
\label{subsec:presenting_decision_trees}

There exist several approaches to build decision trees based on data sets containing missing data. These approaches can be divided into two parts: $(i)$ those for which the splitting mechanism omits missing data (namely Probabilistic splits, Block propagation and surrogate splits) and $(ii)$ those for which missing data are taken into account to build the splits (Missing incorporated in attributes, MIA). These approaches are detailed below. 

For the first set of approaches, in each cell, a splitting criterion  (see equation~\ref{split_criterion_cart} in Appendix~\ref{sec:trees}) is computed for each variable $j \in \{1, \hdots, d\}$, based on observations that have non-missing entries $X_j$. The best split $(j^{\star}, z^{\star})$ (corresponding to a split at position $z^{\star}$ along variable $j^{\star}$), is then chosen as the one optimizing this criterion (see Algorithm~\ref{alg-split-missdata} in Appendix~\ref{sec:availablecase} for details). Since such an approach omits the missing data for building the best split, one need to specify a method to send partially-observed data down the tree. The following approaches propose different strategies.

\paragraph{Surrogate splits}
%In high dimension, it is likely that two different splits lead to the same data partition.
Surrogate splits search for a split on another variable that induces a
data partition close to the original one. More precisely, for a selected
split $(j^{\star}_0, z^{\star}_0)$, to  send
down the tree observations with no $j^{\star}_0$th variable, a new stump,
\textit{i.e.}, a tree with one cut, is fitted to the response
$\mathds{1}_{X_{j^{\star}_0}\leq z^{\star}_0}$, using variables
$(X_j)_{j\neq j^{\star}_0}$. The split $(j^{\star}_1, z^{\star}_1)$ which
minimizes the misclassification error is selected, and observations are
split accordingly. Those that lack both variables $j^{\star}_0$ and
$j^{\star}_1$ are split with the second best, $j^{\star}_2$, and so on
until the proposed split has a worse misclassification error than the
blind rule of sending all remaining missing values to the same daughter,
the most populated one. To predict, the training surrogates are kept (see Algorithm~\ref{alg-split-surrogate} in Appendix~\ref{sec:availablecase} for details).
This construction is  the default method in {\tt rpart} \citep{therneau1997introduction}.
Surrogate method is expected to be appropriate when there are
relationships between covariates.

\paragraph{Probabilistic splits}
Another option is to propagate missing observations according to a Bernoulli distribution $\mathcal B(\frac{\#L}{\#L+\#R})$, where $\#L$ (resp. $\#R$) is the number of points already on the left (resp. right) (see Algorithm~\ref{alg-probabilistic-split} in Appendix~\ref{subsec:app:algo}). This is the default method in C4.5 \citep{quinlan2014c4}.

\paragraph{Block propagation} A third option is to choose the split on the
observed values, and then send all incomplete observations as a block, to
a side chosen by minimizing the error (see
Algorithm~\ref{alg-block-propagation} in Appendix~\ref{subsec:app:algo}).
This is the method used in LightGBM \citep{ke2017lightgbm}.\\

\paragraph{Missing incorporated in attribute (MIA, \citealt{Twala:2008:GMC:1352941.1353228})}
A second class of methods uses missing values to compute the
splitting criterion itself. Consequently, the splitting location depends
on the missing values, contrary to all methods presented above. Its most common instance is  ``missing incorporated in attribute" (MIA), which considers the following splits, for all splits $(j,z)$: 
\begin{itemize}
\item $\{\Xm_j\leq z ~\textrm{or}~  \Xm_j=\NA\}$ vs $ \{\Xm_j>z\}$,
    
\item $\{\Xm_j\leq z\}$ vs $\{\Xm_j>z ~\textrm{or}~ \Xm_j=\NA\}$,
    
\item  $\{\Xm_j\neq\NA\}$ vs $\{\Xm_j=\NA\}$.
\end{itemize}
In a nutshell, MIA tries to send all missing
values to the left or to the right for each possible split, or to separate observed values from missing ones. For each option, the prediction error is computed and the selected option is the one minimizing the prediction error  (see Algorithm~\ref{alg-MIA}  in Appendix~\ref{subsec:app:algo} for details). 

Missing values are treated as a category by MIA, which is thus nothing but a greedy algorithm
  %it denies all form of regularity in the missing values: 
% they are simply distinct from real numbers which is appropriate to handle the space $\R\cup{\NA}$. It is a greedy algorithm 
minimizing the square loss between $Y$ and a function of $\bXm$ and consequently targets
the quantity \eqref{eq:2d_mod} which separate $\E[Y |\bXm]$  into $2^d$ terms. However, it is not exhaustive: at each step,
the tree can cut for each variable according to missing or non missing and selects this cut when it is relevant, \textit{i.e.} when it minimizes the prediction error. The final leaves can correspond to a cluster of missing values patterns (observations with missing values on the two first variables for instance and any missing patterns for the other variables).

MIA is thought to be a good method to apply when missing pattern is
informative, as this procedure allows to cut with respect to missing/non
missing and uses missing data to compute the best splits. Note this
latter property implies that the MIA approach does not require a
different method to propagate missing data down the tree. Notably, MIA is
implemented in the R packages \texttt{partykit} \citep{partykit} and
\texttt{grf}  \citep{grf}, as well as in XGBoost \citep{chen2016xgboost} and 
for the {\tt HistGradientBoosting} models in scikit-learn \citep{scikit-learn}.

In Appendix~\ref{sec:comparbre}, we conduct a theoretical analysis on a very simple regression models to highlight differences between the above strategies. In particular, we compare in Proposition~\ref{thm:comp_tree_theo} the risk of the different splitting strategies (probabilistic split, block propagation, surrogate split, and MIA) and prove that MIA and surrogate splits are the two best strategies, one of which may be better than the other depending on the dependence structure of  covariates.
Note that block propagation can be seen as a greedy way of successively optimizing the choices in the two first options in MIA. However, as we show in 
Proposition~\ref{prop_criterion}, these successive choices are sub-optimal.

\subsection{Simulation settings}\label{sec:setting}
We consider three regression models with  covariates $(X_1, \hdots, X_d)$   distributed as $\mathcal{N}(\mu, \Sigma)$ with $\mu = \mathbf{1}_d$ and $\Sigma = \rho \mathbf{1}_d \mathbf{1}_d^T + (1 - \rho) I_d$,  where $\mathbf{1}_d$ is the $d$-dimensional vector composed of ones and $I_d$ the $d \times d$ identity matrix. \modif{By default, $\rho$ is set to $0.5$ in our experiments.}  
The first model is quadratic, the second one is linear, and the third one has been used as a benchmark for
tree methods by several authors, including
\citet{friedman1991multivariate} and \citet{breiman1996bagging}.
We also consider a last regression model where the relationship between
covariates  are nonlinear. In all four models, $\varepsilon$ is a
centered Gaussian noise with standard deviation 0.1.

\begin{model}[Quadratic]
\label{model1} $ Y = X_1^2 + X_2^2 + X_3^2 + \varepsilon$

\end{model} 
%\begin{align*}
%\mu = \mathbf{1}_d, \quad \textrm{and} ~\Sigma = \rho \mathbf{1} \mathbf{1}^T + (1 - \rho) I_d.
%\end{align*}

\begin{model}[Linear]
\label{model2} $ Y = X\beta + \varepsilon$
with 
$\beta=(1, 2, -1, 3, -0.5, -1, 0.3, 1.7, 0.4, -0.3).$
\end{model}

\begin{model}[Friedman]
\label{model3}
 $Y = 10\sin(\pi X_1X_2) + 20(X_3-0.5)^2 + 10X_4 + 5X_5 + \varepsilon$
\end{model}

\begin{model}[Friedman, Nonlinear]
\label{model4}
$ Y = \sin(\pi X_1X_2) + 2(X_3-0.5)^2 + X_4 + .5X_5 + \varepsilon$
where $X$ is a hidden uniform variable on $[-3, 0]$ and the 
covariates $(X_1, \hdots, X_d)$  are distributed as 
\begin{align*}
\left\lbrace
\begin{array}{ll}
    X_1 &= X^2 +\varepsilon_1\\
  X_2 &= \sin(X) +\varepsilon_2\\
  X_3 &= \tanh(X)\exp(X)\sin(X)+\varepsilon_3\\
 X_4 &= \sin(X-1)+\cos(X-3)^3+\varepsilon_4\\
  X_5 &= (1-X)^3+\varepsilon_5
\end{array}
\right.
& \quad 
\left\lbrace
\begin{array}{ll}
  X_6 &= \sqrt{\sin(X^2)+2}+\varepsilon_6\\
   X_7 &= X-3+\varepsilon_7\\
  X_8 &= (1-X)\sin(X)\cosh(X)+\varepsilon_8\\
  X_9 &= \frac{1}{\sin(2X)-2}+\varepsilon_9\\
  X_{10} &= X^4+\varepsilon_{10}
\end{array}
\right. , 
\end{align*}
where $\varepsilon_i$ are independent centered Gaussian with standard deviation 0.05. 
\end{model}

In our experiments, we generate missing data as follows.

\begin{missing_pattern}[MCAR]
For $p \in [0,1]$,  the missingness on variable $j$ is generated according to a Bernoulli distribution
\begin{equation*}
\forall i\in\llbracket 1,n\rrbracket, 
M_{i,j} \sim \mathcal B(p).
\end{equation*}
\end{missing_pattern}
%\begin{missing_pattern}[Censoring MAR]
%A direct way to select a proportion $p$ of missing values on a variable, that depends on the values of another variable is to crop them above the $1-p$-th quantile
%\begin{equation*}
%\forall i\in\llbracket 1,n\rrbracket,  
%M_{i,1} = \mathds 1_{X_{i,2}>[X_2]_{(1-p)n}}.
%\end{equation*}
%\end{missing_pattern}

\begin{missing_pattern}[Censoring MNAR]
%In the same way, missing not at random can be generated with
A direct way to select a proportion $p$ of missing values on a variable
$X_j$, that depends on the underlying value, is to crop them above the $1-p$-th quantile
\begin{equation*}
\forall i\in\llbracket 1,n\rrbracket, 
M_{i,j} = \mathds 1_{X_{i,j}>[X_j]_{(1-p)n}}.
\end{equation*}
\end{missing_pattern}

\begin{missing_pattern}[Predictive Missingness]
Last, we consider a pattern mixture model, letting $M_j$ be part of
the regression function, with $M_j \indep\X$,
\begin{equation*}
\forall i\in\llbracket 1,n\rrbracket, M_{i,j} \sim \mathcal B(p),
\quad\text{and}\quad
    Y = \sum_{j=1}^{3} \bigl(X_j^2 + 2\,M_j \bigr) + \varepsilon.
\end{equation*}
\end{missing_pattern}

%For decision trees, in addition to the previous imputation methods, we add the different tree building methods: surrogate splits, probabilistic splits, block propagation, missing incorporated in attribute (MIA), and conditional inference trees. 

%Section \ref{sec:setup} is devoted to the data set descriptions together with the different missing value mechanisms used to generate missing values. 

%In Section \ref{sec:simu_finding_the_best_tree}, we empirically compare the different tree construction introduced in Section \ref{sec:trees}. We were able to compute theoretically the performance of several tree construction in Section \ref{sec:trees} for one single split. Section \ref{sec:simu_finding_the_best_tree} focuses on the same methods with several splits. We want to identify the best tree methodology in terms of predictive performance. We are particularly interested in the comparison between MIA, CART with surrogate splits and Ctree. 
%
%Section \ref{sec:simu_missing_indicator} studies the influence of adding the missing data indicator on the performance of the learning algorithm. 
%
%Section \ref{sec:simu_finite_sample} compares all procedures we mentioned in the finite sample framework. 
%
%Section \ref{sec:simu_consistency} show which algorithms are consistent, depending on the percentage of missing values. 

%Two experiments are carried out. First, the models are compared on several real data sets, where missing values have been added according to mechanisms detailed in \ref{sec:setup}.
%

%\subsubsection{Methods}

\modif{MCAR is the most simple instance of missing patterns. In particular, it falls in the framework of Theorem~\ref{thm:imp_mean_valid}, for which constant imputation leads to consistent predictive methods. MNAR is a more complex pattern where the missingness indicator depends on the true value of the variable. For such a pattern, adding the mask can help the prediction, but using the dependencies between the missing covariate and the existing ones can help to build an accurate prediction. The third missing pattern is, in some sense, pathological (see Example~\ref{example_mnar2}) as the output depends directly on the missingness indicator. In such extreme settings, good performances can only be obtained if the mask is added as input variable.}

We compare the following methods using implementation in R \citep{softwR} and default values for the tuning parameters. \modif{We compare tree-based methods (decision trees, random forests, gradient boosting) able to handle directly missing values, with more classic machine learning methods like Support Vector Machines \citep[SVM, see][]{cortes1995support} or K nearest neighbours \citep[KNN, see][]{cover1967nearest}. }
%, a second with random forests, and a third with gradient boosting. Indeed, while we have mostly covered single decision trees in this paper, their aggregation into boosting or random forests is much more powerful in practice. 
\modif{Unless stated otherwise, we use the following packages: rpart \citep{rpartpack} for decision trees, ranger \citep{rangerpack} for random forests,  XGBoost \citep{chen2016xgboost} for gradient boosting, e1071 for SVM and caret for KNN}. Note that we have used surrogate splits only with decision trees. \modif{More precisely, we compare for decision trees the following strategies:}
\begin{itemize}[leftmargin=3ex, itemsep=.2ex, parsep=.1ex, topsep=.1ex]

\item \textbf{MIA}: missing in attributes (see Remark~\ref{rem:mia} in Appendix~\ref{seq:MIA} for implementation details)
\item \textbf{rpart+mask}/ \textbf{rpart}: CART with surrogate splits, with or without  the indicator $\M$ in the covariates
\item \textbf{ctree+mask}/ \textbf{ctree}: conditional trees, implemented in package partykit \citep{partykit} with or without the indicator $\M$ in the covariates
\item \textbf{impute mean+mask}/ \textbf{impute mean}: missing values are imputed by unconditional mean with or without the indicator $\M$ added in the covariates 
\item \textbf{impute OOR+mask} / \textbf{impute OOR}: missing
values are imputed by a constant value, chosen out of range (OOR) from
the values of the corresponding covariate in the training set, with or without the indicator $\M$ added in the covariates
\item \textbf{impute Gaussian+mask} /\textbf{impute Gaussian}: missing values are imputed by conditional expectation when data are assumed to follow a Gaussian multivariate distribution. More precisely, the parameters of the Gaussian distribution are estimated with an EM algorithm (R package norm \citep{fox2013package}). Note that for numerical reasons, we shrink the estimated covariance matrix (replacing $\hat \Sigma$ by $0.99\times\hat \Sigma + 0.01\times\rm{tr}(\hat \Sigma)I_d$) before imputing.  The method can also be applied with the indicator $\M$ added in the imputed values.
\end{itemize}
% that the minimal number of observations at any terminal node is greater than 20 and the maximum depth is 30. 
%For conditional tree, we tune it in such a way that the comparison with other tree methods is as fair as possible 
%\es{Par défault, dans rpart maxsurrogate = 5, il faudrait vérifier que c'est pareil dans ctree}. 

\modif{For KNN and SVM, we compare the last three methods, namely mean imputation, Out-Of-Range (OOR) imputation, Gaussian imputation (as other methods do not apply), with and without adding the mask, whereas for random forests and gradient boosting methods, we also use MIA.}

To evaluate the performance of the methods, we repeatedly draw a
training set and a testing set of the same size
1000 times. \modif{We choose to display the percentage of explained
variance, \emph{i.e.} the $R^2$ statistic computed on the test set, defined as 
\begin{align}
    R^2 = 1 - \frac{\frac{1}{n_{\textrm{test}}} \sum_{i=1}^{n_{\textrm{test}}} (\hat{y}_i - y_i)^2}{\mathds{V}[Y]},
\end{align}

The numerator corresponds to the Mean Square Error (MSE) computed on the test set, and the denominator is the true variance of $Y$, available since we know the generative model. Using the $R^2$ metric allows us to quickly assess the quality of a regression model: values close to one indicate a very good predictive model. Note that, depending on the regression model and the missing mechanism at hand, some predictive tasks are easier than other, thus explaining the differences in the average predictive performance of all methods. }
%When the target variable $Y$ is categorical, the metric used is the accuracy.
The code for these experiments is available
online\footnote{\url{https://github.com/jacobmchen/supervised_missing/}}. 
%\replace{}{The experimental protocol below has been chosen to produce results representative of other combinations of configuration settings.}

\paragraph{Experiment 1}
In the first experiment, we use Model \ref{model1} with $d=9$ and
introduce missing values on $X_1$, $X_2$, and $X_3$  according to the mechanisms described hereafter. Results are depicted in Figure \ref{fig:simu_mcar_nonlinear_multivar_miss2_rho5}.

\paragraph{Experiment 2}
In the second experiment, the other three models are used with $d=10$, with a MCAR mechanism on all variables. Results are shown in Figure \ref{fig:simu_mcar_nonlinear_multivar2}.

\subsection{Results comparing strategies for fixed sample sizes}
\label{sec:simu_finding_the_best_tree}

%We first apply the different methods presented in this paper to the toy examples of Section \ref{sec:Bayesrisk} and \ref{sec:trees}. For each model, we generate $n=1000$ observations and repeat the simulations 100 times.

%\subsubsection*{Example 1}

%We start by  Example \ref{example1}
%\begin{align*}
%\left\{
%\begin{array}{rl}
 %   Y &= X_1^2 + \varepsilon \\
 %   M_1 &\sim \mathcal B(1/2) \indep\ (X_1, Y) ,
%\end{array}
%\right.
%\end{align*}
%where $\varepsilon \sim  \mathcal{N}(0, 0.04)$ is an independent noise.

\begin{figure}[h!]
    \centering
    \includegraphics[scale=0.45]{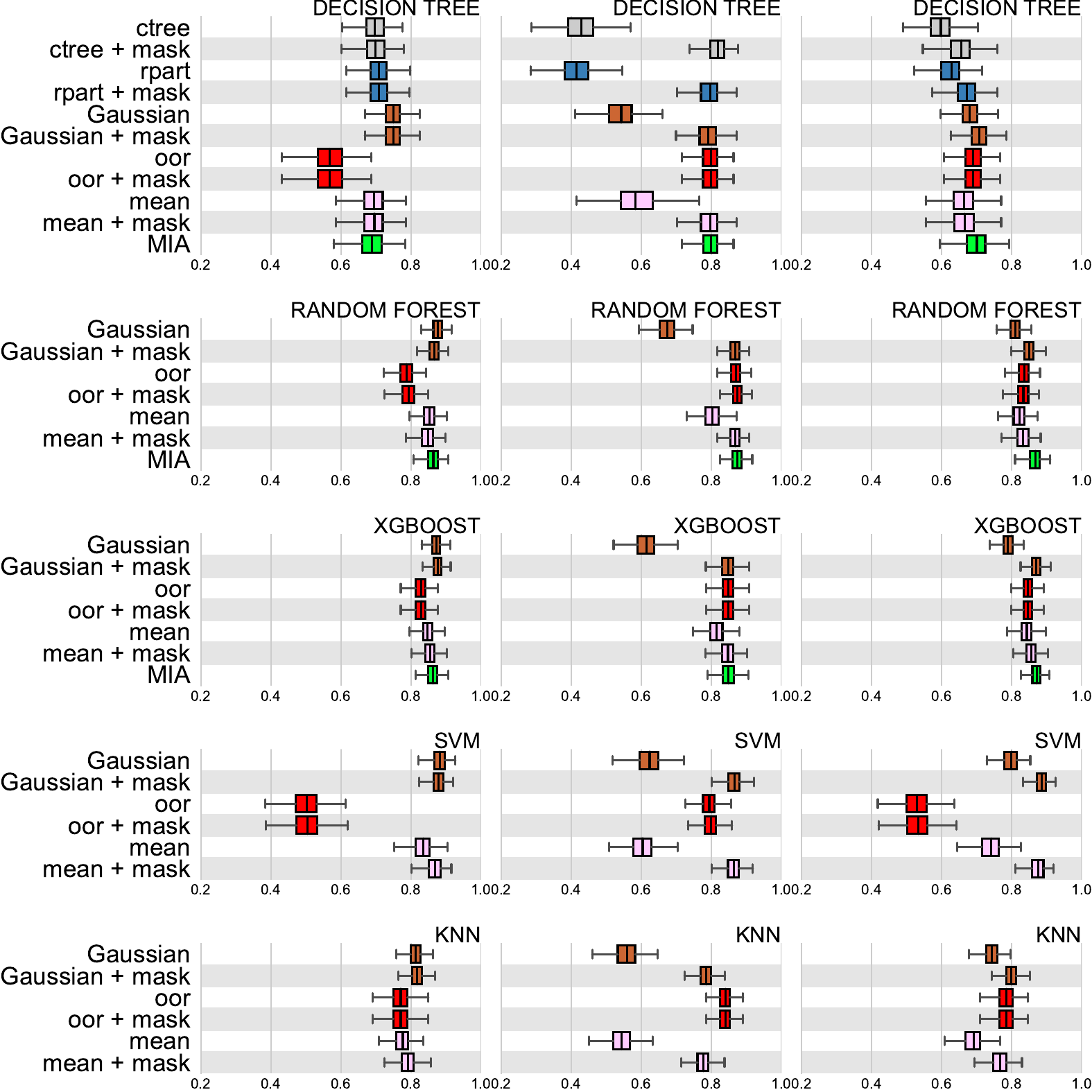}
    \llap{\raisebox{1.00\linewidth}{%
    {\sffamily%
    \hspace{.4\textwidth}MCAR
    \hspace{.2\textwidth}MNAR
    \hspace{.2\textwidth} Predictive M
    \hspace*{.1\textwidth}}}}

    \caption{\textbf{$R^2$ scores on model 1 $\bullet$} \modif{Normalized}
explained variance for the three missing data mechanisms (MCAR, Censoring MNAR, Predictive Missingness) introduced above, with 20\% of missing values,
$n=1000$, $d=9$ and $\rho = 0.5$.}
    \label{fig:simu_mcar_nonlinear_multivar_miss2_rho5}
\end{figure}

%\begin{figure}[p]
%    \centering
%    \includegraphics[clip, height=.72\textwidth]{}% 
%    \hfill%
%    \includegraphics[trim={4.69cm 0 0 0}, clip, height=.72\textwidth]{}% 
%    \hfill%
%    \includegraphics[trim={4.69cm 0 0 0}, clip, height=.72\textwidth]{}%
%    \llap{\raisebox{.72\linewidth}{%
%    {\sffamily%
%    \hspace{.4\textwidth}MCAR
%    \hspace{.2\textwidth}MNAR
%    \hspace{.2\textwidth} Predictive M
%    \hspace*{.1\textwidth}}}}
%
 %   \caption{\textbf{Relative scores on model 1 $\bullet$} Relative
%explained variance for different mechanisms with 20\% of missing values,
%$n=1000$, $d=9$ and $\rho = 0.5$.}
 %   \label{fig:simu_mcar_nonlinear_multivar}
%\end{figure}

Figure \ref{fig:simu_mcar_nonlinear_multivar_miss2_rho5} presents the results for
one choice of correlation ($\rho = 0.5$) between covariates and percentage of missing
entries ($20\%$), as others give similar interpretation \modif{(see  Appendix~\ref{sec:appendix_additional_experiments} for different values of the correlation $\rho$ and the missing rate).}
%All the code is available on github.
 
%First of all, since the model is non-linear, linear methods are expected to encounter more difficulties. 
%Since the model is non linear, we expect the linear methods to have poor performance. Indeed, whatever the imputation method , linear procedures \texttt{glm-FI}, \texttt{glm-mean}, \texttt{glm-mean+M} do not retrieve any information on $Y$: their percentage of explained variance is close to zero. 
In the MCAR case, all decision-tree methods perform similarly aside from
out-of-range imputation which significantly under-performs.
Performing a ``good'' conditional imputation, \textit{i.e.} one that
captures the relationships between variables such as \textit{impute
Gaussian}, slightly helps prediction. This is all the most true as the
correlation between variables increases, which we have not displayed.
Adding the missing-value mask is not important.
For powerful models, random forests and gradient boosting, the
benefit of conditional imputation compared to MIA is much reduced.
These models give significantly better prediction accuracy \modif{than tree-based models as expected.}

More complex patterns (MNAR or predictive missingness) reveal
the importance of adding the missing mask in most imputation strategies, except for the OOR imputation. MIA achieves excellent performance even for these more
complex missing-values mechanisms.
Remarkably, mean imputation also achieves good performances \modif{with random forests and xgboost methods} though adding
a mask helps. \modif{This is not the case for KNN and SVM.}
\modif{
Note that, whatever the missing data mechanism, combining OOR imputation with  SVM learners leads to sub-optimal predictive performances.  
}
%
% One reason for this strange behavior could be the non linear relation between $X_1$ and $Y$: as \texttt{ctree} use a statistic to evaluate the linear relation between each $X^{(j)}$ and $Y$ in order to select a split, and stop splitting if this value is below a threshold, this may result in very deep tree which are more prone to overfitting and have a large variance \jj{a verifier}. Indeed, if restricted to a small interval, the function $x \mapsto x^2$ is well approximated by a linear function, which justify the tendency of \texttt{ctree} towards deep trees. 
%As a minor difference, one can see that adding the missingness pattern as an additional variable may be of some help, particularly for \texttt{rpart} which, by default, uses surrogate splits that cannot differentiate between partially missing and fully observed data.
%

\paragraph{Statistical tests} \modif{In order to assess whether the differences observed in Figure~\ref{fig:simu_mcar_nonlinear_multivar_miss2_rho5} are  significant, we employ statistical tests. More precisely, for each learning algorithm and each pair of imputation strategy, we consider the $R^2$ of the $1000$ repetitions. As these values are computed on similar individuals (the generated data are the same for all imputation methods and a given repetition), we employ paired t-tests. A p-value lower than $0.05$ indicates that the two considered imputation method exhibits different predictive performances. Such results help us to assess the significance of the difference observed via the boxplots in Figure~\ref{fig:simu_mcar_nonlinear_multivar_miss2_rho5}. P-values are displayed in Appendix~\ref{sec:app_statistical_tests}. The vast majority of differences observed in Figure~\ref{fig:simu_mcar_nonlinear_multivar_miss2_rho5} appear to be significant. For example, in the MCAR model, only the pair Gaussian/Gaussian+mask and Mean/Mean+mask are not statistically different for decision trees and random forests at the level $0.05$.}

\paragraph{Computational complexity} \modif{The computational complexity together with the predictive performances are two main criteria to assess the usefulness of a method handling missing values. We display in Figure~\ref{fig:simu_running_time} the training time (taking into account the imputation time and the training time on imputed data) of each method. } \modif{As expected, the computational time increases when adding the mask except for the forests. In this low-dimensional setting, KNN is the quickest method, followed by decision trees and SVMs. The most computationally intensive methods, as expected, are random forests and gradient boosting. Nevertheless, the computational time for data of this size is not substantial. The computation time is primarily attributable to the learners, and imputation methods have minimal impact.}

\begin{figure}
    \centering
    \includegraphics[scale=0.4]{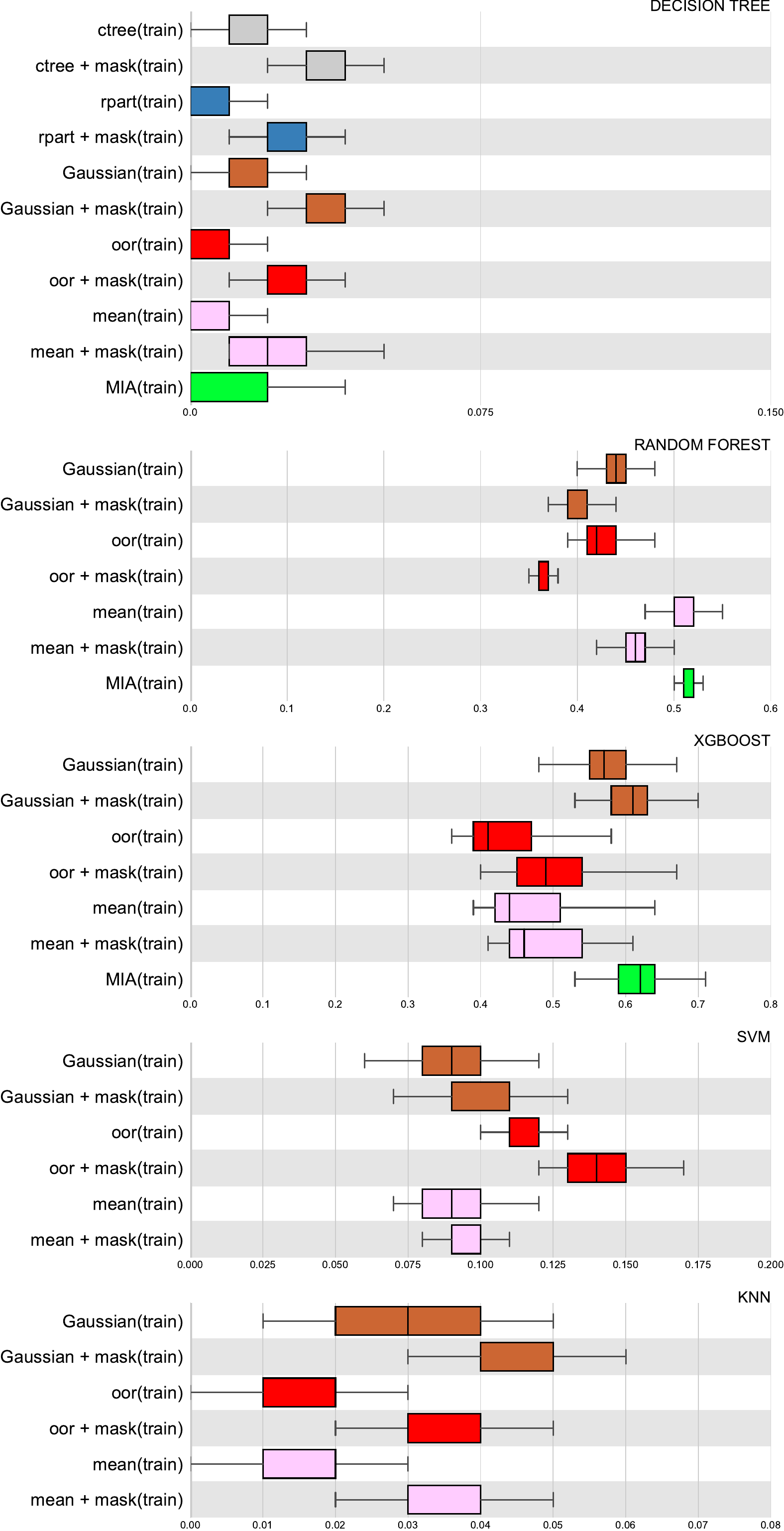}

    \caption{\textbf{Computation time (in seconds)} of the different imputation methods/learning procedures for the MCAR generative mechanism with 20\% of missing values,
$n=1000$, $d=9$ and $\rho = 0.5$.}
    \label{fig:simu_running_time}
\end{figure}

Figure \ref{fig:simu_mcar_nonlinear_multivar2} compares methods for
datasets with a non-linear generative model and
values missing completely at random. The figure
focuses on methods without adding the mask, as it 
makes little difference in MCAR settings. Even with
non linearities, Gaussian
imputation often performs as well or better than mean imputation or MIA,
likely because the non-linear supervised model compensates for the non
linearity.
All in all,
MIA proves to be a strong option in all the scenarios that we have
experimented \modif{when using tree-based methods}, although Gaussian imputation with the mask can outperform
it in these MCAR settings.

\begin{figure}
    \hspace*{.05\linewidth}%
    \includegraphics[clip, height=.46\textwidth]{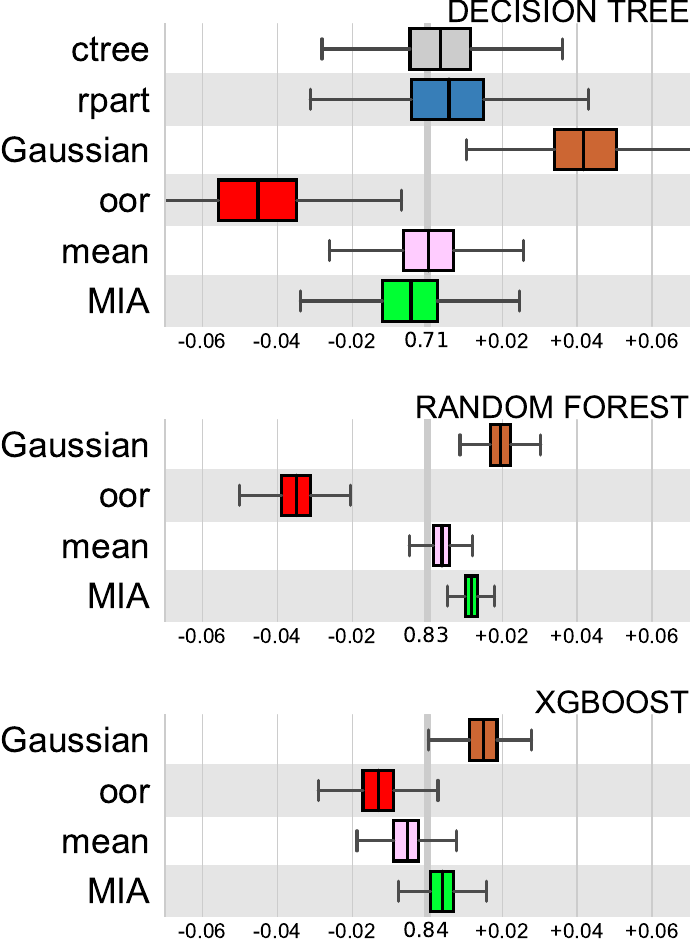}% 
    \hfill%
    \includegraphics[trim={2.7cm 0 0 0}, clip, height=.46\textwidth]{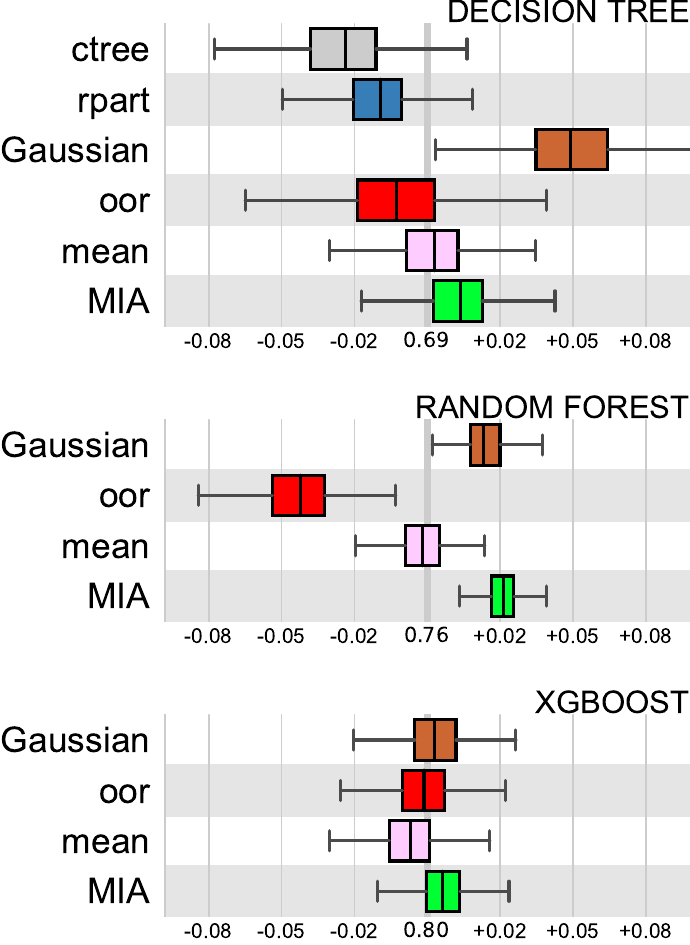}% 
    \hfill%
    \includegraphics[trim={2.7cm 0 0 0}, clip, height=.46\textwidth]{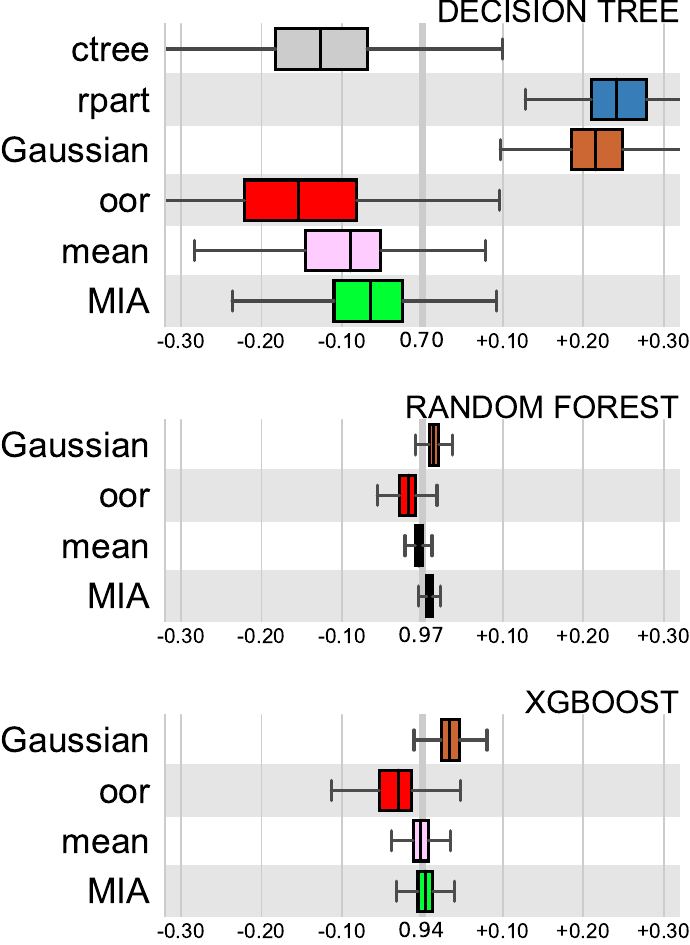}%
    \llap{\raisebox{.47\linewidth}{%
    {\sffamily%
    MCAR:\hspace{.06\textwidth}Model 2 (linear)
    \hspace{.12\textwidth}Model 3 (Friedman)
    \hspace{.1\textwidth}Model 4 (nonlinear)
    \hspace*{.02\textwidth}}}}

    \caption{\textbf{Relative scores on different models in MCAR $\bullet$} Relative explained variance for models 2, 3, 4, MCAR with 20\% of missing values, $n=1000$, $d=10$ and $\rho = 0.5$.}
    \label{fig:simu_mcar_nonlinear_multivar2}
\end{figure}  
%\es{afficher le R2 et non R2 relatif pour cette figure}

\subsection{Consistency and learning curves}
\label{sec:simu_consistency}

\begin{figure}[!b]
    \includegraphics[width=1.05\textwidth]{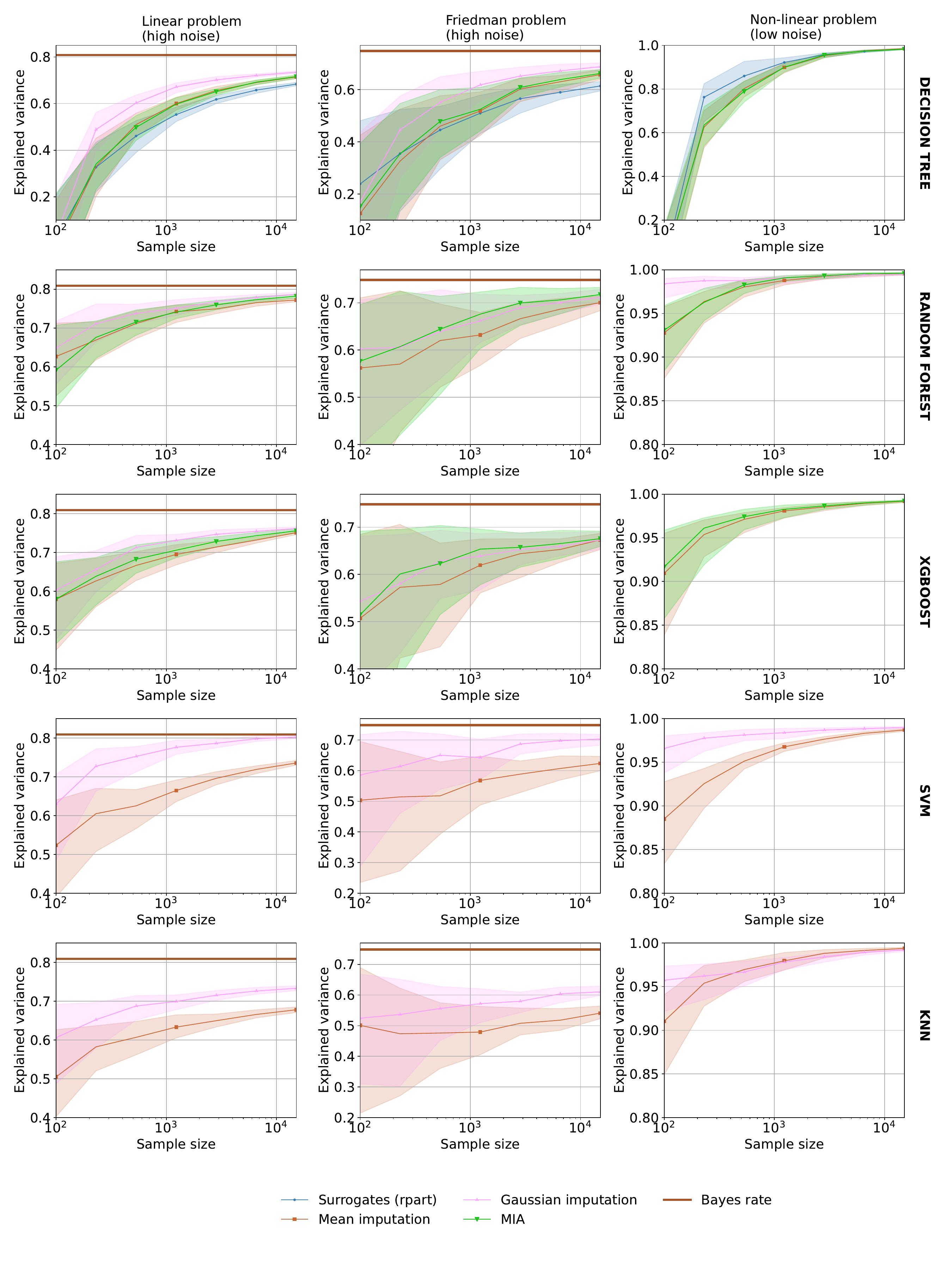}%
    %\includegraphics[width=\textwidth]{figures/consistency_log_merge_bis.pdf}%
    %\llap{\raisebox{.3\linewidth}{%
    %{\sffamily\bfseries%
    %Linear problem
    %\hspace{.3\textwidth} Non-linear problem
    %\hspace*{.05\textwidth}}}}

    \caption{\textbf{Bayes consistency in MCAR $\bullet$} Consistency with 40\% of MCAR values
on all variables, on models 2 (linear), 3 (Friedman), 4 (non-linear).}
    \label{fig:consistency}
\end{figure}

% Theorem \ref{thm:imp_mean_valid} guarantees the consistency of mean imputation. 
 %MIA is supposed to handle informative missing patterns.
 %However, for MCAR correlated data where imputation based on the relationship between variables can be accurate, these methods may have a higher variance, and be less efficient for finite sample size. This is evaluated 
 In the third experiment, we compare the methods of Section \ref{sec:setting} varying sample size to assess their asymptotic performances, on models \ref{model2}, \ref{model3} and \ref{model4}. 
 We wish to compare the tree performance with respect to the Bayes loss.
For each sample size (between 300 and $10^4$), we summarize 200
repetitions by their median and quartiles (as in the boxplots). Assuming MCAR, the Bayes estimator is the expectation of $Y$ conditionally to the observed values,
\[
\E[Y|\bXm] = \E[f(\X)|\bXm] = \E[f(\X)| \bX_{obs(\bm)} = \bx_{obs(\bm)}, \bM = \bm].
\]
It has an simple expression only if the joint distribution of
$(\X,Y)$ is Gaussian. To compute an approximate Bayes loss for a nonlinear regression with Gaussian features, we apply joint Gaussian multiple imputation, as justified in  Section \ref{sec:testtimeMI},
 on a very large sample. %This is valid because the covariates form a Gaussian vector. 
 For the third scenario with non-Gaussian features, we have not computed the Bayes loss.

%\subsection{Real data sets}
%\label{sec:simu_missing_indicator}

%\gv{Those real datasets are tiny :(. It would be better to use some with
%more data. Machine learning these days is focused on larger datasets.}

%\paragraph*{Real data sets} We use two data sets of average size with no native missing values, summarized by Table \ref{tab:data}. They are available on the UCI machine learning repository \citep{Dua:2017}.

%\begin{table}[htbp]
%    \centering
%    \begin{tabular}{|l||c|c|c|}
%        \cline{2-4}
%        \multicolumn{1}{c|}{} & Size & Dimension & Objective \\
%        \cline{2-4}
%        \hline
%        Boston Housing & 506 & 13, numerical & Regression \\
%        \cline{1-4} 
%        Breast Cancer Wisconsin & 569 & 30, numerical & Binary classification   %\\
%       \hline
%    \end{tabular}
%    \caption{\label{tab:data} data sets}
%\end{table}
%\subsection{Take-home message from simulations}
  
%\begin{align*}
%\left\{
%\begin{aligned}
%(X_0,\ldots,X_{9}) &\sim \mathcal N(0, \Sigma_X),\\
%\Sigma_X &= \begin{pmatrix}
%1 & & (0.5) \\
%& \ddots & \\
%(0.5) & & 1
%\end{pmatrix}\\
%Y &= 100 (X_0+\ldots+X_9) + \mathcal N(0, 0.2) \\
%M_0,\ldots,M_{9} &\sim \mathcal B(0.4).\\
%\end{aligned}
%\right.
%\end{align*}
In linear settings, Figure \ref{fig:consistency} (left) shows that \textit{impute Gaussian} 
benefits from correlations between features and is the best-performing
method; For decision trees and forests, mean imputation, MIA and
surrogate splits are also consistent but with a slower convergence rate
(we have not displayed conditional trees as they exhibit the same behaviour as rpart 
with surrogate splits). Adding the indicator matrix in the data changes almost nothing here, so we have not displayed the corresponding curves.
For non-linear associations (Figure~\ref{fig:consistency}, middle and
right), the benefit brought by Gaussian imputation over the others methods
seems to carry over though it is less pronounced for random forests and
boosting.
For low-noise settings (Figure~\ref{fig:consistency}, right) MIA and mean imputation seem equivalent. \modif{Gaussian imputation is also the best method compared to mean imputation when using KNN or SVM. Note that SVM is not a local averaging method (in contrast to tree-based methods and nearest neighbors) and thus poor imputation may damage the predictive performances in the whole input space. On the contrary, imputed data points have only a local influence in tree-based methods and nearest neighbors, which may make them less sensitive to a bad imputation technique. }

For boosting, the difference between methods vanishes with large $n$, as
can be expected from boosting's ability to turn weak learners into strong
ones \citep{schapire1990strength}.
Gaussian imputation is still beneficial for small sample sizes.
Note that MIA can easily be implemented as a preprocessing step (see Remark~\ref{rem:mia} in Appendix~\ref{seq:MIA} for implementation details).

\replace{}{%
\paragraph{Link to studies on real-life data}
The experiments above are simple synthetic problems to showcase the
principles behind prediction with missing values.
A benchmark on real-life data is beyond the scope of this work because
it requires studying several datasets for generalizable findings. 
\cite{perez2022benchmarking} conducted such study on 13 different
prediction tasks on data with missing values. Their findings confirms
that our results apply to real-life data. In particular: \emph{1)} given
sufficient data, constant imputation (\emph{eg} with the mean) performs
as well as conditional imputation; \emph{2)} using MIA in tree-based
models gives leading methods for all sample sizes; \emph{3)} often adding
the missing indicator $\mathbf{M}$ in the features helps predictions.}

\paragraph{Take-home messages}

First, mean imputation can be appropriate and is consistent in a
supervised-learning setting when missing values are MAR and not related
to the outcome. Second, tree-based methods are an efficient way to target
$\widetilde{f}^\star (\bXm) = \E[Y | \bXm]$ especially when using MIA
(Section \ref{seq:MIA}) and can handle well informative pattern
of missing values. 

We compare imputation methods, using the ``proper way'' to impute as described in Section \ref{subsec:imputation_train_test}, \textit{i.e.}, where imputation values from the training set are used to impute the test set.

In addition, we consider imputation with the missing indicator $\M$ in the features. The rationale behind this indicator is that it can be  useful to improve the prediction when going beyond the hypothesis of Theorem \ref{thm:imp_mean_valid}, \textit{i.e.} considering a finite sample, a learning algorithm with a low approximation capacity (as linear regression) and with missing values that can either be MNAR or depend on $Y$. 

\section{Discussion and conclusion}

We have studied procedures for supervised learning with missing data.
Unlike in the classic missing data literature, the goal of the
procedures is to yield the best possible prediction on test data with
missing values. Focusing on simple ways of adapting existing procedures,
our theoretical and empirical results outline simple
practical recommendations:
\begin{itemize}[leftmargin=3ex, itemsep=.2ex, parsep=.1ex, topsep=.1ex]
\item
\modif{In presence of MAR missing values, Bayes-consistent estimate can be built by applying the Bayes predictor (on complete data) to data imputed via conditional multiple imputation, and averaging the resulting prediction (Theorem \ref{theo:multiple_imputation}). Thus, Theorem \ref{theo:multiple_imputation} justifies the use of multiple conditional imputation in a MAR setting. }
%prediction can be achieved on a test set by multiple imputation of its missing values with a conditional imputation model fit on the train set (Theorem \ref{theo:multiple_imputation}).
\item To train and test on data with missing values, the same imputation
model should be used. \modif{Constant imputation} is consistent, \modif{if used in conjunction with a
powerful, non-linear learning model}
(Theorem~\ref{thm:imp_mean_valid}).
\item \modif{There are a variety of manners to handle missing values inside decision trees. Among them,  
Missing Incorporated in Attribute 
(MIA, \citealt{Twala:2008:GMC:1352941.1353228}, see Remark~\ref{rem:mia} in Appendix~\ref{seq:MIA}), which works by  optimizing jointly
the split and the handling of the missing values is the most versatile, as it adapts to different missing data scenarios (see Section~\ref{sec:simu} and theoretical analysis in Appendix~\ref{sec:comparbre}).}
\item \modif{Empirically, the choice of  imputation methods (applied at train and
test time) may lead to a reduction of the number of samples required to reach a given prediction performance
(Figure~\ref{fig:consistency}).}
\item When missingness is related to the prediction target, imputation
does not suffice and it is useful to add the missingness indicators as features
(Example \ref{example_mnar2} and Figure~\ref{fig:simu_mcar_nonlinear_multivar_miss2_rho5}).
\end{itemize}
These recommendations hold to minimize the prediction error in an
asymptotic regime. More work is
needed to establish theoretical results in the finite sample regime.  In addition, different practices may be needed to
control for the uncertainty associated to a prediction. %Indeed, mean
%imputation leads to underestimating uncertainty.

\section*{Declarations}

\paragraph{Ethics approval and Consent to participate}

As the work is purely mathematical and numerical, and all data are
synthetic, no ethic approval or consent to participate was necessary.

%Consent for publication (include appropriate statements)

\paragraph{Competing interests}

We have no competing interests to disclose.

\paragraph{Funding}

JJ, NP, and GV acknowledge funding from ANR (``DirtyData'' grant
ANR-17-CE23-0018) and DataIA (``MissingBigData'' grant).

%\paragraph{Authors' contributions}

%NP and JJ performed the experiments. NP, JJ, and GV presented their
%results. GV and JJ oversaw the benchmarking process, provided feedback on
%the results presentation. ES, JJ, GV, and NP contributed elements of the
%proofs. JJ, ES, GV, and NP contributed to the literature review and the
%discussion. JJ, ES, GV, and NP wrote the manuscript, provided feedback,
%and edited the manuscript. 

\paragraph{Acknowledgements}

We thank Stefan Wager for fruitful discussion and Julie Tibshirani for the suggestion to implement MIA.  
We also thank Antoine Ogier who initiated our work on imputation in supervised learning during his internship. We thank the reviewers who provided constructive feedback that helped to improve the quality of the manuscript. 

\paragraph{Consent for publication} Not applicable: no data or individual
images were used in this publication

\paragraph{Code availability} 

The code to reproduce the experiments of this work is available on
\url{https://github.com/dirty-data/supervised_missing}

\paragraph{Availability of data and material - (data transparency)} Not
applicable as all
the experiments in this study use simulated data.

\bibliographystyle{plainnat}
\bibliography{biblio}

\appendix

%\clearpage

\section{Proofs of Section~\ref{sec:Bayesrisk}}
\label{apx:split}

%\propcrit*
\subsection{Proof of Theorem \ref{theo:multiple_imputation}}
\begin{proof}[Proof of Theorem \ref{theo:multiple_imputation}:
consistency of test-time conditional multiple imputation]

Let $\bxm \in (\R \cup \{ \texttt{NA}\})^d$. 
%Without loss of generality, assume that only $\tilde{x}_1, \hdots, \tilde{x}_j$ are equal to $\texttt{NA}$, for some $j \in \{1, \hdots, d\}$. 
%Let $g_{\tilde{x}_{j+1}, \hdots, \tilde{x}_{d}}$ be the distribution of $(X_1, \hdots, X_j)$ conditional on the event  $\{\Xm_{j+1} = \tilde{x}_{j+1}, \hdots, \Xm_{d} = \tilde{x}_{d}\}$.
%Let $(\hat{X_1}, \hdots, \hat{X}_j)$ be a random vector with distribution  $g_{\tilde{x}_{j+1}, \hdots, \tilde{x}_{d}}$. 
%
By Assumption~\ref{reg_assumption1}, $\E [ \varepsilon | \bX_{obs(\M)}] = 0$ a.s., which yields, a.s., 
\begin{align*}
\E [ Y | \bX_{obs(\bm)} = \bx_{obs(\bm)}] & = \E [ f^{\star}(\bX) + \varepsilon |  \bX_{obs(\bm)} = \bx_{obs(\bm)}]\\
& = \E [ f^{\star}(\bX) |  \bX_{obs(\bm)} = \bx_{obs(\bm)}]\\
& = \E [ f^{\star} (o(\bX_{obs(\bm)}, \X_{mis(\bm)} ; \bm)) |  \bX_{obs(\bm)} = \bx_{obs(\bm)}].
\end{align*}
By definition, the multiple imputation procedure described in Theorem~\ref{theo:multiple_imputation} is given by
\begin{align}
\regMultImput (\bxm) & = \E_{{\X_{mis(\bm)}|\X_{obs(\bm)}=\x_{obs(\bm)}}} [ f^{\star} (o(\x_{obs(\bm)}, \X_{mis(\bm)} ; \bm))] \nonumber \\
& = \E [ f^{\star} (o(\bX_{obs(\bm)}, \X_{mis(\bm)} ; \bm)) |  \bX_{obs(\bm)} = \bx_{obs(\bm)}] \nonumber \\
& = \E [ Y | \bX_{obs(\bm)} = \bx_{obs(\bm)}].
% & = \E_{\hat{X_1}, \hdots, \hat{X}_j} [f(\hat{X_1}, \hdots, \hat{X}_j, \tilde{x}_{j+1}, \hdots, \tilde{x}_d)] \nonumber \\
%& = \E [f(\hat{X_1}, \hdots, \hat{X}_j, \Xm_{j+1}, \hdots, \Xm_d) | \Xm_{j+1} = \tilde{x}_{j+1}, \hdots, \Xm_{d} = \tilde{x}_{d}] \nonumber \\
%& = \E[Y | \bX_o = \bx_o],
\label{eq:prop_mult_imp_1}
\end{align}
Besides, since the missing pattern is MAR, 
\begin{align}
\regTilde (\bxm) & = \E[Y|\bXm = \bxm] \nonumber \\
&= \E[Y | \bX_{obs(\bm)} = \bx_{obs(\bm)}, \bM = \bm] \nonumber \\
&= \E[Y | \bX_{obs(\bm)} = \bx_{obs(\bm)} ] 
%& = \E[Y |  M_1 = 1, \hdots, M_j = 1,  \tilde{X}_{j+1} = \tilde{x}_{j+1}, \hdots, \tilde{X}_{d} = \tilde{x}_{d}] \nonumber \\
%& = \E[Y |  \tilde{X}_{j+1} = \tilde{x}_{j+1}, \hdots, \tilde{X}_{d} = \tilde{x}_{d}]. 
\label{eq:prop_mult_imp_2}
\end{align}
Combining (\ref{eq:prop_mult_imp_1}) and (\ref{eq:prop_mult_imp_2}), we finally obtain
\begin{align*}
 \regMultImput (\bxm) =  \regTilde (\bxm).
\end{align*}
\end{proof}

\subsection{Proof of Theorem \ref{thm:imp_mean_valid}}
\begin{proof}[Proof of Theorem \ref{thm:imp_mean_valid}: consistency of
mean imputation at train and test time]

We distinguish the three following cases in order to make explicit the expression of $\E[Y|\bX' = \bx ]$. 

\paragraph{First case : let $\bx \in [0,1]^d$ such that $x_1 \neq \impConst$.}\

For $0 < \rho < |x_1 - \impConst|$,  letting $B(\bx, \rho)$ be the euclidean ball centered at $\bx$ of radius $\rho$, 
\begin{align}
    \E[Y|\bX' \in B(\bx,\rho) ] & = \frac{\E[Y \mathds{1}_{\bX' \in B(\bx, \rho)} ]}{\P[\bX' \in B(\bx, \rho) ]} \nonumber \\
    & = \frac{\E[Y \mathds{1}_{\bX \in B(\bx, \rho)} \mathds{1}_{M_1=0} ]}{\P[\bX \in B(\bx, \rho), M_1=0 ]} \nonumber \\
    & = \E[Y|\bX \in B(\bx, \rho), M_1=0 ]\,. \label{proof_thm1_eq1}
\end{align}
Taking the limit of (\ref{proof_thm1_eq1}) when $\rho$ tends to zero,  
\begin{align}
    \E[Y|\bX' = \bx ] & = \lim_{\rho \to 0} \E[Y|\bX' \in B(\bx, \rho) ] = \E[Y|\bX = \bx, M_1=0 ]. \label{eq:proof_thm_mean1}
\end{align}

\paragraph{Second case : let $\bx \in [0,1]^d$ such that $x_1 = \impConst$.}\

\emph{First Subcase: assume $\P[M_1 = 1 | X_2 = x_2, \hdots, X_d = x_d] = 0$.}\

We have $\{\bX' = \bx \} = \{\bX' = \bx, M_1 = 0 \} = \{\bX = \bx \} $, and consequently,
\begin{align}
    \E[Y | \bX' = \bx] & = \E[Y | \bX = \bx]. \label{proof_thm_eq_final0}
\end{align}

\emph{Second Subcase: assume $\P[M_1 = 1 | X_2 = x_2, \hdots, X_d = x_d] > 0$.}\

We have 
\begin{align*}
\P[\bX' \in B(\bx, \rho) ] & = \E[\mathds{1}_{\bX' \in B(\bx, \rho)} \mathds{1}_{M_1 = 0}] + \E[\mathds{1}_{\bX' \in B(\bx, \rho)} \mathds{1}_{M_1 = 1}]\\
& = \E[\mathds{1}_{\bX \in B(\bx, \rho)} \mathds{1}_{M_1 = 0}] + \E[\mathds{1}_{(X_2, \hdots, X_d) \in B((x_2, \hdots, x_d), \rho)} \mathds{1}_{M_1 = 1}],
%& = \P[X_1 \in  B(\bx, h), M_1=0] + \P[M_1=1] \\
%& = h (1-p) + p.
\end{align*}
and 
\begin{align*}
\E[f(\bX) \mathds{1}_{\bX' \in B(\bx, \rho)} ]
& = E[f(\bX) \mathds{1}_{\bX \in B(\bx, \rho)} \mathds{1}_{M_1=0} ]\\
& \quad + E[f(\bX) \mathds{1}_{(X_2, \hdots, X_d) \in B((x_2, \hdots, x_d), \rho)} \mathds{1}_{M_1=1} ].
%& = (1-p)h \Big( x_1 + \frac{h}{2} \Big) + p \E[Y].
\end{align*}
Therefore, 
\begin{align}
 \E[Y|\bX' \in B(\bx, \rho) ] 
= & \frac{\E[f(\bX) \mathds{1}_{\bX' \in B(\bx, \rho)} ]}{\P[\bX' \in B(\bx, \rho) ]} \nonumber \\
= & \frac{\E[f(\bX) \mathds{1}_{\bX \in B(\bx, \rho)} \mathds{1}_{M_1=0} ] + \E[f(\bX) \mathds{1}_{(X_2, \hdots, X_d) \in B((x_2, \hdots, x_d), \rho)} \mathds{1}_{M_1=1} ]}{ \E[\mathds{1}_{\bX \in B(\bx, \rho)} \mathds{1}_{M_1 = 0}] + \E[\mathds{1}_{(X_2, \hdots, X_d) \in B((x_2, \hdots, x_d), \rho)} \mathds{1}_{M_1 = 1}]}.\label{eq:proof_thm_mean2}
\end{align}
The terms in (\ref{eq:proof_thm_mean2}) involving $M_1 = 0$ satisfy
\begin{align}
\E[\mathds{1}_{\bX \in B(\bx, \rho)} \mathds{1}_{M_1 = 0}] & \leq \mu( B(\bx, \rho)) 
\leq \frac{\pi^{d/2}}{\Gamma(\frac{d}{2}+1)} \|g\|_{\infty} \rho^d, \label{eq:proof_thm_mean2.1}
\end{align}
and
\begin{align}
|\E[f(\bX) \mathds{1}_{\bX \in B(\bx, \rho)} \mathds{1}_{M_1=0} ]| & \leq \E[|f(\bX)| \mathds{1}_{\bX \in B(\bx, \rho)} ]| \nonumber \\
& \leq   \frac{\pi^{d/2}}{\Gamma(\frac{d}{2}+1)} \|g\|_{\infty} \|f\|_{\infty} \rho^d. \label{eq:proof_thm_mean2.2}
\end{align}
The second term of the denominator in (\ref{eq:proof_thm_mean2}) can be bounded from below, 
\begin{align}
& \E[\mathds{1}_{(X_2, \hdots, X_d) \in B((x_2, \hdots, x_d), \rho)} \mathds{1}_{M_1 = 1}] \nonumber \\
& =   
\E[\mathds{1}_{(X_2, \hdots, X_d) \in B((x_2, \hdots, x_d), \rho)} \P[M_1 = 1 | X_2, \hdots, X_d]] \nonumber \\
& \geq   \frac{\pi^{(d-1)/2}}{\Gamma(\frac{d-1}{2}+1)} \big(\inf_{[0,1]^d } g \big) \rho^{d-1} \eta. \label{eq:proof_thm_mean2.3}
\end{align}
The second term of the numerator in (\ref{eq:proof_thm_mean2}) verifies 
\begin{align*}
& \E[f(\bX) \mathds{1}_{(X_2, \hdots, X_d) \in B((x_2, \hdots, x_d), \rho)} \mathds{1}_{M_1=1} ] \\
&  = 
\E[\mathds{1}_{(X_2, \hdots, X_d) \in B((x_2, \hdots, x_d), \rho)} \E[ f(\bX)  \mathds{1}_{M_1=1} | X_2, \hdots, X_d]]\\
& = \E[\mathds{1}_{(X_2, \hdots, X_d) \in B((x_2, \hdots, x_d), \rho)} \E[ f(\bX) | X_2, \hdots, X_d] \E[ \mathds{1}_{M_1=1} | X_2, \hdots, X_d] ].
\end{align*}
If $\E[f(\bX)|X_2 = x_2, \hdots, X_d = x_d] > 0$, by uniform continuity of $f$ and $g$,  
\begin{align*}
& \E[\mathds{1}_{(X_2, \hdots, X_d) \in B((x_2, \hdots, x_d), \rho)} \E[ f(\bX) | X_2, \hdots, X_d] \E[ \mathds{1}_{M_1=1} | X_2, \hdots, X_d] ] \\
 & \geq   \E[f(\bX) |X_2 = x_2, \hdots, X_d = x_d]  \frac{\pi^{(d-1)/2}}{\Gamma(\frac{d-1}{2}+1)} \big(\inf_{[0,1]^d } g\big) \rho^{d-1} \eta.
\end{align*}
Similarly, if $\E[f(\bX)|X_2 = x_2, \hdots, X_d = x_d] < 0$, we have
\begin{align*}
& \E[\mathds{1}_{(X_2, \hdots, X_d) \in B((x_2, \hdots, x_d), \rho)} \E[ f(\bX) | X_2, \hdots, X_d] \E[ \mathds{1}_{M_1=1} | X_2, \hdots, X_d] ] \\
&  \leq   \E[f(\bX) |X_2 = x_2, \hdots, X_d = x_d]  \frac{\pi^{(d-1)/2}}{\Gamma(\frac{d-1}{2}+1)} \big(\inf_{[0,1]^d } g\big) \rho^{d-1} \eta\\
&  \leq 0.
\end{align*}
Hence, if $\E[f(\bX)|X_2 = x_2, \hdots, X_d = x_d] \neq 0$
\begin{align}
& |\E[f(\bX) \mathds{1}_{(X_2, \hdots, X_d) \in B((x_2, \hdots, x_d), \rho)} \mathds{1}_{M_1=1} ]| \nonumber \\
& \geq  |\E[f(\bX) |X_2 = x_2, \hdots, X_d = x_d] | \frac{\pi^{(d-1)/2}}{\Gamma(\frac{d-1}{2}+1)} \big(\inf_{[0,1]^d } g\big) \rho^{d-1} \eta. \label{eq:proof_thm_mean2.4}
\end{align}
Gathering inequalities (\ref{eq:proof_thm_mean2.1})-(\ref{eq:proof_thm_mean2.4}) and using equation (\ref{eq:proof_thm_mean2}), we have, if $\E[f(\bX)|X_2 = x_2, \hdots, X_d = x_d] \neq 0$
\begin{align*}
\lim_{\rho \to 0} \E[Y|\bX' \in B(\bx, \rho) ] & = \lim_{\rho \to 0} \frac{ E[f(\bX) \mathds{1}_{(X_2, \hdots, X_d) \in B((x_2, \hdots, x_d), \rho)} \mathds{1}_{M_1=1} ]}{\E[\mathds{1}_{(X_2, \hdots, X_d) \in B((x_2, \hdots, x_d), \rho)} \mathds{1}_{M_1 = 1}]}\\
& = \E[f(\bX) | X_2 = x_2, \hdots, X_d = x_d, M_1 = 1].
\end{align*}
Finally, if $\E[f(\bX)|X_2 = x_2, \hdots, X_d = x_d] = 0$ then by uniform continuity of $f$, there exists $\varepsilon_{\rho}$ such that $\varepsilon_{\rho} \to 0$ as $\rho \to 0$ satisfying, 
\begin{align*}
|\E[f(\bX) \mathds{1}_{(X_2, \hdots, X_d) \in B((x_2, \hdots, x_d), \rho)} \mathds{1}_{M_1=1} ]| & \leq \varepsilon_{\rho} \rho^{d-1} \frac{\pi^{d/2}}{\Gamma(\frac{d}{2}+1)} \|g\|_{\infty},
\end{align*}
hence 
\begin{align*}
 \lim_{\rho \to 0} \E[Y|\bX' \in B(\bx, \rho) ] & = 0\\
& = \E[f(\bX)|X_2 = x_2, \hdots, X_d = x_d]\\
& = \E[f(\bX)|X_2 = x_2, \hdots, X_d = x_d, M_1 = 1],
\end{align*}
since $M_1 \indep X_1 | (X_2, \hdots, X_d)$. Consequently, for all $\bx \in [0,1]^d$ such that $x_1 = \impConst$, 
\begin{align}
 \lim_{\rho \to 0} \E[Y|\bX' \in B(\bx, \rho) ]  = \E[f(\bX)|X_2 = x_2, \hdots, X_d = x_d, M_1 = 1]. \label{proof_thm1_eq_final}
\end{align}
Combining equations (\ref{eq:proof_thm_mean1}), (\ref{proof_thm_eq_final0}) and (\ref{proof_thm1_eq_final}), the prediction given by the mean imputation followed by learning is, for all $\bx' \in \R^d$,
\begin{align*}
    \regTilde (\bx') & = \E[Y|X_2 = x_2, \hdots, X_d = x_d, M_1 = 1] \mathds{1}_{x_1' = \impConst} \mathds{1}_{\P[M_1 =1| X_2 = x_2, \hdots, X_d = x_d] >0} \\
    & \quad + \E[Y|\bX = \bx']  \mathds{1}_{x_1' = \impConst} \mathds{1}_{\P[M_1 =1| X_2 = x_2, \hdots, X_d = x_d] =0} \\
    & \quad + \E[Y|X_2 = x_2, \hdots, X_d = x_d, M_1 = 0] \mathds{1}_{x_1' \neq \impConst},
\end{align*}
which concludes the proof.
\end{proof}

\begin{comment}
\begin{align}
    \E[Y|X' = x_1 ] & = \lim_{h \to 0} \E[Y|X' \in B(\bx, h) ] \nonumber \\
    & = \E[Y]. \label{eq:proof_thm_mean2}
\end{align}
Gathering (\ref{eq:proof_thm_mean1}) and (\ref{eq:proof_thm_mean2}), we obtain
\begin{align*}
    \E[Y|\bX' = x_1 ] & = \E[Y] \mathds{1}_{x_1 = \impConst} + \E[Y|\bX = \bx_1] \mathds{1}_{x_1 \neq \impConst}.
\end{align*}
Thus, the prediction given by the mean imputation followed by learning is
\begin{align*}
    f_{imputation}(x_1) = \E[Y] \mathds{1}_{x_1 = \impConst} + \E[Y|X = x_1] \mathds{1}_{x_1 \neq \impConst},
\end{align*}
which is equivalent to, for the original data $\tilde{x_1}$, 
\begin{align*}
\tilde{f}_{imputation}(\tilde{x_1}) = \E[Y] \mathds{1}_{\tilde{x_1} = NA} + \E[Y|X = x_1] \mathds{1}_{x_1 \neq NA}.
\end{align*}
\end{comment}

\section{Decision trees: an example of empirical risk minimization with
missing data}\label{sec:trees}

%As mentioned, le
%We  studying  impu-tation in Sec.  4 by the fact thatE[Y| ̃X] can’t be learnt directly due tothe half-discrete nature of ̃X.

Aside from using procedures that require imputation prior to learning in order to handle missing values (as theoretically discussed in the previous section), one can use procedures intrinsically able to handle missing values. Among them, decision trees offer a natural way for empirical risk minimization with
missing values \citep{saar2007handling,Twala:2008:GMC:1352941.1353228}. This is due to their ability to handle the half-discrete
nature of $\tilde \X$, as they rely on greedy decisions rather than
smooth optimization.

We first present the different approaches available to handle
missing values in tree-based methods in Sections \ref{sec:availablecase}
and \ref{seq:MIA}. We then compare them theoretically in Section
\ref{sec:comparbre}, highlighting the interest of using the ``missing
incorporated in attribute'' approach in particular when the missing values are informative. In Section~\ref{sec:simu}, we will compare numerically the two different strategies: imputation prior to learning and learning directly with missing data.

\subsection{Tree construction with CART}

CART (Classification And Regression Trees, \citealt{DBLP:books/wa/BreimanFOS84}) is one of the most popular tree algorithm, originally designed for complete data sets. It recursively builds a partition of the input space $\mathcal{X} = \R^d$, and predicts by aggregating the observation labels (average or majority vote) inside each terminal node, also called leaf. For each  node $A = \prod_{j=1}^d [a_{j,L}, a_{j,R}] \subset \R^d$, CART algorithm finds the best split $(j^{\star}, z^{\star})$ among the set of all eligible splits $\mathcal{S} = \{(j,z), j \in \llbracket 1,d\rrbracket,~z\in\R, z_j \in
[a_{j,L}, a_{j,R}] \}$, 
where a split is defined by the variable $j$ along which the split is performed and the position $z$ of the split. More precisely, 
the best split $(j^{\star}, z^{\star})$ in any node $A$ is the solution of the following optimization problem
\begin{align}
(j^{\star}, z^{\star}) \in \underset{(j,z) \in \mathcal{S}}{\argmin}~
& \E\Big[  \big(Y-\E[Y|X_j\leq z, \X \in A]\big)^2 \cdot\mathds 1_{X_j\leq z, \X \in A} \nonumber\\ 
& \quad   + \big(Y-\E[Y|X_j> z, \X \in A]\big)^2 \cdot\mathds 1_{X_j>z, \X \in A} \Big]. 
\label{split_criterion_cart}
\end{align}

\begin{comment}
\begin{align}
(j^{\star}, z^{\star}) \in \underset{(j,z) \in \mathcal{S}}{\argmin}~
& \E\Big[  \big(Y-\E[Y|X_j\leq z, \X \in A]\big)^2 \cdot\mathds 1_{X_j\leq z, \X \in A} \nonumber \\
& \quad + \big(Y-\E[Y|X_j> z, \X \in A]\big)^2 \cdot\mathds 1_{X_j>z, \X \in A} \Big]. \label{split_criterion_cart}
\end{align}
\end{comment}

For each cell $A$, problem (\ref{split_criterion_cart}) can be rewritten as
\begin{align}
f^{\star} \in \underset{f \in \mathcal{P}_c}{\argmin}~
& \E\Big[  \big(Y- f(\X)\big)^2 \mathds{1}_{\X \in A}  \Big], \label{optimization_pb_cart} 
\end{align}
where $\mathcal{P}_c$ is the set of piecewise-constant functions on $A \cap \{x_j \leq s\}$ and $A \cap \{x_j > s\}$ for $(j,s) \in \mathcal{S}$. Therefore the optimization problem  (\ref{optimization_pb_cart}) amounts to solving a least square problem on the subclass of functions $\mathcal{P}_c$.
%$$\underset{j\in\llbracket 1,d\rrbracket,~z\in\R}{\mathrm{argmin}}~\E\left[\left(Y-\bar{Y}_{\mathcal A_z(X_j)}\right)^2\right],$$
%where $\mathcal A_z(X_j)$ is the cell (left or right) $X_j$ is sent to, and $\bar{Y}_{\mathcal A_z(X_j)}$ is the averaged response in this cell.
Thus, by minimizing the mean squared error, the CART procedure targets the quantity $\E[Y|\X]$.
In the presence of missing values, this criterion must be adapted and
several ways to do so have been proposed. Section~\ref{sec:availablecase} and Section~\ref{seq:MIA} detail the existing criteria that can be used when dealing with missing values. 

\subsection{Splitting criterion discarding missing values}
\label{sec:availablecase}

A simple option to extend CART methodology in presence of missing values
is to select the best split only on the available cases for each
variable. More precisely, for any node $A$, the best split in presence of
missing values is a solution of the new optimization problem
\begin{align}
 (j^{\star}, z^{\star}) \in \underset{(j,z) \in \mathcal{S}}{\argmin}~
\E\Big[&  \big(Y-\E[Y|X_j\leq z,  M_j = 0, \X \in A ]\big)^2 \cdot\mathds 1_{X_j\leq z, M_j = 0, \X \in A} \nonumber \\
&  + \big(Y-\E[Y|X_j> z,  M_j = 0, \X \in A]\big)^2 \cdot\mathds 1_{X_j>z,  M_j = 0, \X \in A} \Big],  \label{split_criterion_cart_missvalue}
%\\
%\text{where}\quad
%C(j, z) = 
%& \E\Big[  \big(Y-\E[Y|X_j\leq z, \X \in A, M_j = 0 ]\big)^2 \cdot\mathds 1_{X_j\leq z, \X \in A, M_j = 0} \nonumber \\
%&  + \big(Y-\E[Y|X_j> z, \X \in A, M_j = 0]\big)^2 \cdot\mathds 1_{X_j>z, \X \in A, M_j = 0} \Big].
%\nonumber
\end{align}
which is nothing but problem \eqref{split_criterion_cart} computed, for each $j \in \{1, \hdots , d\}$, on observed values ``$M_j=0$'' only. Note that, by convention, the problem \eqref{split_criterion_cart_missvalue} is optimised empirically along the variables $j$ that have at least two observed entries (we do not allow splits to be performed along variables that have less than two observed entries). This splitting strategy is described in Algorithm~\ref{alg-split-missdata}. As the missing values were not used to calculate the criterion, it is still necessary to specify to which cell they are sent. The solution consisting in discarding missing data at each step would lead to a drastic reduction of the data set and is therefore not viable. The different methods to propagate missing data down the tree are explained below.  

\begin{algorithm}[htbp]
  \small
  \caption{\small Splitting strategy}
  \label{alg-split-missdata}   
  \begin{algorithmic}[1]
    \STATE \textbf{Input:} a node $A$, a sample $\dataset_n$ of observations falling into $A$. 
    \STATE For each variable $ j \in \{1, \hdots , d\}$ and position $z$, compute the CART splitting criterion on observed values ``$M_j=0$'' only.
    \STATE Choose a split $(j^{\star},z^{\star})$ that minimizes the previous criterion (see optimization problem \ref{split_criterion_cart_missvalue}).
    \STATE     Split the cell $A$ accordingly. Two new cells $A_L$ and $A_R$ are created.  
    \STATE \textbf{Output:} Split $(j^{\star},z^{\star})$, $A_L$, $A_R$. 
  \end{algorithmic}
\end{algorithm}

%\begin{equation}\label{eq:avail}
%\underset{j\in\llbracket 1,d\rrbracket,~z\in\R}{\argmin}~\E\left[(Y-\E[Y|X_j\leq z, M_j=0]\cdot\mathds 1_{X_j\leq z, M_j=0}-\E[Y|X_j> z, M_j=0]\cdot\mathds 1_{X_j>z, M_j=0})^2 | M_j=0\right].
%\end{equation}
%All observations in the leaf need to be propagated down the tree, in order to fully partition the data. This requires a strategy, and there are three main options.

\paragraph*{Surrogate splits}
%In high dimension, it is likely that two different splits lead to the same data partition.
Once the best split is chosen via Algorithm~\ref{alg-split-missdata},
surrogate splits search for a split on another variable that induces a
data partition close to the original one. More precisely, for a selected
split $(j^{\star}_0, z^{\star}_0)$, to observations send
down the tree with no $j^{\star}_0$th variable, a new stump,
\textit{i.e.}, a tree with one cut, is fitted to the response
$\mathds{1}_{X_{j^{\star}_0}\leq z^{\star}_0}$, using variables
$(X_j)_{j\neq j^{\star}_0}$. The split $(j^{\star}_1, z^{\star}_1)$ which
minimizes the misclassification error is selected, and observations are
split accordingly. Those that lack both variables $j^{\star}_0$ and
$j^{\star}_1$ are split with the second best, $j^{\star}_2$, and so on
until the proposed split has a worse misclassification error than the
blind rule of sending all remaining missing values to the same daughter,
the most populated one. To predict, the training surrogates are kept.
This construction is described in Algorithm~\ref{alg-split-surrogate} and
is the default method in {\tt rpart} \citep{therneau1997introduction}.
Surrogate method is expected to be appropriate when there are
relationships between covariates. 

\begin{algorithm}[b!]
  \small
  \caption{\small Surrogate strategy}
  \label{alg-split-surrogate}   
  \begin{algorithmic}[1]
    \STATE \textbf{Input:} a node $A$, a sample $\dataset_n$ of observations falling into $A$, a split $(j_0^{\star}, z_0^{\star})$ produced by Algorithm~\ref{alg-split-missdata}
    \STATE Create the variable $\mathds{1}_{X_{j^{\star}_0}\leq z^{\star}_0}$  
    \STATE Let $\mathcal{J} = \{1, \hdots , d\} - \{j_0\}$
    \WHILE{Missing data have not been sent down the tree}
    \STATE Compute the misclassification error $\varepsilon^{\star}$ corrresponding to sending all remaining missing observation on the most populated side. 
    \FOR{all $j \in \mathcal{J}$}
        \STATE Fit a tree with one split along $j$, on data in $\mathcal{D}_n$ with observed values $X_j$ and $X_{j^{\star}_0}$, in order to predict $\mathds{1}_{X_{j^{\star}_0}\leq z^{\star}_0}$. 
        \STATE For $j, \in \mathcal{J}$, let $\varepsilon_j$ be the misclassification error of the previous trees
        \STATE Let $j_{min} \in \argmin_{j \in \mathcal{J}}~ \varepsilon_j$
    \ENDFOR 
    \IF{$\varepsilon_{j_{min}} < \varepsilon^{\star}$}
    \STATE Use the tree built on $j_{min}$ to send data with missing values on $j_0$ into $A_L$ or $A_R$, depending on the tree prediction
    \STATE $\mathcal{J} \leftarrow \mathcal{J} \cup \{j_{min}\}$
    \ELSE 
    \STATE Send all remaining missing values to the most populated cell ($A_L$ or $A_R$)
    \ENDIF
    \ENDWHILE
  \end{algorithmic}
\end{algorithm}

\paragraph*{Probabilistic splits}
Another option is to propagate missing observations according to a Bernoulli distribution $\mathcal B(\frac{\#L}{\#L+\#R})$, where $\#L$ (resp. $\#R$) is the number of points already on the left (resp. right) (see Algorithm~\ref{alg-probabilistic-split} in Appendix~\ref{subsec:app:algo}). This is the default method in C4.5 \citep{quinlan2014c4}.

\paragraph*{Block propagation} A third option is to choose the split on the
observed values, and then send all incomplete observations as a block, to
a side chosen by minimizing the error (see
Algorithm~\ref{alg-block-propagation} in Appendix~\ref{subsec:app:algo}).
This is the method used in LightGBM \citep{ke2017lightgbm}.\\

Note that \cite{Hothorn:2006:JCGS} proposed conditional trees, a variant of CART which also uses surrogate splits, but adapts the criterion \eqref{split_criterion_cart_missvalue} to missing values.
Indeed, this criterion implies a selection bias: it leads to
underselecting the variables with many missing values due to multiple
comparison effects \citep{StrBouAug:2007:CSDA}. As a result, it favors variables where many splits are available, and therefore those with
fewer missing values.
Conditional trees are based on the calculation of a linear statistic of
association between $Y$ and each feature $X_j$, $T=\langle X_j,Y\rangle$. Then,
its distribution under the null hypothesis of independence between $Y$ and $X_j$ is estimated by permutation, and the variable with the smallest $p$-value is selected.  As illustrated in Appendix \ref{sec:selection}, conditional trees are meant to improve the selection of the splitting variables 
but do not ensure an improvement in prediction performance. 

\subsection{Splitting criterion with missing values: MIA} 
\label{seq:MIA}

A second class of methods uses missing values to compute the
splitting criterion itself. Consequently, the splitting location depends
on the missing values, contrary to all methods presented in Section~\ref{sec:availablecase}.  Its most common instance is  ``missing incorporated in attribute" (MIA, \citealt{Twala:2008:GMC:1352941.1353228}, ). 
More specifically, MIA considers the following splits, for all splits $(j,z)$: 
\begin{itemize}
\item $\{\Xm_j\leq z ~\textrm{or}~  \Xm_j=\NA\}$ vs $ \{\Xm_j>z\}$,
    
\item $\{\Xm_j\leq z\}$ vs $\{\Xm_j>z ~\textrm{or}~ \Xm_j=\NA\}$,
    
\item  $\{\Xm_j\neq\NA\}$ vs $\{\Xm_j=\NA\}$.
\end{itemize}
In a nutshell, for each possible split, MIA tries to send all missing
values to the left or to the right, and compute for each choice the
corresponding error (right-hand side in \ref{split_criterion_cart}, as
well as the error associated to separating the observed values from
the missing ones. Finally, it chooses the split among the previous ones with the lowest error (see Algorithm~\ref{alg-MIA}  in Appendix~\ref{subsec:app:algo}). 
Note that block propagation can be seen as a greedy way of successively 
 optimizing the choices in two first options. However, as we show in 
Prop. \ref{prop_criterion}, these successive choices are sub-optimal.

Missing values are treated as a category by MIA, which is thus nothing but a greedy algorithm
  %it denies all form of regularity in the missing values: 
% they are simply distinct from real numbers which is appropriate to handle the space $\R\cup{\NA}$. It is a greedy algorithm 
minimizing the square loss between $Y$ and a function of $\bXm$ and consequently targets
the quantity \eqref{eq:2d_mod} which separate $\E[Y |\bXm]$  into $2^d$ terms. However, it is not exhaustive: at each step,
the tree can cut for each variable according to missing or non missing and selects this cut when it is relevant, \textit{i.e.} when it minimizes the prediction error. The final leaves can correspond to a cluster of missing values patterns (observations with missing values on the two first variables for instance and any missing patterns for the other variables).

MIA is thought to be a good method to apply when missing pattern is
informative, as this procedure allows to cut with respect to missing/ non
missing and uses missing data to compute the best splits. Note this
latter property implies that the MIA approach does not require a
different method to propagate missing data down the tree. Notably, MIA is
implemented in the R packages \texttt{partykit} \citep{partykit} and
\texttt{grf}  \citep{grf}, as well as in XGBoost \citep{chen2016xgboost} and 
for the {\tt HistGradientBoosting} models in scikit-learn \citep{scikit-learn}.

\begin{remark}{\bf Implementation:}\label{rem:mia}
A simple way to implement MIA consists in duplicating the incomplete columns, and replacing the missing entries once by $+\infty$ and once by $-\infty$ (or an extreme out-of-range value). This creates two dummy variables for each original one containing missing values. Splitting along a variable and sending all missing data to the left (for example) is the same as splitting along the corresponding dummy variable where missing entries have been completed by $-\infty$. 
Alternatively, MIA can be with two scans on a feature's values in
ascending and descending orders \citep[Alg 3]{chen2016xgboost}.
%MIA has been implemented in partykit \citep{partykit} but only for conditional trees.
\end{remark}

\begin{remark}{\bf Implicit imputation:}
Whether it is in the case where the missing values are propagated in the available case method (Section \ref{sec:availablecase}),
or incorporated in the split choice in MIA, missing values are assigned either to the left or  the right interval.
Consequently, handling missing values in a tree can be seen as implicit
imputation by an interval value. 
\end{remark}

%\begin{remark}{MIA in BART:}
% \cite{kapelner2015prediction} implemented MIA in BART by modifying their splitting rule so that it can represent $\mathcal{P}_{c, miss, L}$ and $\mathcal{P}_{c, miss, R}$ and concatenate the missing values indicator matrix $\M$ to  $\bXm$ to take into account $\mathcal{P}_{c, miss, sep}$.
%
%\end{remark}

\subsection{Theoretical comparison of CART versus MIA} \label{sec:comparbre}

%\subsubsection{CART versus MIA}

We now compare theoretically
the positions of the splitting point at the root and the prediction errors
on simple examples with MCAR values. Proposition \ref{prop_criterion}
computes the
splitting position of MIA and CART, and highlights that the
splitting position of MIA
varies even for MCAR missing data.
Proposition \ref{thm:comp_tree_theo}
then compares the risk of the different splitting strategies:
probabilistic split, block propagation, surrogate split, and MIA.
We prove that MIA and surrogate splits are the two best strategies, one of which may be better than the other depending on the dependence structure of  covariates. %Recall that $C(j,s)$ is the value of the splitting criterion at $(j,s) \in \mathcal{S}$ computed on the complete data only. 

\begin{restatable}{proposition}{propcrit}
\label{prop_criterion}
Let $p \in [0,1]$. Consider the regression model 
\begin{align*}
    \left\lbrace
    \begin{array}{llc}
       Y & = & X_1\\
       X_1 & \sim & U ([0,1])
    \end{array}
    \right. 
    ,\quad 
    \left\lbrace
    \begin{array}{llc}
       \P[M_1=0] & = & 1-p\\
       \P[M_1=1] & = & p
    \end{array}
    \right. ,
\end{align*}
where $M_1 \indep (X_1, Y)$ is the missingness pattern on $X_1$. Let $C_{\textrm{MIA}}(1, s,q,p)$  be the value of the splitting MIA criterion computed on $X_1$ at threshold $s$ such that $(1,s) \in \mathcal{S}$, and $q \in \{\textrm{L}, \textrm{R}\}$, where $q$ stands for the side where missing values are sent. Therefore,  
\begin{enumerate}
      \item The best split $s^{\star}$ given by the CART criterion \eqref{split_criterion_cart_missvalue} is $s^{\star}=1/2$.
    
    \item The best splits $s^{\star}_{\textrm{MIA}, L}(p)$ and $s^{\star}_{\textrm{MIA}, R}(p)$ given by the MIA procedure (described in Section~\ref{seq:MIA}), assuming that all missing values are sent to the left node (resp. to the right node), satisfy
    \begin{equation}\label{eq:propmia}
    s^\star_{\textrm{MIA}, L}(p) = \underset{s\in[0,1]}{\argmin}~
    C_{\textrm{MIA}}(1,s, \textrm{L},p),
    \end{equation}
where 
\begin{align*}
    C_{\textrm{MIA}}(1, s, \textrm{L},p) & = \frac{1}{3} - \frac{1}{p + (1-p)s}  \Big( \frac{p}{2} + \frac{(1-p) s^2}{2}\Big)^2  - (1-p)(1-s)\Big(\frac{1+s}{2}\Big)^2,
\end{align*}
and  $s^{\star}_{\textrm{MIA}, R}(p) = 1 - s^{\star}_{\textrm{MIA}, L}(p)$.
\end{enumerate}
\end{restatable}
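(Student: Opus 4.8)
The plan is to prove the two claims separately, both by direct computation of the relevant second-order quantities, using throughout the elementary identity $\E[(Y-\E[Y\mid A])^2\mathds 1_A] = \E[Y^2\mathds 1_A] - \E[Y\mathds 1_A]^2/\P[A] = \mathrm{Var}(Y\mid A)\,\P[A]$.

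\textbf{Part 1 (CART).} Since $M_1 \indep (X_1,Y)$, conditioning on $\{M_1=0\}$ does not change the law of $(X_1,Y)$, and the criterion \eqref{split_criterion_cart_missvalue} restricted to $j=1$ at the root equals $(1-p)$ times the fully-observed CART criterion \eqref{split_criterion_cart}; hence the two have the same minimizer (for $p<1$). With $Y=X_1\sim \mathcal U([0,1])$ one gets $\P[X_1\le s]=s$, $\mathrm{Var}(Y\mid X_1\le s)=s^2/12$ and $\mathrm{Var}(Y\mid X_1>s)=(1-s)^2/12$, so the criterion is $\big(s^3+(1-s)^3\big)/12$. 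Its derivative is $(2s-1)/4$, which vanishes only at $s=1/2$, where the second derivative $1/2>0$; hence $s^\star=1/2$.

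\textbf{Part 2 (MIA, missing sent left).} When missing values go to the left at threshold $s$, the left node is $L_s:=\{X_1\le s,\,M_1=0\}\cup\{M_1=1\}$ and the right node is $R_s:=\{X_1>s,\,M_1=0\}$. These two events partition the sample space, so $\E[Y^2\mathds 1_{L_s}]+\E[Y^2\mathds 1_{R_s}]=\E[Y^2]=\E[X_1^2]=1/3$, which accounts for the leading $1/3$ in the claimed formula. Using $M_1\indep X_1$ one computes $\P[L_s]=p+(1-p)s$, $\P[R_s]=(1-p)(1-s)$, and the partial first moments $\E[Y\mathds 1_{L_s}]=\E[X_1\mathds 1_{M_1=1}]+(1-p)\!\int_0^s\! x\,\diff x=\tfrac{p}{2}+\tfrac{(1-p)s^2}{2}$ and $\E[Y\mathds 1_{R_s}]=(1-p)\!\int_s^1\! x\,\diff x=\tfrac{(1-p)(1-s)(1+s)}{2}$. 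Substituting into the identity above and simplifying $\E[Y\mathds 1_{R_s}]^2/\P[R_s]$ to $(1-p)(1-s)\big(\tfrac{1+s}{2}\big)^2$ yields exactly the stated expression for $C_{\textrm{MIA}}(1,s,\textrm L,p)$, and \eqref{eq:propmia} is then simply the definition of the best left-sending MIA split.

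\textbf{Symmetry and main difficulty.} For $s^\star_{\textrm{MIA},\textrm R}(p)=1-s^\star_{\textrm{MIA},\textrm L}(p)$ I would use the reflection $x\mapsto 1-x$: it preserves $\mathcal U([0,1])$ and maps $Y=X_1$ to $1-Y$, leaving all conditional variances unchanged, while turning ``threshold $s$, missing right'' into ``threshold $1-s$, missing left''; hence $C_{\textrm{MIA}}(1,s,\textrm R,p)=C_{\textrm{MIA}}(1,1-s,\textrm L,p)$, and taking arg-minima gives the claim. There is no genuine obstacle here — the proposition asks only for the closed form of $C_{\textrm{MIA}}$, not for an explicit minimizer (which is a root of a cubic with no clean expression). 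The only point requiring care is the bookkeeping of which observations land in which node, in particular that the block $\{M_1=1\}$ is included both in the left node's moment $\E[Y\mathds 1_{L_s}]$ and in its mass $\P[L_s]$: this coupling is precisely what makes the MIA threshold depend on $p$, in contrast to the CART threshold of Part 1.
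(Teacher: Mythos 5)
Your proposal is correct and follows essentially the same route as the paper: both parts are proved by direct computation of the node probabilities and first/second partial moments, with the left/right MIA masses and means you obtain ($p+(1-p)s$, $\tfrac{p}{2}+\tfrac{(1-p)s^2}{2}$, etc.) matching the paper's, which works with the equivalent zero-imputed variable $X'=X\mathds 1_{M=0}$ and conditional expectations instead of partial moments. Your explicit reflection argument for $s^{\star}_{\textrm{MIA},\textrm{R}}(p)=1-s^{\star}_{\textrm{MIA},\textrm{L}}(p)$ fills in a step the paper dispatches with a one-line appeal to symmetry, but it is the same idea.
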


\begin{figure}[t!]
    \begin{minipage}{.5\linewidth}
    \caption{Split position chosen by  MIA and CART criterion, depending on the fraction $p$ of missing values on $X_1$.}%
    \label{fig:s_mia}
    \end{minipage}%
    \hfill%
    \begin{minipage}{.45\linewidth}
    \includegraphics[width=\linewidth]{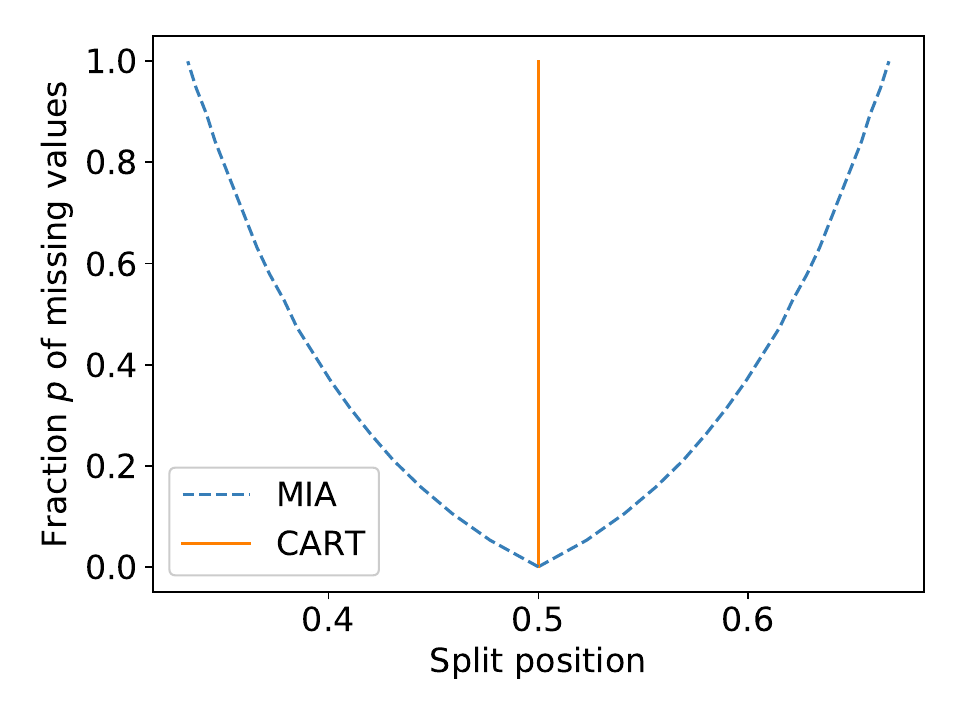}
    \end{minipage}
\end{figure}

The proof is given in Appendix \ref{apx:split}. Proposition \ref{prop_criterion} shows that the split given by optimizing the CART criterion does not depend on the percentage $p$ of missing values since the pattern is independent of $(X,Y)$. %and thus conditioning on observed values in (\ref{split_criterion_cart_missvalue}) has no effect.  
A numerical solution to equation \eqref{eq:propmia} is displayed in Figure \ref{fig:s_mia}. When there are no missing values ($p=0$), the split occur at $s=1/2$ as expected. 
When $p$ increases,  the threshold does not correspond anymore to the one
calculated using observed values only as it is influenced by the missing
entries  even in the MCAR setting. It may be surprising that the splitting position change in a MCAR setting but the missing values correspond to data from the whole interval $[0,1]$ and thus introduce noise in the side they are sent to. This implies that the cell that receives missing values must be bigger than usual so that the noise of the missing data is of the same order than the noise of original observations belonging to the cell. 
Recall that, since the threshold in MIA is chosen by taking into account missing values, it is straightforward to propagate a new element with missing values down the tree.

%Indeed, with MIA the threshold is selected as the one minimizing the prediction error. Hence MIA optimizes both the threshold and the side of the split on which it sends all the missing entries such that the prediction error is the smallest. This is important as it allows to propagate a new observation in the test set with missing values. 

%As $p$ increases, the block of missing values represents an increasing fraction of the data, where $Y$ is supposedly uniform because of the MCAR assumption, so in order to send them left, one has to put the splitting threshold further right, and in order to send them right, one has to put the splitting threshold further left.
%

Recall that the quadratic risk $R$ of a function $f^{\star}$ is defined as 
$R(f^{\star}) = \E [(Y - f^{\star}(X))^2]$.
Proposition \ref{thm:comp_tree_theo} enables us to compare the risk of a tree with a 
single split computed with the different strategies. It highlights that even in the simple case of MCAR, MIA gives more accurate predictions than block propagation or probabilistic split.

\begin{proposition}
\label{thm:comp_tree_theo}
Consider the regression model
\begin{align*}
    \left\lbrace
    \begin{array}{llc}
       Y & = & X_1\\
       X_1 & \sim & U ([0,1])\\ 
       X_2 & = & X_1 \mathds{1}_{W=1}
    \end{array}
    \right. 
    ,\quad 
    \left\lbrace
    \begin{array}{llc}
       \P[W=0] & = & \eta\\
       \P[W=1] & = & 1 - \eta 
    \end{array}
    \right. ,
    \quad 
    \left\lbrace
    \begin{array}{llc}
       \P[M_1=0] & = & 1-p\\
       \P[M_1=1] & = & p,
    \end{array}
    \right. ,
\end{align*}
where $(M_1, W) \indep (X_1, Y)$. The random variable $M_1$ is the pattern of missingness for $X_1$ and $W$ stands for the link between $X_1$ and $X_2$. Let $f^{\star}_{\textrm{MIA}}$, $f^{\star}_{\textrm{block}}$, $f^{\star}_{\textrm{prob}}$, $f^{\star}_{\textrm{surr}}$ be respectively, the theoretical prediction resulting from one split according to MIA, CART with block propagation and CART with probabilistic splitting strategy, and a single split, where missing data are handled via surrogate split (in the infinite sample setting). We have  
\begin{align*}
 R(f^{\star}_{\textrm{MIA}}) & =  \underset{s\in[0,1]}{\min}~C_{\textrm{MIA}}(1, s, \textrm{L}, p) \mathds{1}_{p \leq  \eta} +   \underset{s\in[0,1]}{\min}~C_{\textrm{MIA}}(1, s, \textrm{L}, \eta) \mathds{1}_{p > \eta}, \\
R(f^{\star}_{\textrm{block}}) & = C_{\textrm{MIA}}(1, 1/2, L, p) = C_{\textrm{MIA}}(1, 1/2, R, p)\\   R(f^{\star}_{\textrm{prob}}) & = - \frac{p^2}{16} + \frac{p}{8} + \frac{1}{48},\\
R(f^{\star}_{\textrm{surr}}) & = \frac{1}{48} + \frac{6}{48} \eta p.
\end{align*}
where $C_{\textrm{MIA}}(1, s, \textrm{L}, p)$ is defined in Proposition~\ref{prop_criterion}. In particular,
\begin{align*}
R(f^{\star}_{\textrm{MIA}}) \leq R(f^{\star}_{\textrm{block}}) \quad \textrm{and} \quad
R(f^{\star}_{\textrm{MIA}}) \leq R(f^{\star}_{\textrm{prob}}).
\end{align*}
\end{proposition}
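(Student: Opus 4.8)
The predictor produced by each of the four procedures is piecewise constant, equal on every leaf to the conditional mean of $Y$; since $Y=X_1\sim\mathcal U([0,1])$ so that $\E[Y^2]=1/3$, in all four cases
\begin{align*}
R(f)=\E\big[\mathrm{Var}(Y\mid\text{leaf})\big]=\frac13-\sum_{\ell}\P[\ell]\,\big(\E[Y\mid\ell]\big)^2 ,
\end{align*}
the sum running over the (at most two) leaves. The quantity $C_{\textrm{MIA}}$ of Proposition~\ref{prop_criterion} has a convenient reading here: $C_{\textrm{MIA}}(1,s,\textrm{L},a)$ is exactly this $R$ for a depth-one tree whose left leaf collects the observed points with $X_1\le s$ together with a ``noise block'' of mass $a$ carrying an independent copy of $Y\sim\mathcal U([0,1])$, the right leaf being the observed points with $X_1>s$ (and $C_{\textrm{MIA}}(a,1/2)$ in the statement means $C_{\textrm{MIA}}(1,1/2,\textrm{L},a)$). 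The plan is: (1) identify the root split of each procedure; (2) read off the leaves, their masses and conditional means, and compute $R$; (3) conclude the two inequalities by a short chain.

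\emph{Root split.} For block propagation, probabilistic splits and surrogate splits the root split is chosen by the available-case criterion \eqref{split_criterion_cart_missvalue}. On $X_1$ this criterion sees only the observed $X_1\sim\mathcal U([0,1])$ with $Y=X_1$, so by Proposition~\ref{prop_criterion} it picks the cut at $1/2$, with objective value $(1-p)/48$ (mass $1-p$ of observed points times the within-sum-of-squares $1/48$ of a perfect $1/2$-split of $\mathcal U([0,1])$). On $X_2=X_1\mathds 1_{W=1}$ it sees an atom of mass $\eta$ at $0$ that forbids a perfect fit; the best split on $X_2$ has value $\min_s C_{\textrm{MIA}}(1,s,\textrm{L},\eta)\ge 1/48\ge(1-p)/48$, so all three procedures split at the root on $(X_1,1/2)$. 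MIA, instead, evaluates MIA-type splits on both variables: on $X_1$, the ``$\NA$ to the left'' and ``$\NA$ to the right'' families are mirror images and realise $\min_s C_{\textrm{MIA}}(1,s,\textrm{L},p)$, while the split $\{X_1=\NA\}$ vs $\{X_1\ne\NA\}$ only gives $\mathrm{Var}(Y)=1/12$; on $X_2$ the atom at $0$ is always routed left, so a threshold split on $X_2$ is structurally a $C_{\textrm{MIA}}$-type split with noise mass $\eta$, the best being $\min_s C_{\textrm{MIA}}(1,s,\textrm{L},\eta)$. Since $a\mapsto\min_s C_{\textrm{MIA}}(1,s,\textrm{L},a)$ is nondecreasing (the explainable variance shrinks as the noise mass grows), MIA splits on $X_1$ when $p\le\eta$ and on $X_2$ otherwise, giving the two-case formula for $R(f^{\star}_{\textrm{MIA}})$.

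\emph{The three remaining risks.} Block propagation, probabilistic splits and surrogate splits share the root split $(X_1,1/2)$ and differ only in the fate of the missing-$X_1$ observations (mass $p$, carrying $Y\sim\mathcal U([0,1])$). \emph{Block:} they all go to one daughter; by the symmetry $X_1\mapsto 1-X_1$, $Y\mapsto 1-Y$ the two choices yield the same within-node error, so the tree is the one realising $C_{\textrm{MIA}}(1,1/2,\textrm{L},p)$ and $R(f^{\star}_{\textrm{block}})=C_{\textrm{MIA}}(p,1/2)$. \emph{Probabilistic:} in the infinite-sample limit the Bernoulli parameter $\#L/(\#L+\#R)$ tends to $\P[X_1\le 1/2\mid M_1=0]=1/2$, so each missing observation is sent left or right by an independent fair coin; the two leaves then have mass $1/2$ and conditional means $(1+p)/4$ and $(3-p)/4$, whence $R(f^{\star}_{\textrm{prob}})=-p^2/16+p/8+1/48$. \emph{Surrogate:} the surrogate stump is fitted on $X_2$ over $\{M_1=0\}$ to predict $\mathds 1_{X_1\le 1/2}$; its optimal threshold is $1/2$ (misclassification rate $\eta/2$, beating the blind ``most-populated-side'' rule of rate $1/2$), so a missing-$X_1$ observation goes left iff $X_2\le 1/2$, which is correct when $W=1$ and wrong on half of the $\{W=0\}$ observations — it misroutes the fraction $p\eta$; writing out the two resulting leaves yields the stated $R(f^{\star}_{\textrm{surr}})$.

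\emph{Inequalities.} The $X_1$-split at $1/2$ with all $\NA$ sent left is among MIA's candidates, so
\begin{align*}
R(f^{\star}_{\textrm{MIA}})\ \le\ C_{\textrm{MIA}}(1,1/2,\textrm{L},p)=R(f^{\star}_{\textrm{block}}) .
\end{align*}
Moreover $C_{\textrm{MIA}}(1,1/2,\textrm{L},p)=\tfrac{1+7p}{48(1+p)}$, so $R(f^{\star}_{\textrm{block}})\le R(f^{\star}_{\textrm{prob}})$ is, after clearing the positive denominator $1+p$, equivalent to $0\le 3p^2(1-p)$, true on $[0,1]$. Chaining the two gives $R(f^{\star}_{\textrm{MIA}})\le R(f^{\star}_{\textrm{block}})\le R(f^{\star}_{\textrm{prob}})$, hence both claims. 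The bookkeeping of leaf masses/means and the algebra are routine; the delicate points I would expect are (i) justifying the root split — comparing available-case objectives across variables with different observed masses, and proving (via the monotonicity of $\min_s C_{\textrm{MIA}}$ in its noise-mass argument) that MIA prefers $X_2$ exactly when $p>\eta$; (ii) the surrogate-stump analysis, i.e. showing its optimal threshold is $1/2$ and tracking which observations it misroutes; and (iii) the infinite-sample limit underlying the probabilistic rule (convergence of the empirical left/right proportions).
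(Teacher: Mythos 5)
Your proof follows essentially the same route as the paper's: identify the root split of each method by expressing the relevant criteria through $C_{\textrm{MIA}}(1,s,\textrm{L},\cdot)$ (with the atom of $X_2$ at $0$ playing the role of a noise block of mass $\eta$), invoke the monotonicity of $\alpha\mapsto\min_s C_{\textrm{MIA}}(1,s,\textrm{L},\alpha)$ to decide which variable is split --- the paper asserts this monotonicity without proof exactly as you do --- and then compute leaf masses and means for each propagation rule. Your treatment of $R(f^{\star}_{\textrm{block}})\le R(f^{\star}_{\textrm{prob}})$ is in fact cleaner than the paper's: the closed form $C_{\textrm{MIA}}(1,1/2,\textrm{L},p)=\tfrac{1+7p}{48(1+p)}$ is correct and reduces the inequality to $3p^2(1-p)\ge 0$, whereas the paper studies the sign of a function $h$ through its derivatives (using, incidentally, a misprinted expression for the block risk). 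Your derivation of $R(f^{\star}_{\textrm{MIA}})\le R(f^{\star}_{\textrm{block}})$ by observing that the block tree is one of MIA's candidate splits is also the right argument, left implicit in the paper.

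The one genuine problem is the surrogate case. Your opening claim --- that every predictor is constant on each leaf and equal to the conditional mean of $Y$ given that leaf --- is incompatible with the stated formula $R(f^{\star}_{\textrm{surr}})=\tfrac{1}{48}+\tfrac{6}{48}\eta p$. That formula corresponds to keeping the leaf values $1/4$ and $3/4$ fitted on the complete cases and merely routing the missing observations through the surrogate: the paper computes $\E[(X_1-0.25)^2\mathds 1_{X_2<0.5}+(X_1-0.75)^2\mathds 1_{X_2\ge 0.5}]$ on $\{M_1=1\}$. If you instead recompute the conditional mean of $Y$ on each leaf after routing, as your general risk formula prescribes (and as the paper itself does for block and probabilistic propagation), you obtain $C_{\textrm{MIA}}(1,1/2,\textrm{L},p\eta)=\tfrac{1+7p\eta}{48(1+p\eta)}$, which is strictly smaller than $\tfrac{1+6p\eta}{48}$ whenever $p\eta>0$. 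So ``writing out the two resulting leaves'' under your own convention does not yield the stated value; you need to adopt, and state, the fixed-prediction convention for this method. A small related slip: the mass actually sent to the wrong side is $p\eta/2$ (the points in $\{M_1=1,\,W=0,\,X_1>1/2\}$), not $p\eta$, which is the mass routed blindly to the left. The rest of your surrogate analysis (optimal surrogate threshold at $1/2$, misclassification rate $\eta/2$ beating the blind rule) matches the paper.
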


\begin{figure}[h!]
    \centering
    \includegraphics[width=\textwidth]{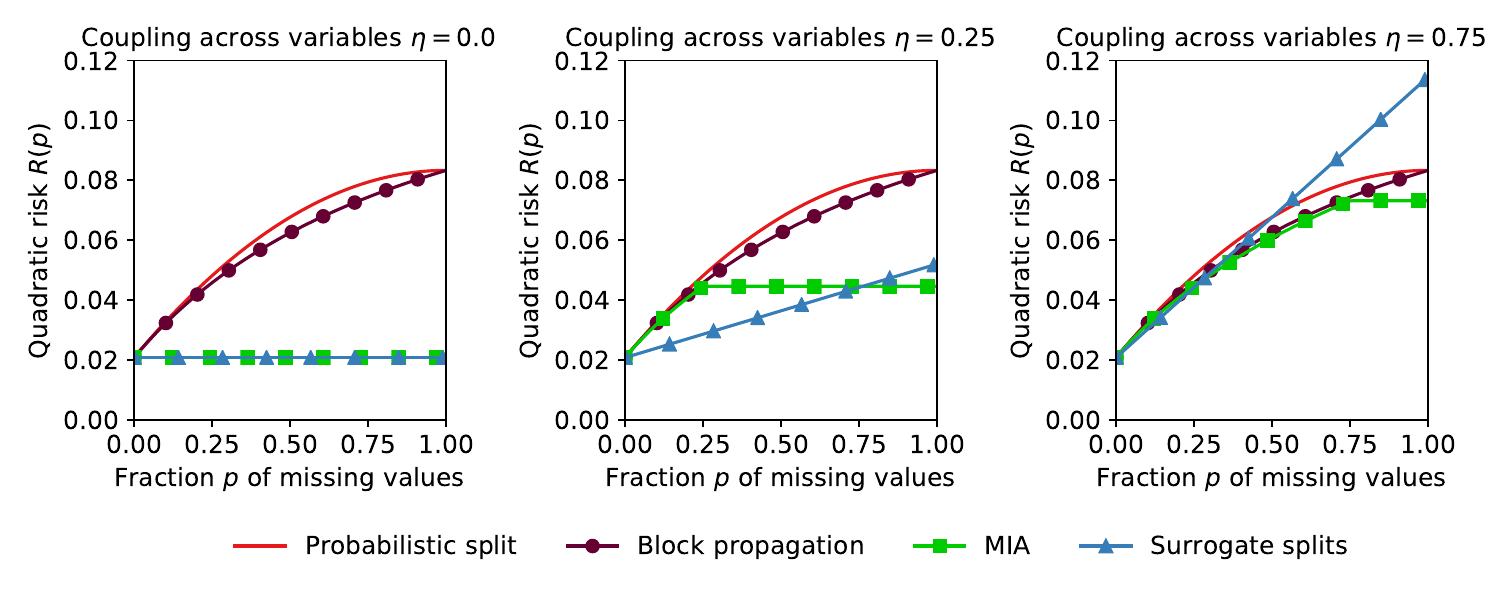}
    \caption{Theoretical risk of the splitting methods as a function
of $p$, for three values of $\eta$ parameter that controls the amount of
coupling between $X_1$ and $X_2$ in the model of Proposition~\ref{thm:comp_tree_theo}.}
    \label{fig:exp2}
\end{figure}

Proof is given in Appendix \ref{apx:split}.
Figure \ref{fig:exp2} depicts the risk of each estimate, in the context
of proposition \ref{thm:comp_tree_theo}, resulting from a split computed via one of the four methods described above. Only  surrogate and MIA risks depend on the value $\eta$ which measures the independence between $X_1$ and $X_2$. As proved, the risk of probabilistic split and block propagation is larger than that of MIA. Besides, surrogate split is better than MIA if the link between $X_1$ and $X_2$ is strong (small values of $\eta$) and worse if this link is weak (high values of $\eta$).

\subsection{Splitting algorithms}
\label{subsec:app:algo}

\begin{algorithm}[htbp]
  \small
  \caption{\small Probabilistic split}
  \label{alg-probabilistic-split}   
  \begin{algorithmic}[1]
    \STATE \textbf{Input:} a node $A$, a sample $\dataset_n$ of observations falling into $A$, a split $(j_0^{\star}, z_0^{\star})$ produced by Algorithm~\ref{alg-split-missdata}
    \STATE Compute the number $n_L$ of points  with observed $X_{j_0^{\star}}$ falling into $A_L$.
    \STATE Compute the number $n_R$ of points  with observed $X_{j_0^{\star}}$ falling into $A_R$.
    \FOR{all data with missing value along $j_0^{\star}$}
    \STATE Send the data randomly to $A_L$ (resp. $A_R$) with probability $n_L/(n_L + n_R)$ (resp. $n_R/(n_L + n_R)$)
    \ENDFOR 
  \end{algorithmic}
\end{algorithm}

\begin{algorithm}[htbp]
  \small
  \caption{\small Block propagation}
  \label{alg-block-propagation}   
  \begin{algorithmic}[1]
    \STATE \textbf{Input:} a node $A$, a sample $\dataset_n$ of observations falling into $A$, a split $(j_0^{\star}, z_0^{\star})$ produced by Algorithm~\ref{alg-split-missdata}
    \STATE Consider sending all observations with missing values on $j_0^{\star}$ into $A_L$. Compute the corresponding error (criterion on the right-hand side in \ref{split_criterion_cart})
    \STATE Consider sending all observations with missing values on $j_0^{\star}$ into $A_R$. Compute the corresponding error (criterion on the right-hand side in \ref{split_criterion_cart}).
    \STATE Choose the alternative with the lowest error and send all missing data on the same side accordingly. 
\end{algorithmic}
\end{algorithm}

\begin{algorithm}[htbp]
  \small
  \caption{\small Missing Incorporated in Attribute (MIA)}
  \label{alg-MIA}   
  \begin{algorithmic}[1]
    \STATE \textbf{Input:} a node $A$, a sample $\dataset_n$ of observations falling into $A$
    \FOR{each split $(j,z)$}
    \STATE Send all observations with missing values on $j$ on the left side. Compute the error $\varepsilon_{j,z,L}$ (right-hand side in \ref{split_criterion_cart})
    \STATE Send all observations with missing values on $j$ on the right side. Compute the error $\varepsilon_{j,z,R}$ (right-hand side in \ref{split_criterion_cart}) 
    \ENDFOR 
    \FOR{each $j \in \{1, \hdots, d\}$}
    \STATE Compute the error $\varepsilon_{j, - , - }$ associated to separating observations with missing data on $j$ from the remaining ones. 
    \ENDFOR
    \STATE Choose the split corresponding to the lowest error  $\varepsilon_{\cdot, \cdot,\cdot}$. Split the data accordingly. 
\end{algorithmic}
\end{algorithm}

\subsection{Proof of Proposition \ref{prop_criterion}}

\textbf{Cart splitting criterion.}
Under the model given in Proposition \ref{prop_criterion}, simple calculations show that 
\begin{align*}
    \E[Y|X\in [0,s]] & = \frac{s}{2}, \quad 
    \E[Y^2 | X \in [0,s]] = \frac{s^2}{3}\\
    \E[Y | X \in [s,1]]  & = \frac{1+s}{2}, \quad 
    \E[Y^2|X\in [s,1]] = \frac{1 - s^3}{3(1-s)} \\
    \P[X \in [0,s]] & = s, \quad    \P[X \in [s,1]] = 1- s.
\end{align*}
Thus the CART spltting criterion can be written as
\begin{align*}
C(1,s) = ~& \E[Y^2] - (\P[X \in [0,s]](\E[Y|X\in [0,s]])^2+
    \P[X \in [s,1]] (\E[Y | X \in [s,1]])^2 )\\
    = ~& \frac{1}{3} - \Big( s \left( \frac{s}{2}\right)^2 + (1-s) \left( \frac{1+s}{2}\right)^2\Big)\\
    = ~& \frac{s (s-1)}{4}  + \frac{1}{12}.
\end{align*}
By definition, 
\begin{align*}
s^\star
&=\underset{s\in[0,1]}{\mathrm{argmin}}~\left( \frac{1}{4} s (s-1) + \frac{1}{12} \right) = 1/2,
\end{align*}
and the criterion evaluated in $s=1/2$ is equal to $1/48$. The calculations are exactly the same when a percentage of missing value is added if $M_1 \indep X_1$.

\textbf{MIA splitting criterion.} By symmetry, we can assume than missing values are sent left. It is equivalent to observing
$$
X' = 0 \mathds{1}_{M=1} + X \mathds{1}_{M=0}.
$$
The MIA splitting criterion is then defined as 
\begin{align*}
s^\star_{\textrm{MIA}, \textrm{L}}
&=\underset{s\in[0,1]}\argmin~
\E\left[\left(Y
-\E[Y|X'\leq s]\mathds 1_{X'\leq s}
-\E[Y|X'>s]\mathds 1_{X'>s}\right)^2\right] \\
&=\underset{s\in[0,1]}\argmin~
\mathbb P(X'\leq s)\E\left[\left(Y
-\E[Y|X'\leq s]\right)^2\middle| X'\leq s\right] \\
& \quad +
\mathbb P(X'> s)\E\left[\left(Y
-\E[Y|X'> s]\right)^2\middle| X'>s\right].
\end{align*}

We have
\begin{align*}
\mathds{E}[Y | X' \in [0,s] ] & = \mathds{E}[X | X' \in [0,s] ] \\
& = \mathds{E}[X \mathds{1}_{M=1} + X \mathds{1}_{M=0} | X' \in [0,s] ] \\
& = \frac{1}{\mathbb P[X' \in [0,s]]} \mathds{E}[X \mathds{1}_{M=1, X' \in [0,s]} + X \mathds{1}_{M=0, X' \in [0,s]} ] \\
& = \frac{1}{p + (1-p)s} \Big( \frac{p}{2} + \frac{(1-p) s^2}{2}\Big).
\end{align*}
Besides,
\begin{align*}
\mathds{E}[Y^2 | X' \in [0,s] ] & = \mathds{E}[X^2 | X' \in [0,s] ] \\
& = \mathds{E}[X^2 \mathds{1}_{M=1} + X^2 \mathds{1}_{M=0} | X' \in [0,s] ] \\
& = \frac{1}{p + (1-p)s} \mathds{E}[X^2 \mathds{1}_{M=1, X' \in [0,s]} + X^2 \mathds{1}_{M=0, X' \in [0,s]} ] \\
& = \frac{1}{p + (1-p)s} \Big( \frac{p}{3} + \frac{(1-p)s^3}{3}\Big) \\
\end{align*}
Thus the left-part of the criterion is given by
\begin{align*}
& \mathbb P(X' \in [0,s]) \E [(Y - \E[Y | X' \in [0,s]] )^2 |X' \in [0,s]] \\
& = \Big(p + (1-p)s \Big) \Big(\E[Y^2 | X' \in [0,s]] - (\E[Y | X' \in [0,s]])^2\Big)\\
& = \Big(p + (1-p)s \Big) \Big(\frac{1}{p + (1-p)s} \Big( \frac{p}{3} + \frac{(1-p) s^3}{3}\Big) \\
& \quad - \Big(\frac{1}{p + (1-p)s} \Big( \frac{p}{2} + \frac{(1-p) s^2}{2}\Big) \Big)^2\Big)\\
 &= \Big( \frac{p}{3} + \frac{(1-p) s^3}{3}\Big) - \frac{1}{p + (1-p)s}  \Big( \frac{p}{2} + \frac{(1-p) s^2}{2}\Big)^2
\end{align*}

\bigskip

On the other hand, we have
\begin{align*}
\mathds{E}[Y | X' \in [s,1] ] & = \mathds{E}[X | X' \in [s,1] ] \\
& = \mathds{E}[X \mathds{1}_{M=1} + X \mathds{1}_{M=0} | X' \in [s,1] ] \\
& = \frac{1}{(1-p)(1-s)} \mathds{E}[X \mathds{1}_{M=1, X' \in [s,1]} + X \mathds{1}_{M=0, X' \in [s,1]} ] \\
& = \frac{1}{(1-p)(1-s)} \Big( (1-p) \frac{1-s^2}{2} \Big)\\
& = \frac{1+s}{2}.
\end{align*}
Besides,
\begin{align*}
\mathds{E}[Y^2 | X' \in [s,1] ] & = \mathds{E}[X^2 | X' \in [s,1] ] \\
& = \mathds{E}[X^2 \mathds{1}_{M=1} + X^2 \mathds{1}_{M=0} | X' \in [s,1] ] \\
& = \frac{1}{(1-p)(1-s)} \mathds{E}[X^2 \mathds{1}_{M=1, X' \in [s,1]} + X^2 \mathds{1}_{M=0, X' \in [s,1]} ] \\
& = \frac{1}{(1-p)(1-s)} \Big( (1-p) \frac{1 - s^3}{3}\Big) \\
& = \frac{1- s^3}{3(1-s)}.
\end{align*}
Thus the right-part of the criterion is given by
\begin{align*}
& \mathbb P(X' \in [s,1]) \E [(Y - \E[Y | X' \in [s,1]] )^2 |X' \in [s,1]] \\
& = \Big((1-p)(1-s)\Big) \Big(\E[Y^2 | X' \in [s,1]] - (\E[Y | X' \in [s,1]])^2\Big)\\
& = \Big((1-p)(1-s)\Big) \Big(\frac{1- s^3}{3(1-s)} - (\frac{1+s}{2})^2\Big)\\
&= (1-p)\frac{1- s^3}{3} - (1-p)(1-s)\Big(\frac{1+s}{2}\Big)^2.\\
\end{align*}
Finally,
\begin{align*}
s^\star_{\textrm{MIA}, \textrm{L}} = \underset{s\in[0,1]}{\argmin}~
 \Bigg\lbrace \Big( & \frac{p}{3} + \frac{(1-p) s^3}{3}\Big) - \frac{1}{p + (1-p)s}  \Big( \frac{p}{2} + \frac{(1-p) s^2}{2}\Big)^2 \\
 & +
(1-p)\frac{1- s^3}{3} - (1-p)(1-s)\Big(\frac{1+s}{2}\Big)^2\Bigg\rbrace,
\end{align*}
which concludes the proof.

\subsection{Proof of proposition \ref{thm:comp_tree_theo}}

\textbf{Probabilistic and block propagation}. 
First, note that the variable $ X_2  = X_1 \mathds{1}_{W=1}$ is similar to the variable studied for the computation of the MIA criterion in Proposition~\ref{prop_criterion}.  Therefore, the value of the CART splitting criterion along the first variable is $C_{\textrm{MIA}}(1, 1/2, \textrm{L}, 0)$ and its value along the second variable is $C_{\textrm{MIA}}(2, s^{\star}_{\textrm{MIA}, \textrm{L}}, \textrm{L}, \eta)$. Since the function 
$$\alpha \mapsto C_{\textrm{MIA}}(\cdot, s^{\star}_{\textrm{MIA}, \textrm{L}}, \textrm{L},  \alpha)$$ is increasing, splitting along the first variable leads to the largest variance reduction. Thus, for probabilistic and block propagation, splits occur along the first variable. Let us now compare the value of these criteria. We have
\begin{align*}
    \P[X_1 \leq 1/2] = \P[X_1 \geq 1/2] = 1/2.
\end{align*}
The quantities related to the left cell are given by 
\begin{align*}
    \E[Y | X_1\leq 1/2] & = \frac{p+1}{4} \quad \textrm{and} \quad \E[Y^2  | X_1\leq 1/2]  = \frac{p}{4} + \frac{1}{12}.
\end{align*}
The quantities related to the right cell are given by 
\begin{align*}
    \E[Y | X_1\geq 1/2] & = \frac{3-p}{4} \quad \textrm{and} \quad     \E[Y^2  | X_1\geq 1/2]  = \frac{7}{12} - \frac{p}{4}.
\end{align*}
Thus, the value of the criterion satisfies
\begin{align*}
    R(f^{\star}_{\textrm{prob}}) &= \frac{1}{2} \Big( \frac{p}{4} + \frac{1}{12} - \left(\frac{p+1}{4} \right)^2\Big)\\
    & \quad + \frac{1}{2} \Big( \frac{7}{12} - \frac{p}{4} - \left(\frac{3-p}{4} \right)^2\Big)\\
    & = - \frac{p^2}{16} + \frac{p}{8} + \frac{1}{48}.
\end{align*}

Let, for all $p \in [0,1]$, 
\begin{align*}
    h(p) & = R(f^{\star}_{\textrm{prob}}) - R(f^{\star}_{\textrm{block}})\\
    & = - \frac{p^2}{16} + \frac{p}{8} + \frac{1}{48} - \left( - \frac{11}{48} + \frac{1}{8} \frac{3p+2}{2p+1}\right)\\
    & = - \frac{p^2}{16} + \frac{p}{8} + \frac{1}{16} - \frac{1}{16} \frac{1}{2p+1}.
\end{align*}
We have, 
\begin{align*}
    h'(p) = - \frac{p}{8} + \frac{1}{8} +  \frac{1}{8(2p+1)^2},
\end{align*}
and consequently, 
\begin{align*}
    h''(p)= - \frac{1}{8} - \frac{1}{2(2p+1)^3}.
\end{align*}
An inspection of the variation of $h$ reveals that $h(p) \geq 0$ for all $p \in [0,1]$, which concludes the first part of the proof.  

\medskip 

\textbf{MIA.} As noticed above, the criterion computed along the second variable is given by 
\begin{align*}
    C_{\textrm{MIA}}(2, s^{\star}_{\textrm{MIA}, \textrm{L}}, \textrm{L}, \eta)
\end{align*}
Since the function $$\alpha \mapsto C_{\textrm{MIA}}(\cdot, s^{\star}_{\textrm{MIA}, \textrm{L}}, \textrm{L},  \alpha)$$ is increasing, MIA split will occur along the first variable if $p \leq \eta$ and along the second variable if $p \geq \eta$. Therefore, the risk of the MIA splitting procedure is given by 
\begin{align*}
    R(f^{\star}_{\textrm{MIA}}) & =  \underset{s\in[0,1]}{\min}~C_{\textrm{MIA}}(1, s, \textrm{L}, p) \mathds{1}_{p \leq \eta} +   \underset{s\in[0,1]}{\min}~C_{\textrm{MIA}}(1, s, \textrm{L}, \eta) \mathds{1}_{p > \eta}.
\end{align*}

\textbf{Surrogate split.}
Consider the model $Y = X_1$ and $X_2 = X_1 \mathds{1}_{W=1}$, where 
$\P[W = 0] = \eta$.
Let us determine the best split along $X_2$ to predict $Z = \mathds{1}_{X_1 < 0.5}$. Since $\{ X_2 \leq s \} = \{X_1 \leq s, W = 1\} \cup \{ W=0\}$, and $\{X_2 > s\} = \{ X_1 > s, W = 1\}$, 
\begin{align*}
\P[X_2 \leq s ] & = s (1-p) + p \quad \textrm{and} \quad \P[X_2 > s ] = (1-s) (1-p).
\end{align*}
Consequently, 
\begin{align*}
\E[Z | X_2 \leq s ] & = \frac{\E[\mathds{1}_{X_1 \leq 0.5, X_2 \leq s}]}{\P[X_2 \leq s ]}\\
& = \frac{1}{s(1-p) +p} \E[\mathds{1}_{X_1 \leq 0.5, X_1 \leq s, W=1}+ \mathds{1}_{X_1 \leq 0.5, W=0}]\\
& =  \frac{1}{s(1-p) +p} \big[ (1-p) \min(0.5, s) + \frac{p}{2} \big].
\end{align*}

\begin{align*}
\E[Z | X_2 \geq s ] & = \frac{\E[\mathds{1}_{X_1 \leq 0.5, X_2 > s}]}{\P[X_2 > s ]}\\
& = \frac{1}{(1-s)(1-p)} \P[X_1 \leq 0.5, W=1, X_1 \geq s]\\
& =  \frac{(0.5 - s)_+}{1-s}.
\end{align*}
Besides, note that $\E[Z^2] = \P[X_1 \leq 0.5] = 0.5$. Therefore, the splitting criterion to predict $\mathds{1}_{X_1 \leq 0.5}$ with $X_2$ is given by 
\begin{align*}
f(s) & = \frac{1}{2} - \P[X_2 \leq s ] ( \E[Z | X_2 \leq s ])^2 - \P[X_2 > s] (\E[Z | X_2 > s])^2\\
& = \frac{1}{2} - \frac{1}{s(1-p) + p} \Big( (1-p) \min(0.5, s) + \frac{p}{2} \Big)^2 - \frac{1-p}{1-s} ((0.5 - s)_+)^2.
\end{align*}
For $s \geq 1/2$, 
\begin{align*}
    h(s) & = \frac{1}{2} - \frac{1}{4 (s(1-p) +p)},
\end{align*}
which is minimal for $s=1/2$. For $s \leq 1/2$, 
\begin{align*}
h(s) & = \frac{1}{2} - \frac{1}{4} \left( \frac{p^2}{p + s(1-p)} + \frac{1-p}{1-s} \right).
\end{align*}
Hence, 
\begin{align*}
h'(s) & = - \frac{1-p}{4} \frac{(1-2p) s^2 + 2ps}{(1-s)^2 (s(1-p) + p)^2}.
\end{align*}
Let $g(s) = (1-2p)s^2 + 2ps$. If $p \leq 1/2$, the solutions of $g(s)=0$ are negative, thus, $g(s)\geq 0$ for all $s \in [0,1/2]$ and thus the minimum of $h$ is reached at $s=1/2$.
If $p \geq 1/2$, one solution of $g(s)=0$ is zero and the other is $s = 2p/(2p-1) > 1$. Thus,  $g(s)\geq 0$ for all $s \in [0,1/2]$ and the minimum of $h$ is reached at $s=1/2$.
Finally, the minimum of $h$ is reached at $s=1/2$. 
The risk of the surrogate estimate is then given by
\begin{align*}
R(f^{\star}_{\textrm{surr}}) &= \E[(Y- f^{\star}_{\textrm{surr}}(\bX))^2] \\
& = \E[(Y- f^{\star}_{\textrm{surr}}(\bX))^2 \mathds{1}_{M_1=0} + (Y- f^{\star}_{\textrm{surr}}(\bX))^2 \mathds{1}_{M_1=1}].
\end{align*}
Here, 
\begin{align*}
& \E[(Y - f^{\star}_{\textrm{surr}}(\bX))^2 | M_1=1]\\
= ~& \E[(X_1 - 0.25)^2 \mathds{1}_{X_2 < 0.5} + (X_1 - 0.75)^2 \mathds{1}_{X_2 \geq 0.5}]\\
= ~& \eta \E[(X_1 - 0.25)^2] + (1 - \eta) \E[(X_1 - 0.25)^2 \mathds{1}_{X_1 \leq 0.5}] \\
& \quad + ( 1- \eta) \E[(X_1 - 0.75)^2 \mathds{1}_{X_1 > 0.5}]\\
= ~& \frac{1}{48} + \frac{6\eta }{48}.
\end{align*}
Finally, 
\begin{align*}
R(f^{\star}_{\textrm{surr}}) & = \frac{1-p}{48} + p \Big( \frac{1}{48} + \frac{6\eta }{48} \Big) = \frac{1}{48} + \frac{6}{48} \eta p.
\end{align*}

\section{Miscellaneous}

\subsection{Variable selection properties of the tree methods with missing values} \label{sec:selection}

Decision trees based on the CART criterion (implemented in the R library rpart) and on  conditional trees (implemented in the the R library partykit) lead to different ways of selecting splitting variables. We illustrate this behaviour on the simple following model:

\begin{align*}
\left\{
    \begin{array}{rl}
        X_1\indep X_2 &\sim \mathcal N(0,1) \\
        \varepsilon &\sim \mathcal N(0, 1) \\
        Y &= 0.25 X_1 + \varepsilon.
    \end{array}
\right.
\end{align*}

We insert MCAR  values, either on the first variable or on both variables. Stumps (decision trees of depth one) are fit on 500 Monte-Carlo repetitions. We vary the sample size and the percentage of missing values. Figure \ref{fig:ratio_mcariid} show that CART and conditional trees give similar results when there are missing values on both variables. However, Figure \ref{fig:ratio_mcarx1} shows that CART has a tendency to underselect $X_1$ when there are missing values only on $X_1$. For instance, for a sample of size 50 with 75\% missing values, CART selects the non-informative variable $X_2$ more frequently than $X_1$, while conditional trees keep selecting $X_1$ more often.

\begin{figure}[hbtp]
    \centering
    \begin{subfigure}{0.35\textwidth}
        \includegraphics[width=\textwidth]{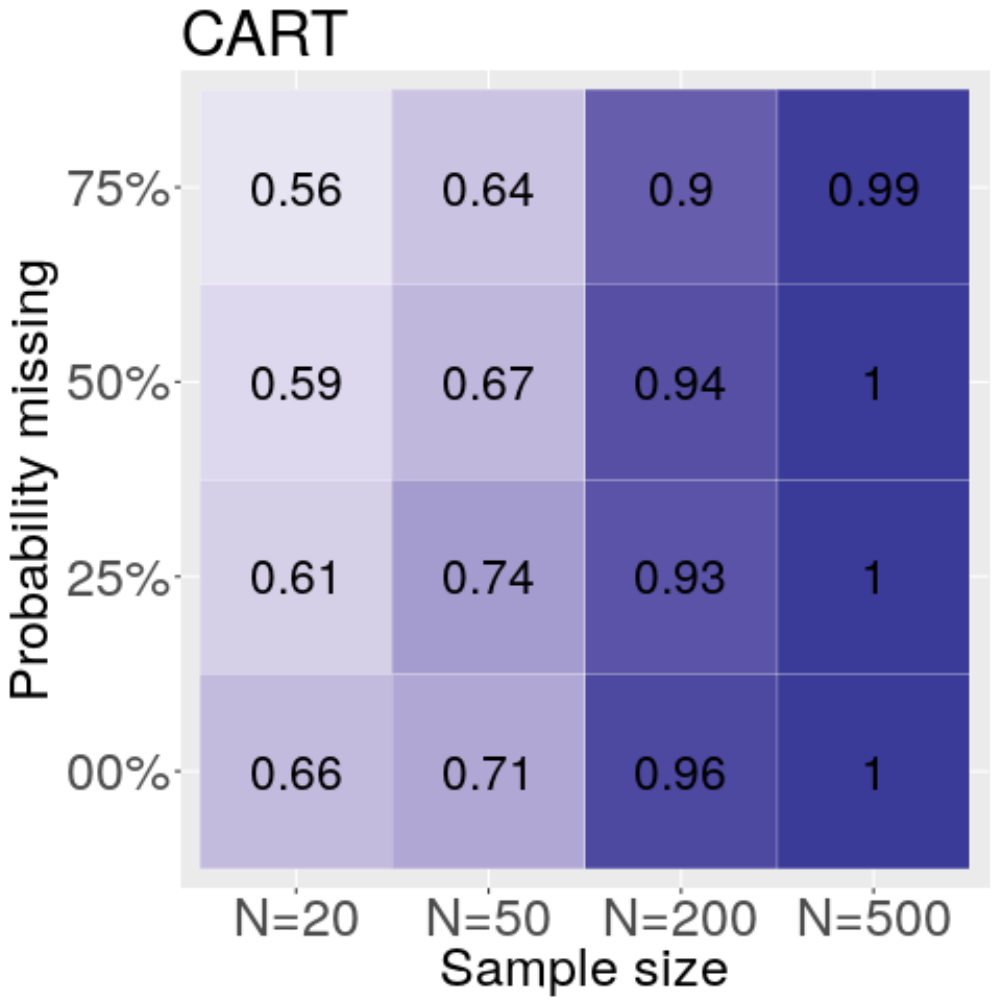}
        \caption{CART}
    \end{subfigure}%
    \begin{subfigure}{0.35\textwidth}
        \includegraphics[width=\textwidth]{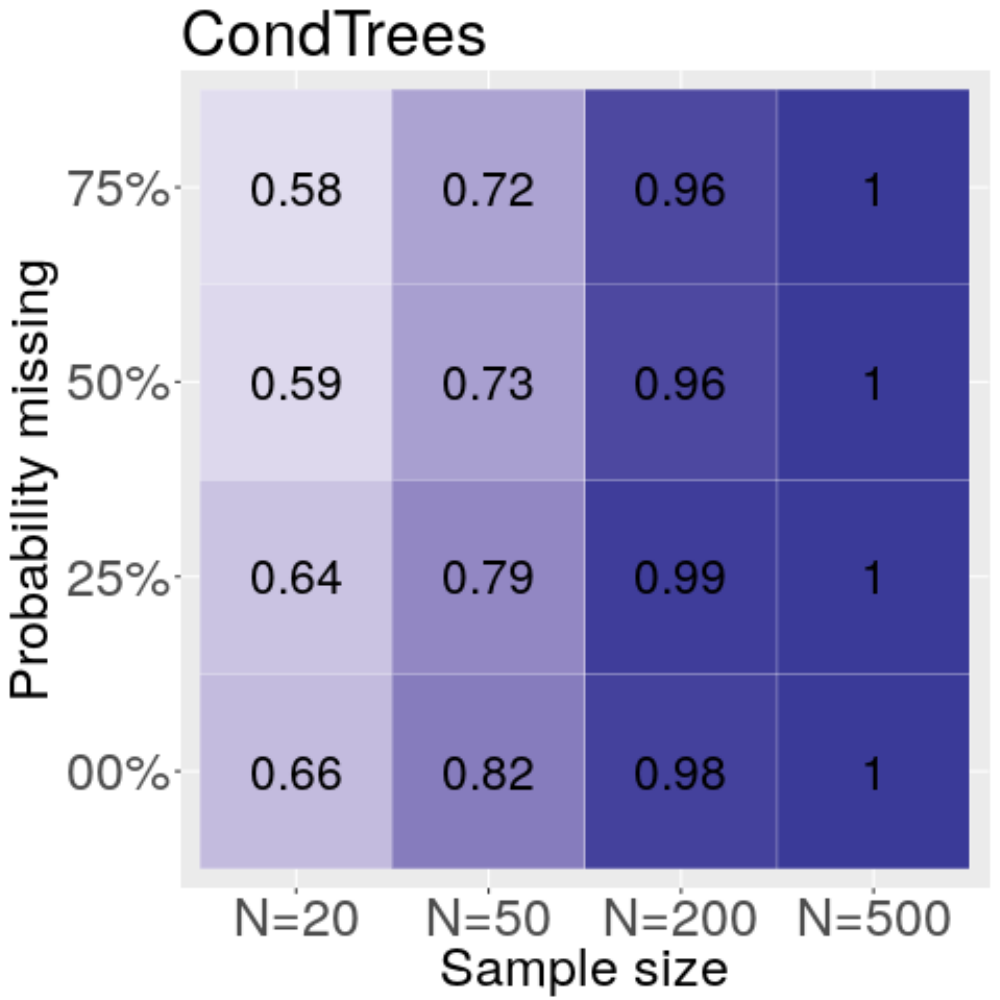}
        \caption{Conditional trees}
    \end{subfigure}
    \caption{Frequency of selection of $X_1$ when there are missing values on $X_1$ and $X_2$}
    \label{fig:ratio_mcariid}
\end{figure}

%\clearpage

\begin{figure}[hbtp]
    \centering
    \begin{subfigure}{0.35\textwidth}
        \includegraphics[width=\textwidth]{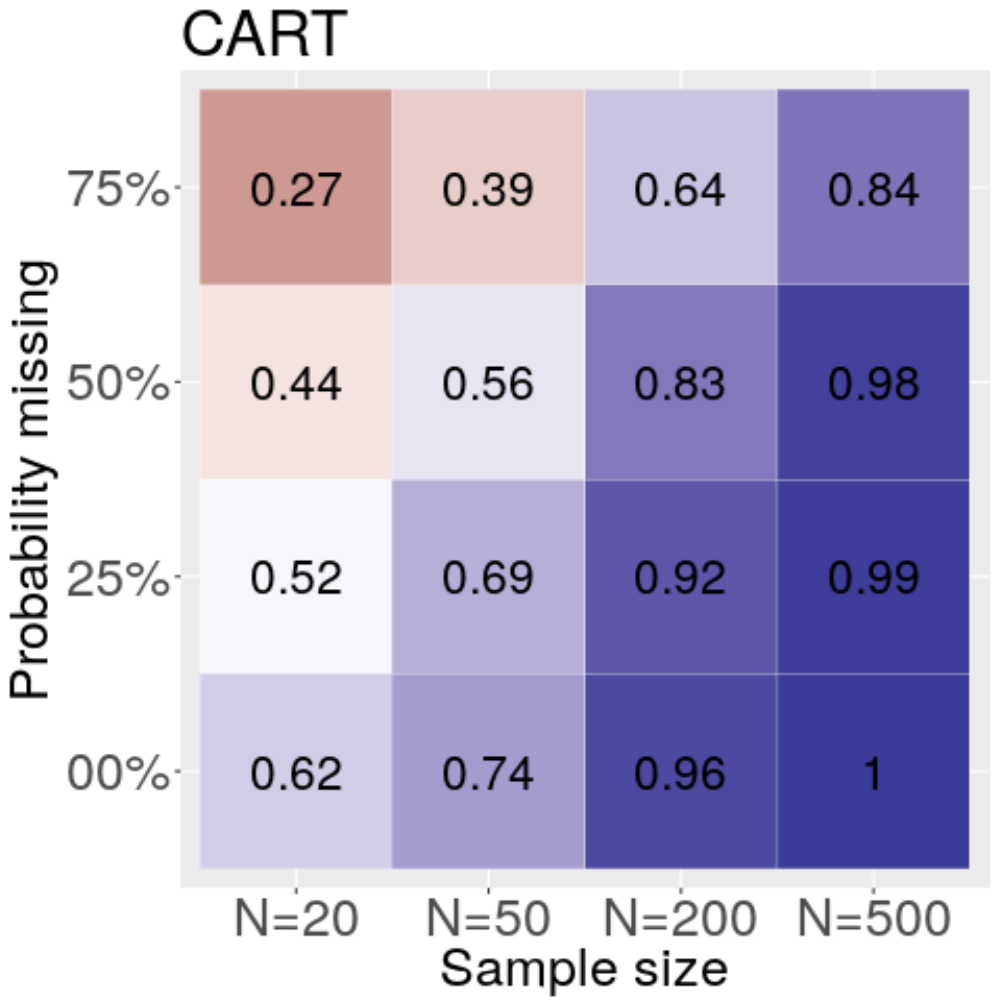}
        \caption{CART}
    \end{subfigure}%
    \begin{subfigure}{0.35\textwidth}
        \includegraphics[width=\textwidth]{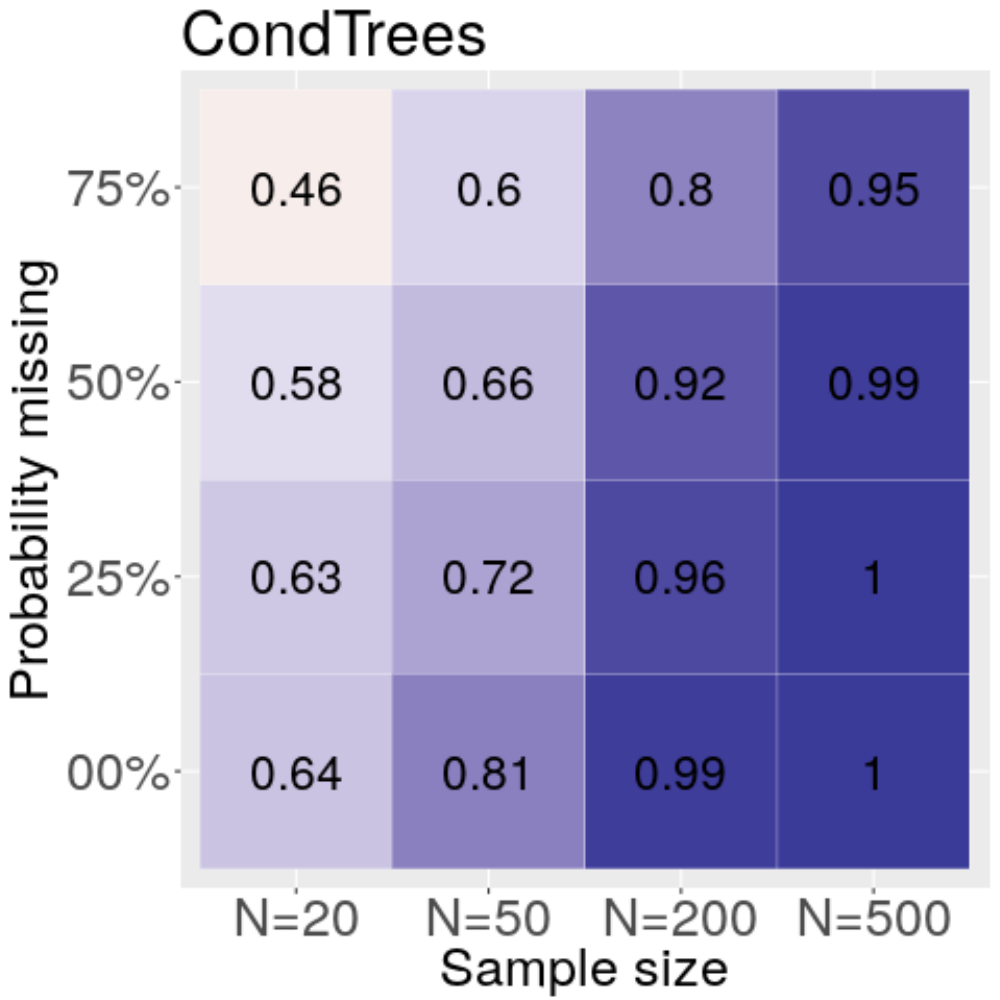}
        \caption{Conditional trees}
    \end{subfigure}
    \caption{Frequency of selection of $X_1$ when there are missing values on $X_1$ only}
    \label{fig:ratio_mcarx1}
\end{figure}

\subsection{Example of EM algorithm}\label{sec:exem}

Let us consider a simple case of $n$ observations $(\mathbf x_1, \mathbf x_2)=(x_{i1},x_{i2})_{1\leq i\leq n}$ sampled from the distribution of $(X_1,X_2)$, a bivariate Gaussian distribution with parameters $(\mu, \Sigma)$. We assume that $X_2$ is subjected to missing values and that only $r$ values are observed. The aim is to get the maximum likelihood estimates of $(\mu, \Sigma)$ from the incomplete data set. The algorithm described below can be straightforwardly extended to the multivariate case. 
Note that from $(\hat \mu, \hat \Sigma)$, it is then possible to directly estimate the parameters of a linear regression model and thus to perform linear regression with missing values. 

We denote by $f_{1,2}(\mathbf x_1,\mathbf x_2;\mu, \Sigma)$, $f_1(\mathbf x_1;\mu_1, \sigma_{11})$ and $f_{2|1}(\mathbf x_2|\mathbf x_1; \mu, \Sigma)$, respectively, the probability of joint distribution of $(X_1,X_2)$, marginal distribution of $X_1$ and conditional distribution of $X_2|X_1$. The joint distribution of observed data can be decomposed as:
\[
f_{1,2}(\mathbf x_1,\mathbf x_2;\mu, \Sigma)=\prod_{i=1}^n f_1(x_{i1};\mu_1, \sigma_{11})\prod_{j=1}^r f_{2|1}(x_{j2}|x_{j1}; \mu, \Sigma),
\]

and the observed log-likelihood is written (up to an additional constant that does not appear in the maximization and that we therefore drop):

$$\ell(\mu, \Sigma;\mathbf x_1,\mathbf x_2)=
-\frac{n}{2}\log(\sigma_{11}^2)-\frac{1}{2}\sum_{i=1}^n\frac{(x_{i1}-\mu_1)^2}{\sigma_{11}^2}-\frac{r}{2}\log \left((\sigma_{22}-\frac{\sigma_{12}^2}{\sigma_{11}})^2\right)$$
$$-\frac{1}{2}\sum_{i=1}^r\frac{\left(x_{i2}-\mu_2-\frac{\sigma_{12}}{\sigma_{11}}(x_{i1}-\mu_1)\right)^2}{(\sigma_{22}-\frac{\sigma_{12}^2}{\sigma_{11}})^2}$$

We skip the computations and directly give the expression of the closed form maximum likelihood estimates of the mean:
$$  {\hat{\mu}}_1=n^{-1}\sum_{i=1}^nx_{i1} $$
$$  \hat{\mu}_2=\hat{\beta}_{20.1}+\hat{\beta}_{21.1}\hat{\mu}_1,$$
where
$$\hat{\beta}_{21.1}=s_{12}/s_{11}, \quad \hat{\beta}_{20.1}=\bar{x}_2-\hat{\beta}_{21.1}\bar{x}_1,$$ $$\bar{x}_j=r^{-1}\sum_{i=1}^rx_{ij} \quad \text{and} \quad s_{jk}=r^{-1}\sum_{i=1}^r(x_{ij}-\bar{x}_j)(x_{ik}-\bar{x}_k), \quad j,k=1,2.$$

In this simple setting, we have an explicit expression of the maximum likelihood estimator despite missing values. However, this is not always the case but it is possible to use an EM algorithm to get the maximum likelihood estimators in the cases where data are missing.

The EM algorithm consists in maximizing the observed likelihood through successive maximization of the complete likelihood (if we had observed all $n$ realizations of $\mathbf x_1$ and $\mathbf x_2$). Maximizing the complete likelihood 
$$\ell_c(\mu, \Sigma;\mathbf x_1,\mathbf x_2)=-\frac{n}{2}\log\left(\det(\Sigma)\right)-\frac{1}{2}\sum_{i=1}^n(x_{i1}-\mu_1)^T\Sigma^{-1}(x_{i1}-\mu_1)$$

would be straightforward if we had all the observations. However elements
of this likelihood are not available. Therefore, we replace them by the conditional expectation given observed data and the parameters of the current iteration. These two steps of computation of the conditional expectation (E-step) and maximization of the completed likelihood (M step) are repeated until convergence. 
The update formulas for the E and M steps are as follows:

\textbf{E step}:
The sufficient statistics of the likelihood are:  
$$s_1=\sum_{i=1}^nx_{i1},\quad s_2=\sum_{i=1}^nx_{i2},\quad s_{11}=\sum_{i=1}^nx_{i1}^2,\quad s_{22}=\sum_{i=1}^nx_{i2}^2,\quad s_{12}=\sum_{i=1}^nx_{i1}x_{i2}.$$

Since some values of $\mathbf \mathbf x_2$ are not available, we fill in the sufficient statistics with:
$$\E[x_{i2}|x_{i1};\mu,\Sigma]=\beta_{20.1}+\beta_{21.1}x_{i1}$$
$$\E[x_{i2}^2|x_{i1};\mu,\Sigma]=(\beta_{20.1}+\beta_{21.1}x_{i1})^2+\sigma_{22.1}$$
$$\E[x_{i2}x_{i2}|x_{i1};\mu,\Sigma]=(\beta_{20.1}+\beta_{21.1}x_{i1})x_{i1}.$$
with,
 $\beta_{21.1}=\sigma_{12}/\sigma_{11}$, $\beta_{20.1}=\mu_2-\beta_{21.1}\mu_1$, and $\sigma_{22.1}=\sigma_{22}-\sigma^2_{12}/\sigma_{11}$.
 
\textbf{M step}: 
The M step consists in computing the maximum likelihood estimates as usual. 
Given $s_1, s_2, s_{11}, s_{22}, \text{and } s_{12},$ update $\hat{\mu}$ and $\hat{\sigma}$ with
$$\hat{\mu}_1=s_1/n\text{, }\hat{\mu}_2=s_2/n,$$
$$\hat{\sigma}_1=s_{11}/n-\hat{\mu}_1^2\text{, }\hat{\sigma}_2=s_{22}/n-\hat{\mu}_2^2\text{, }\hat{\sigma}_{12}=s_{12}/n-\hat{\mu}_1\hat{\mu}_2$$

Note that $s_1$, $s_{11}$, $\hat{\mu}_1$ and $\hat{\sigma}_1$ are constant across iterations since we do not have missing values on $\mathbf x_1$.

\begin{remark}
Note that EM imputes the sufficient statistics and not the data.
\end{remark}

\clearpage

\section{Additional experiments}
\label{sec:appendix_additional_experiments}

\subsection{Varying the correlation strength and the missing rate}

\begin{figure}[H]
    \centering
    \includegraphics[scale=0.45]{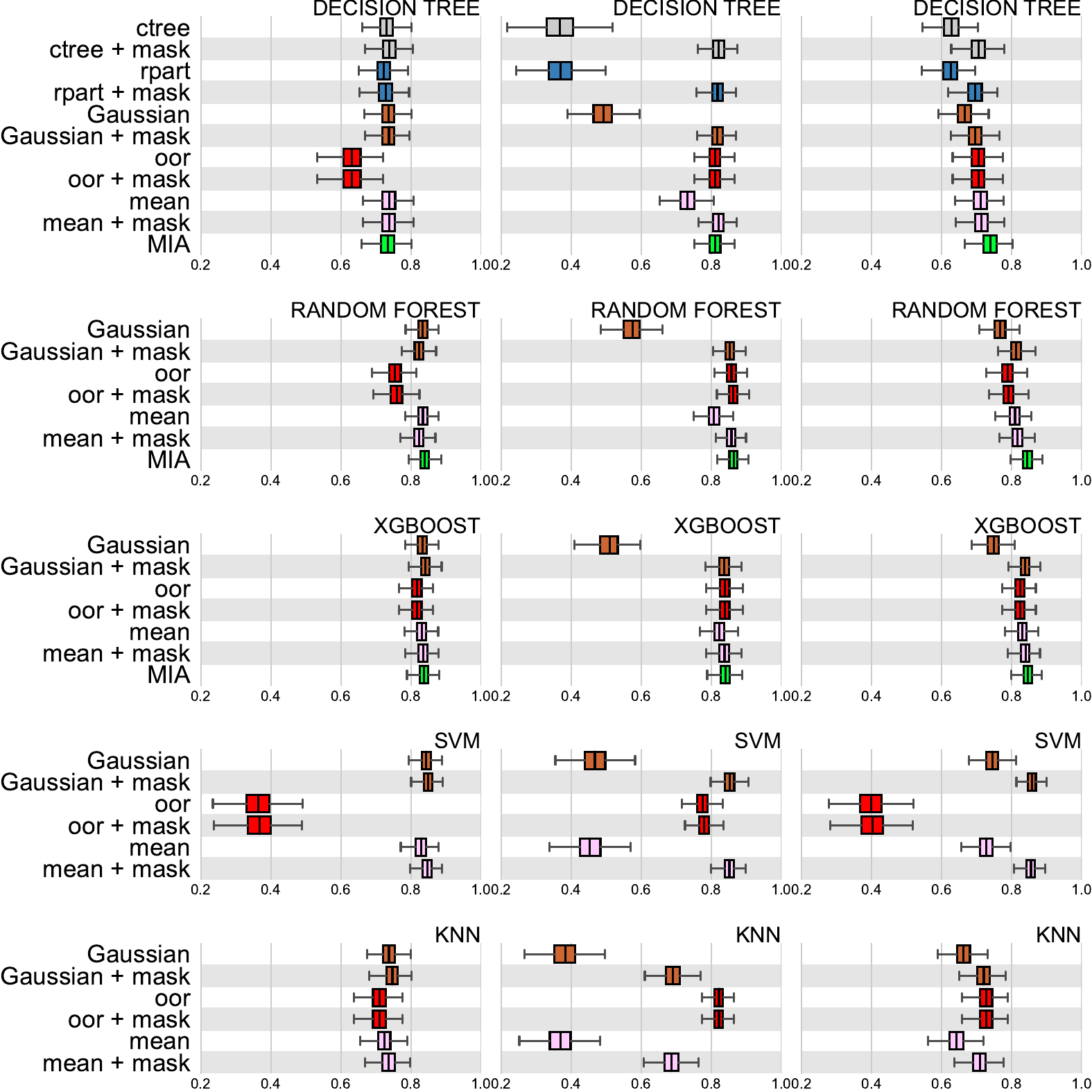}
    \llap{\raisebox{1.00\linewidth}{%
    {\sffamily%
    \hspace{.4\textwidth}MCAR
    \hspace{.2\textwidth}MNAR
    \hspace{.2\textwidth} Predictive M
    \hspace*{.1\textwidth}}}}

    \caption{\textbf{$R^2$ scores on model 1 $\bullet$} \modif{Normalized}
explained variance for different mechanisms with 20\% of missing values,
$n=1000$, $d=9$ and $\rho = 0.2$.}
    \label{fig:simu_mcar_nonlinear_multivar_miss2_rho2}
\end{figure}
\begin{figure}[H]
    \centering
    \includegraphics[scale=0.45]{miss2_rho5/boxplot_grid.pdf}
    \llap{\raisebox{1.00\linewidth}{%
    {\sffamily%
    \hspace{.4\textwidth}MCAR
    \hspace{.2\textwidth}MNAR
    \hspace{.2\textwidth} Predictive M
    \hspace*{.1\textwidth}}}}

    \caption{\textbf{$R^2$ scores on model 1 $\bullet$} \modif{Normalized}
explained variance for different mechanisms with 20\% of missing values,
$n=1000$, $d=9$ and $\rho = 0.5$.}
    \label{fig:simu_mcar_nonlinear_multivar_miss2_rho5_appendix}
\end{figure}
\begin{figure}[H]
    \centering
    \includegraphics[scale=0.45]{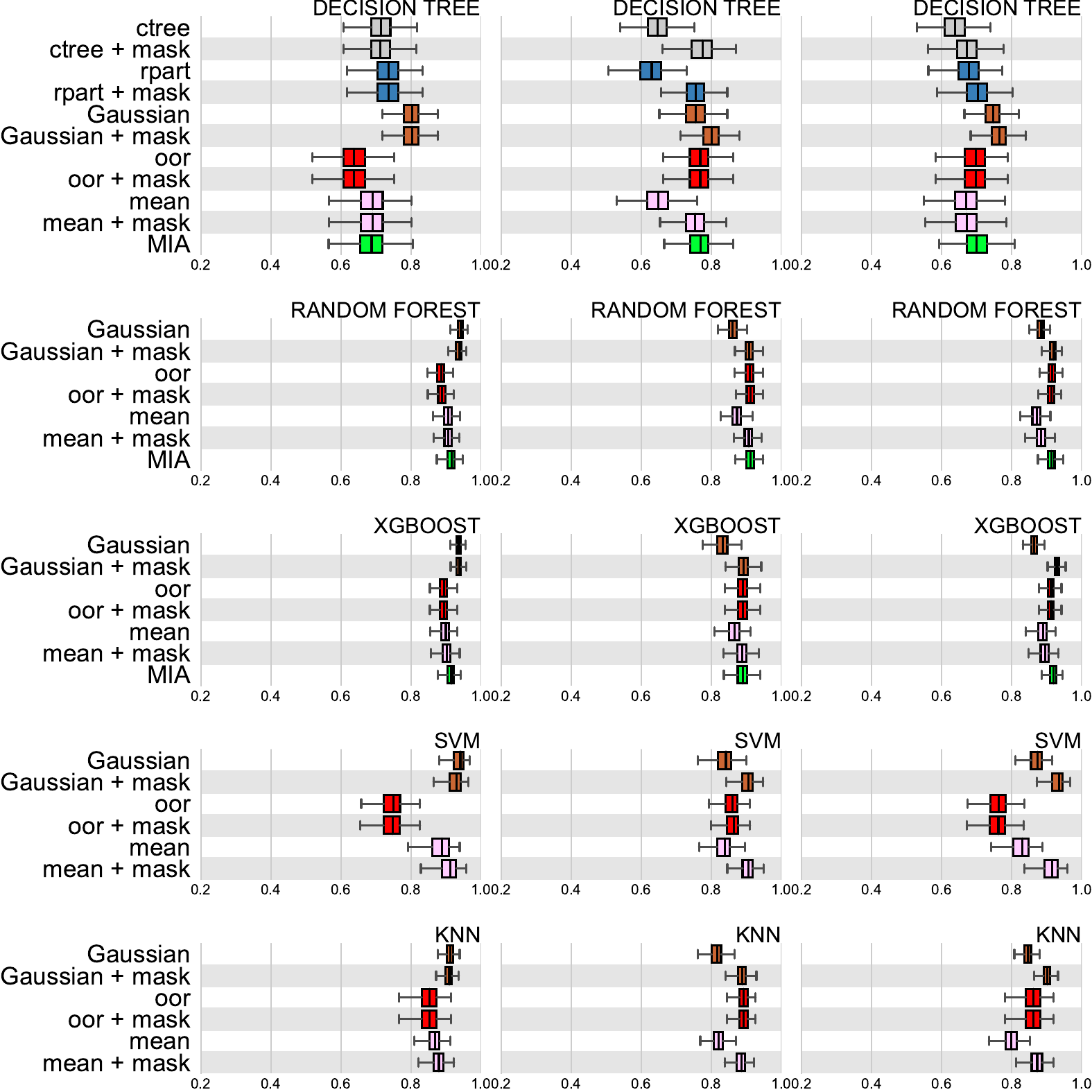}
    \llap{\raisebox{1.00\linewidth}{%
    {\sffamily%
    \hspace{.4\textwidth}MCAR
    \hspace{.2\textwidth}MNAR
    \hspace{.2\textwidth} Predictive M
    \hspace*{.1\textwidth}}}}

    \caption{\textbf{$R^2$ scores on model 1 $\bullet$} \modif{Normalized}
explained variance for different mechanisms with 20\% of missing values,
$n=1000$, $d=9$ and $\rho = 0.8$.}
    \label{fig:simu_mcar_nonlinear_multivar_miss2_rho8}
\end{figure}
\begin{figure}[H]
    \centering
    \includegraphics[scale=0.45]{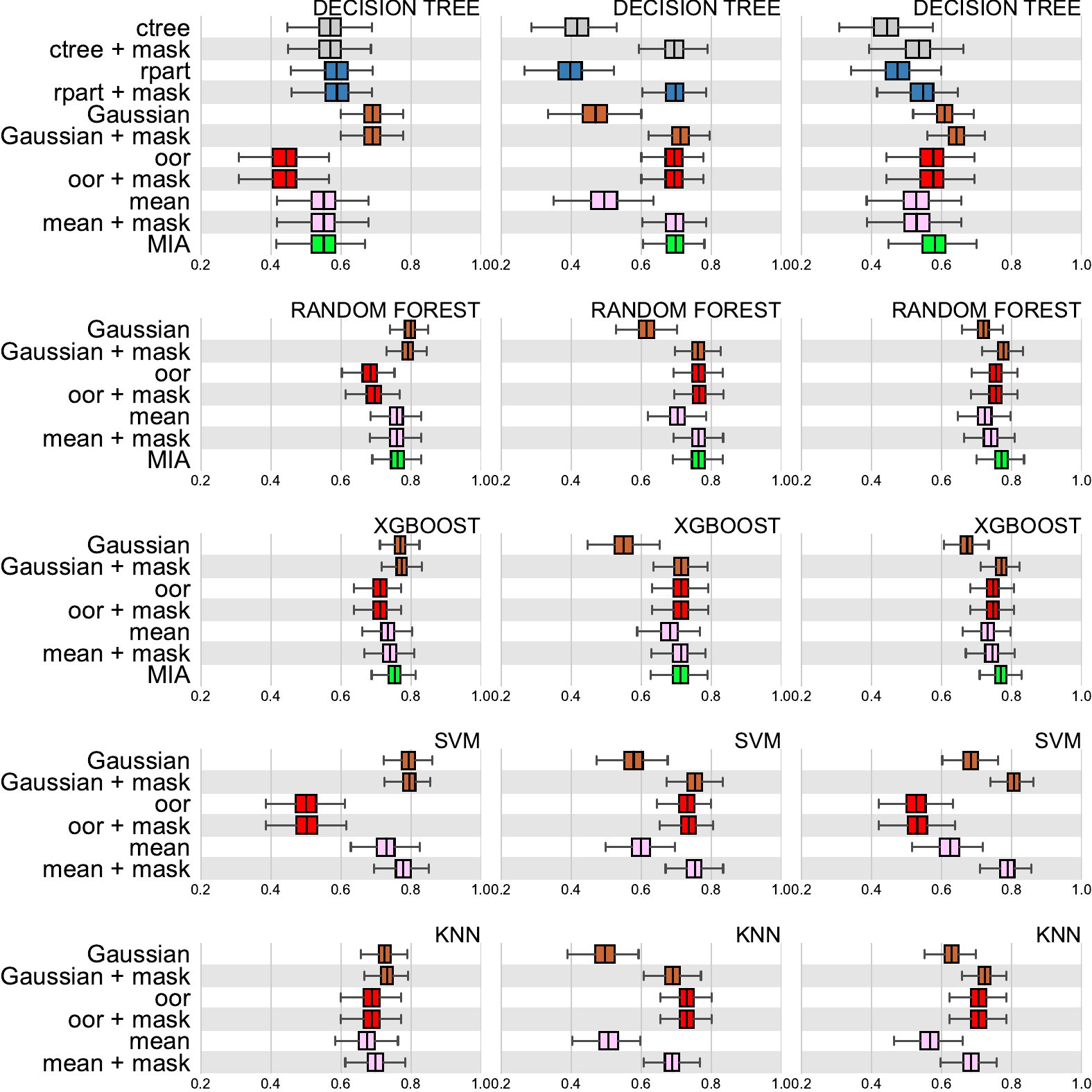}
    \llap{\raisebox{1.00\linewidth}{%
    {\sffamily%
    \hspace{.4\textwidth}MCAR
    \hspace{.2\textwidth}MNAR
    \hspace{.2\textwidth} Predictive M
    \hspace*{.1\textwidth}}}}

    \caption{\textbf{$R^2$ scores on model 1 $\bullet$} \modif{Normalized}
explained variance for different mechanisms with 40\% of missing values,
$n=1000$, $d=9$ and $\rho = 0.5$.}
    \label{fig:simu_mcar_nonlinear_multivar_miss4_rho5}
\end{figure}
\begin{figure}[H]
    \centering
    \includegraphics[scale=0.45]{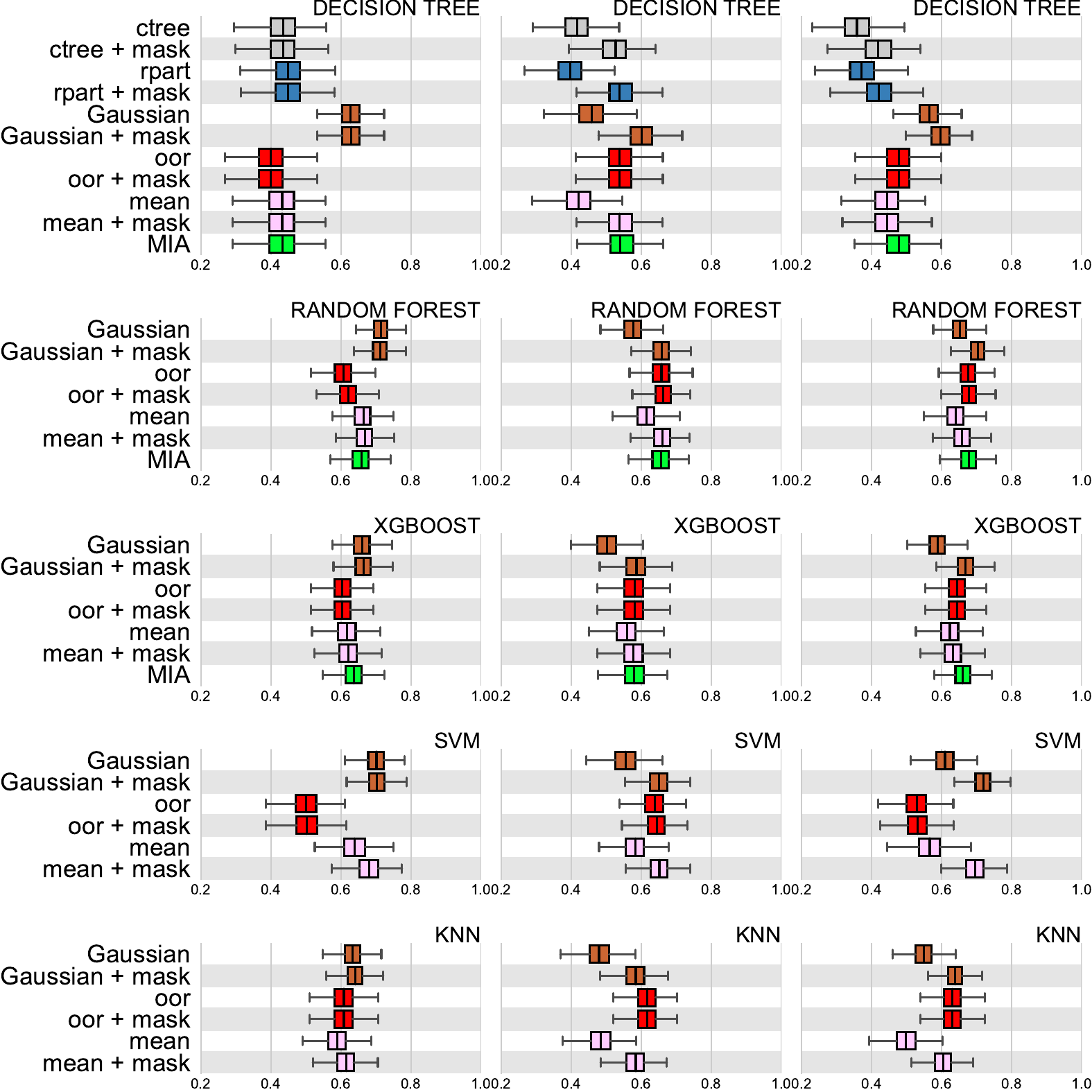}
    \llap{\raisebox{1.00\linewidth}{%
    {\sffamily%
    \hspace{.4\textwidth}MCAR
    \hspace{.2\textwidth}MNAR
    \hspace{.2\textwidth} Predictive M
    \hspace*{.1\textwidth}}}}

    \caption{\textbf{$R^2$ scores on model 1 $\bullet$} \modif{Normalized}
explained variance for different mechanisms with 60\% of missing values,
$n=1000$, $d=9$ and $\rho = 0.5$.}
    \label{fig:simu_mcar_nonlinear_multivar_miss6_rho5}
\end{figure}

\newpage

\subsection{Statistical tests}
\label{sec:app_statistical_tests}

\begin{figure}[H]
    \centering
\includepdf[scale=0.8, nup=2x3, pages=1-5]{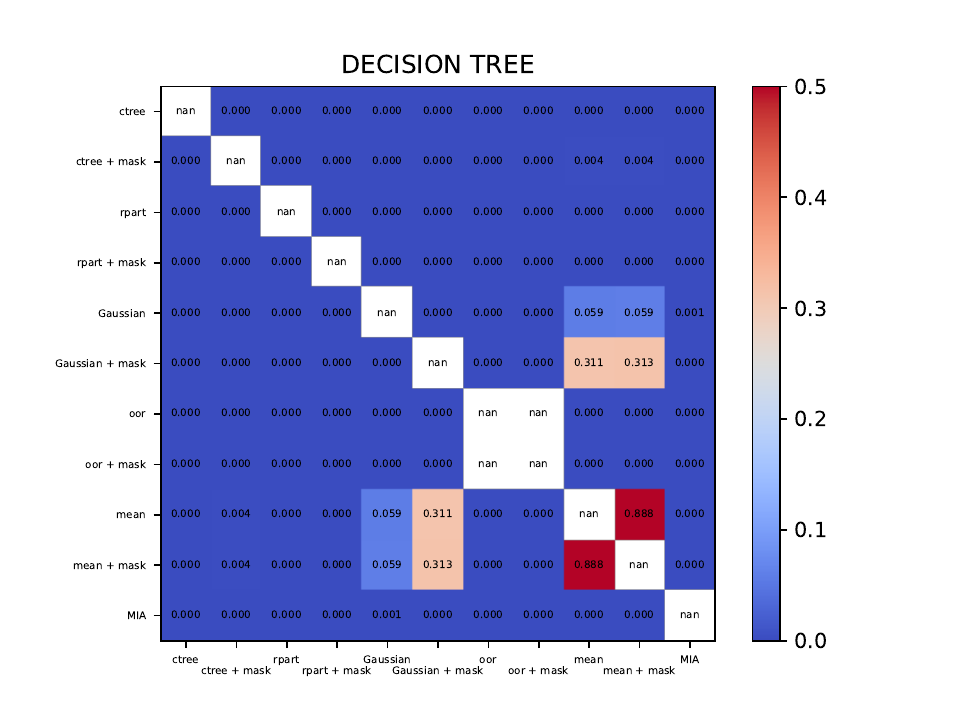}
\vspace*{-10mm}
\caption{P-values of paired t-tests for each pair of imputation method and for each learning algorithm in a MCAR model ($\rho=0.2$, missing rate of $20\%$)}
\label{fig:ttests_mcar_miss2_rho2}
\end{figure}

\newpage 

\begin{figure}[H]
    \centering
\includepdf[scale=0.8, nup=2x3, pages=1-5]{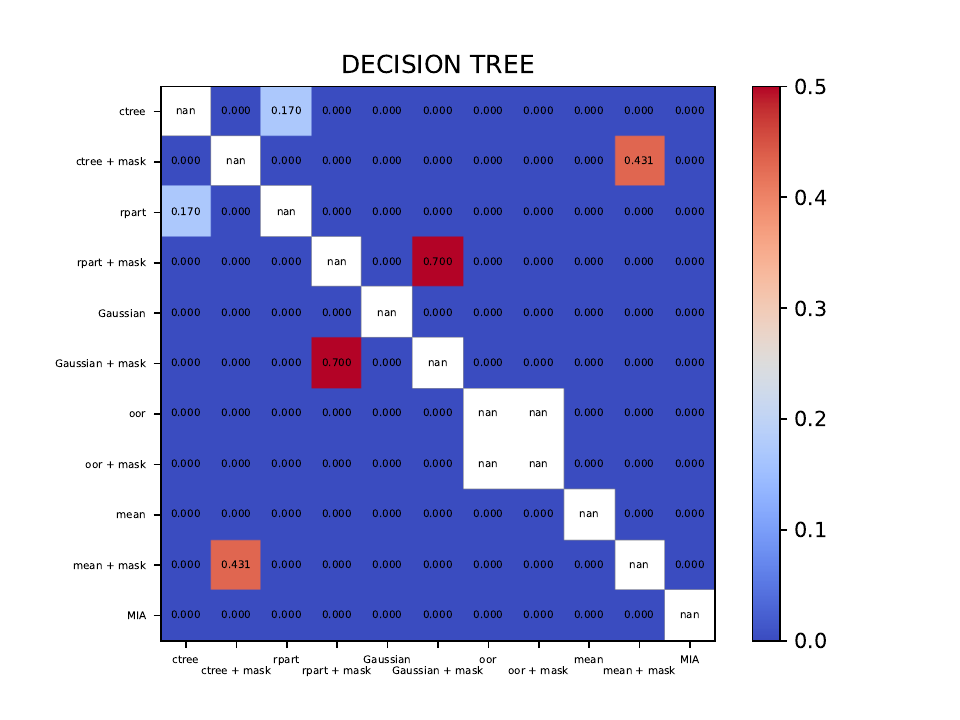}
\vspace*{-10mm}
\caption{P-values of paired t-tests for each pair of imputation method and for each learning algorithm in a MNAR model ($\rho=0.2$, missing rate of $20\%$)}
\label{fig:ttests_mnar_miss2_rho2}
\end{figure}

\newpage 

\begin{figure}[H]
    \centering
\includepdf[scale=0.8, nup=2x3, pages=1-5]{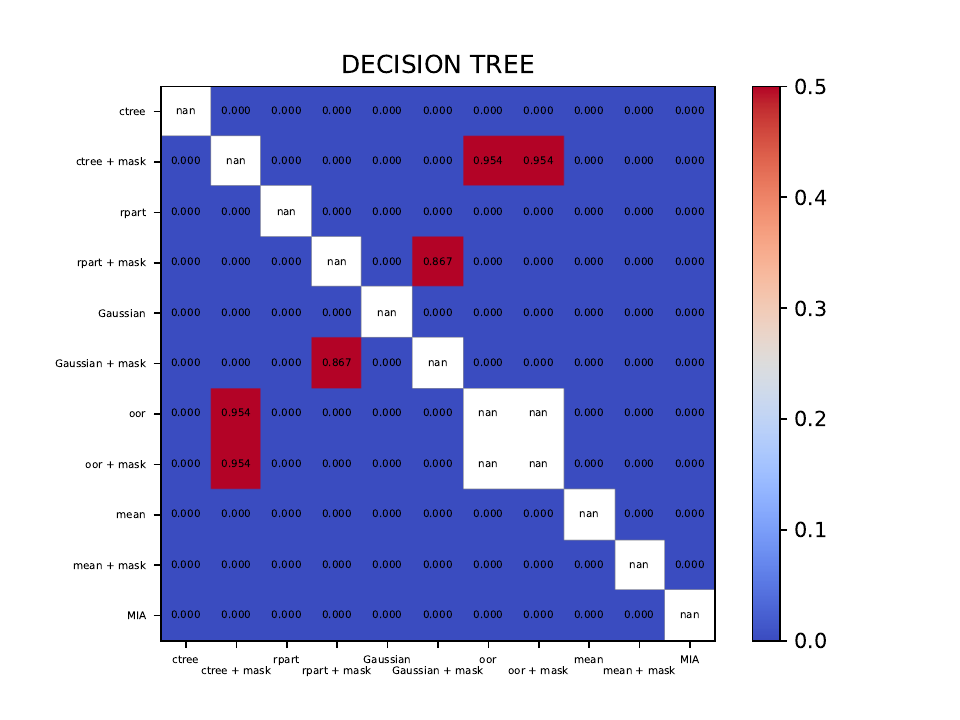}
\vspace*{-10mm}
\caption{P-values of paired t-tests for each pair of imputation method and for each learning algorithm in a Predictive M model ($\rho=0.2$, missing rate of $20\%$)}
\label{fig:ttests_pred_miss2_rho2}
\end{figure}

\newpage

\begin{figure}[H]
    \centering
\includepdf[scale=0.8, nup=2x3, pages=1-5]{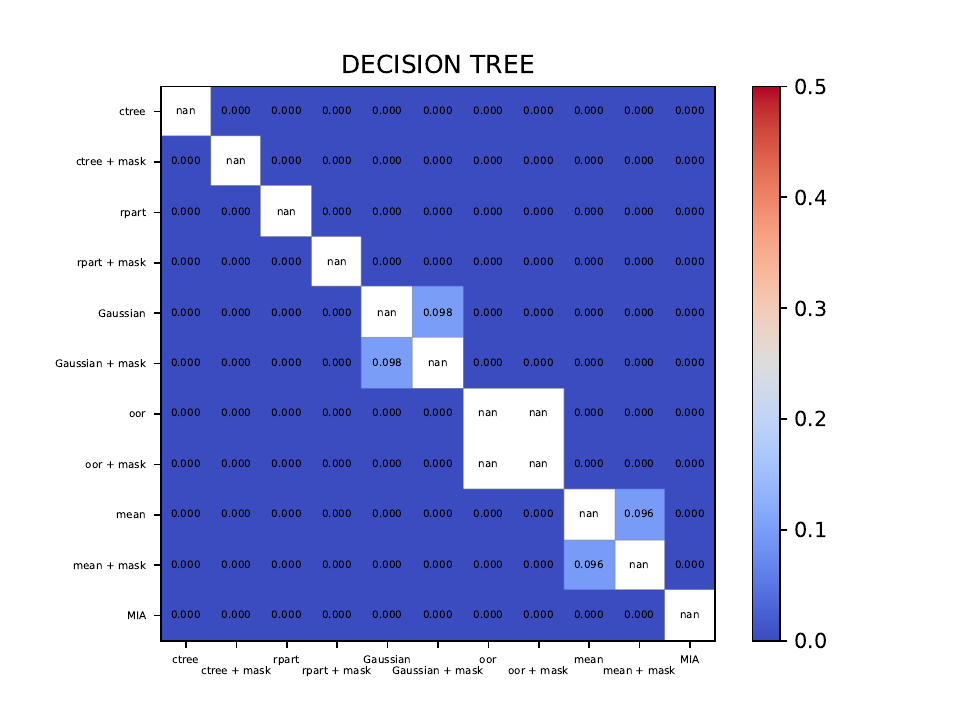}
\vspace*{-10mm}
\caption{P-values of paired t-tests for each pair of imputation method and for each learning algorithm in a MCAR model ($\rho=0.5$, missing rate of $20\%$)}
\label{fig:ttests_mcar_miss2_rho5}
\end{figure}

\newpage 

\begin{figure}[H]
    \centering
\includepdf[scale=0.8, nup=2x3, pages=1-5]{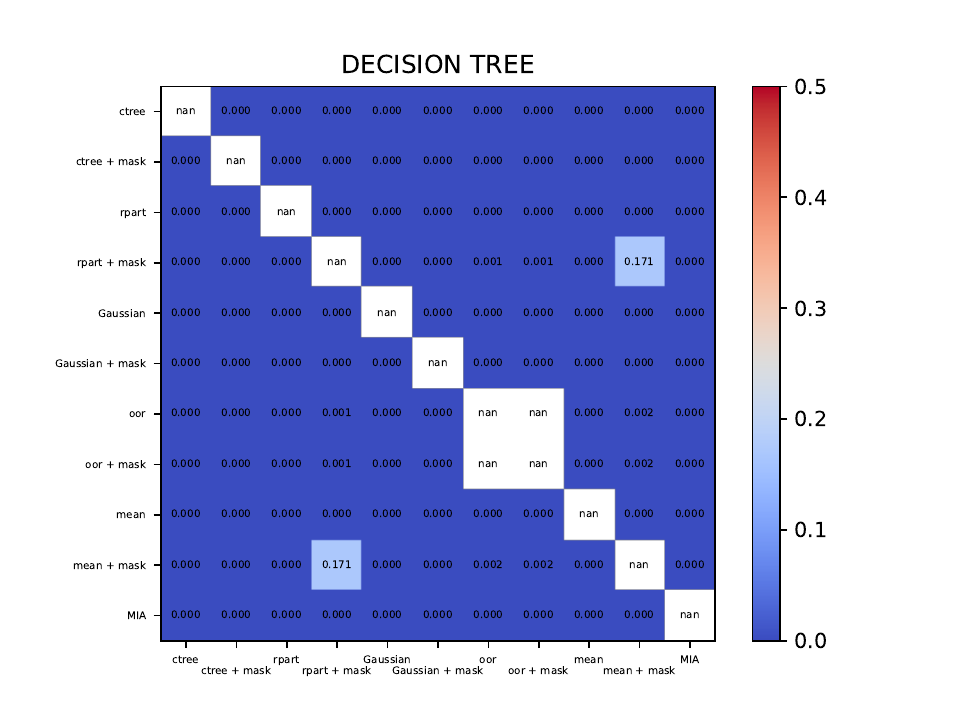}
\vspace*{-10mm}
\caption{P-values of paired t-tests for each pair of imputation method and for each learning algorithm in a MNAR model ($\rho=0.5$, missing rate of $20\%$)}
\label{fig:ttests_mnar_miss2_rho5}
\end{figure}

\newpage 

\begin{figure}[H]
    \centering
\includepdf[scale=0.8, nup=2x3, pages=1-5]{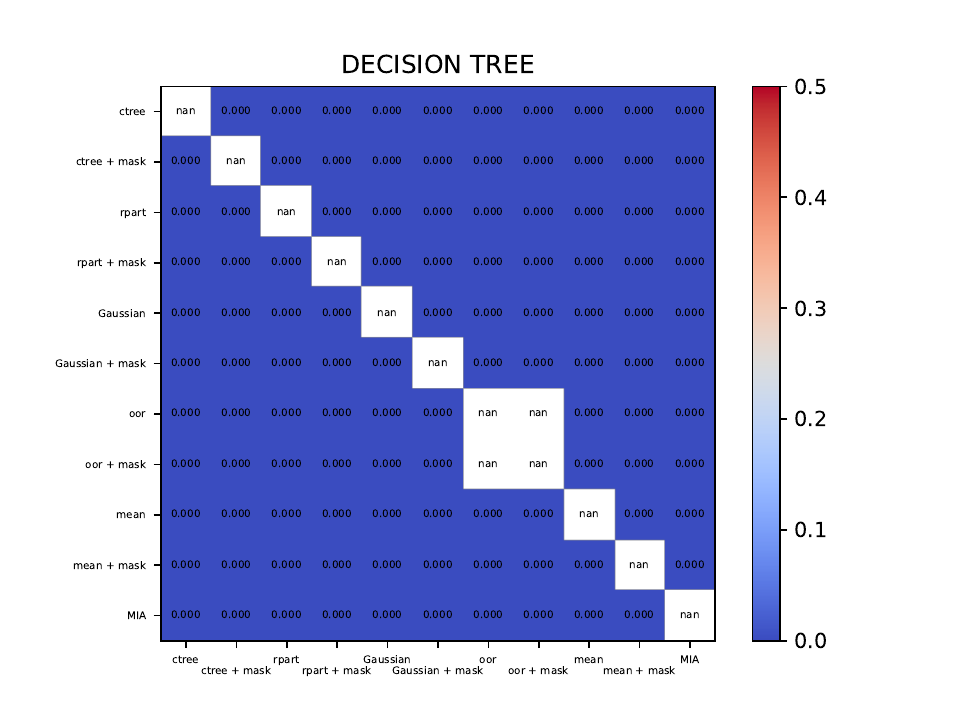}
\vspace*{-10mm}
\caption{P-values of paired t-tests for each pair of imputation method and for each learning algorithm in a Predictive M model ($\rho=0.5$, missing rate of $20\%$)}
\label{fig:ttests_pred_miss2_rho5}
\end{figure}

\newpage

\begin{figure}[H]
    \centering
\includepdf[scale=0.8, nup=2x3, pages=1-5]{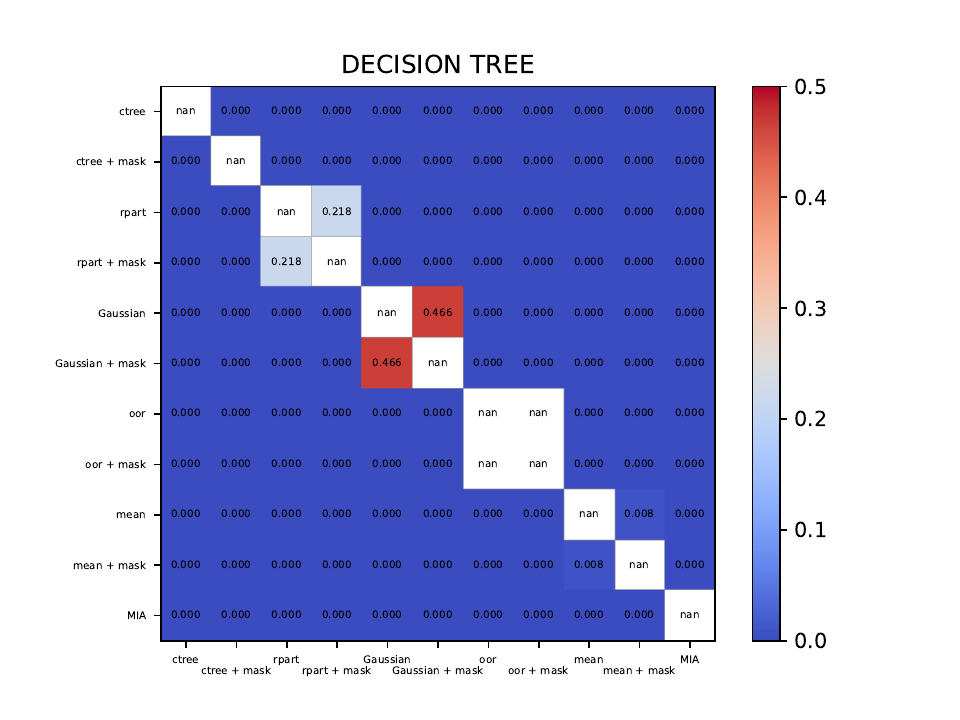}
\vspace*{-10mm}
\caption{P-values of paired t-tests for each pair of imputation method and for each learning algorithm in a MCAR model ($\rho=0.8$, missing rate of $20\%$)}
\label{fig:ttests_mcar_miss2_rho8}
\end{figure}

\newpage 

\begin{figure}[H]
    \centering
\includepdf[scale=0.8, nup=2x3, pages=1-5]{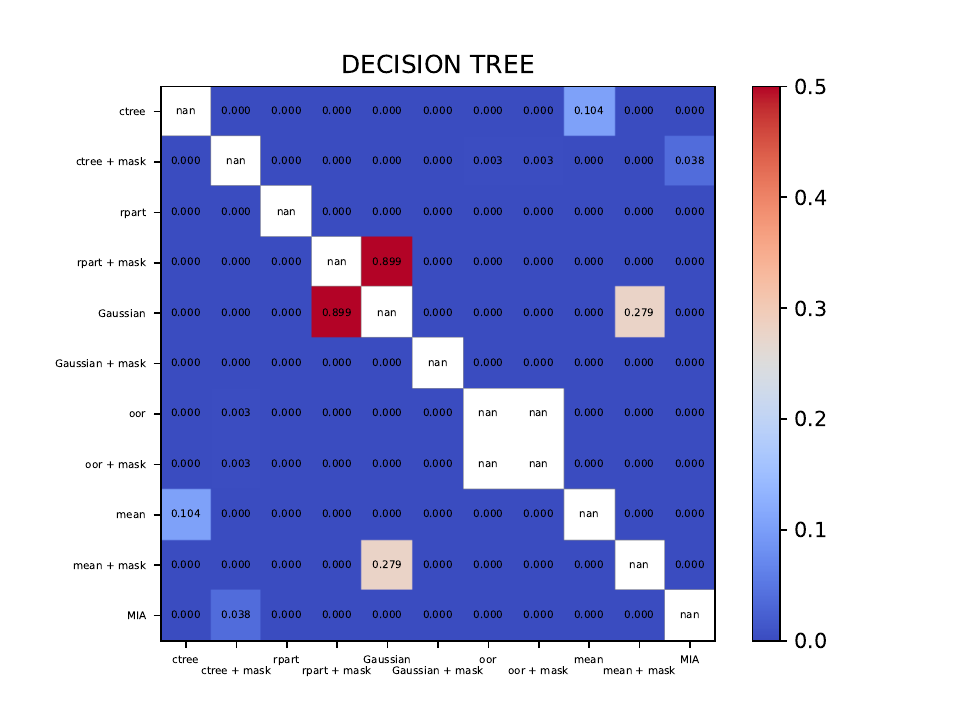}
\vspace*{-10mm}
\caption{P-values of paired t-tests for each pair of imputation method and for each learning algorithm in a MNAR model ($\rho=0.8$, missing rate of $20\%$)}
\label{fig:ttests_mnar_miss2_rho8}
\end{figure}

\newpage 

\begin{figure}[H]
    \centering
\includepdf[scale=0.8, nup=2x3, pages=1-5]{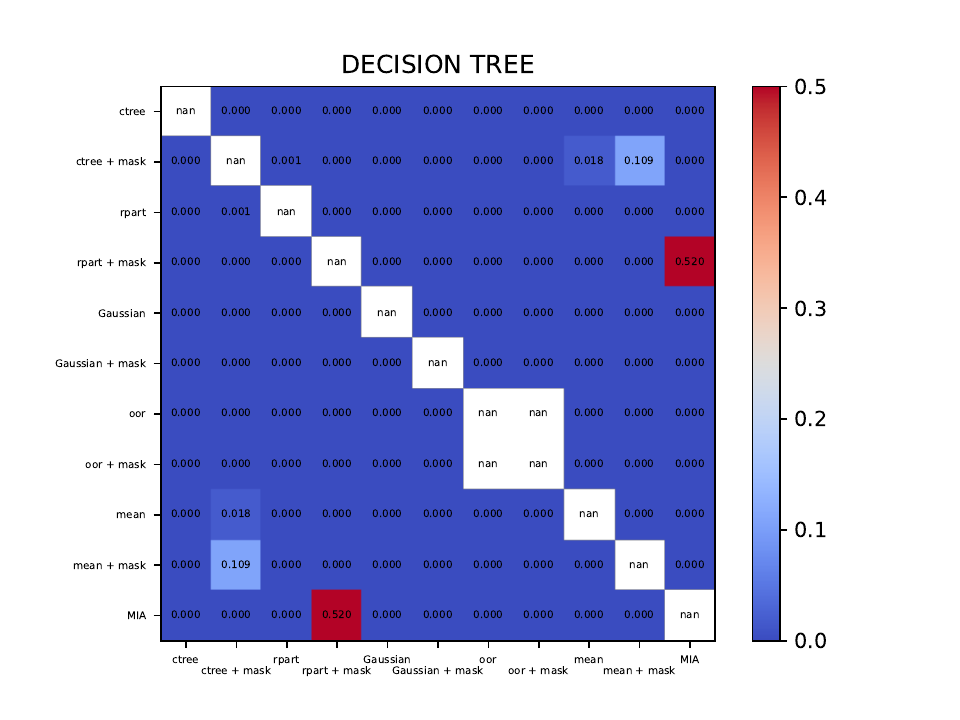}
\vspace*{-10mm}
\caption{P-values of paired t-tests for each pair of imputation method and for each learning algorithm in a Predictive M model ($\rho=0.8$, missing rate of $20\%$)}
\label{fig:ttests_pred_miss2_rho8}
\end{figure}

\newpage

\begin{figure}[H]
    \centering
\includepdf[scale=0.8, nup=2x3, pages=1-5]{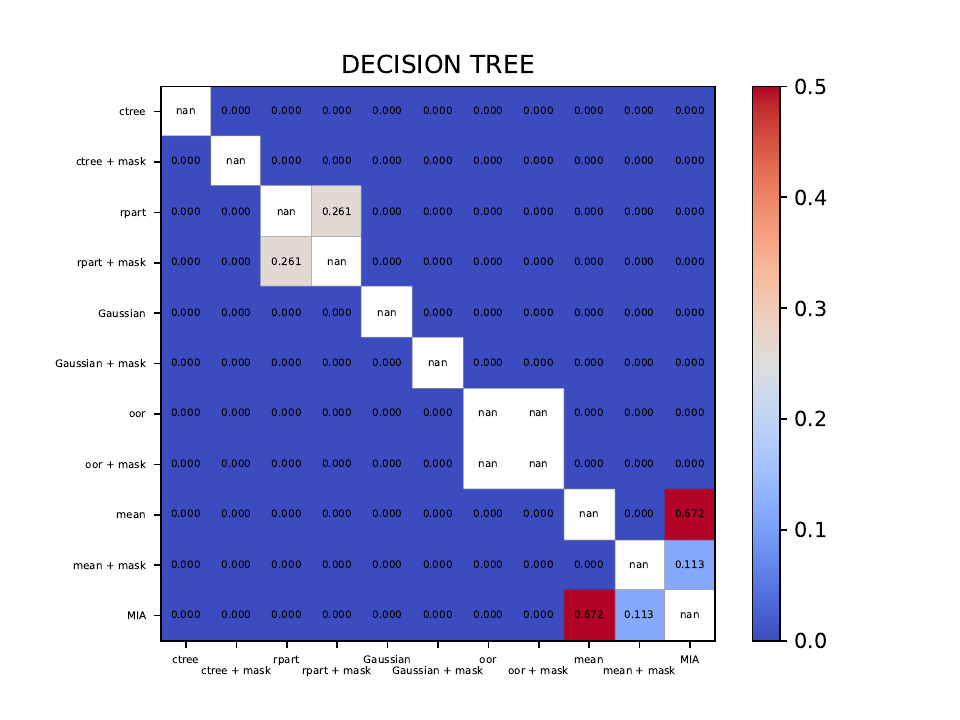}
\vspace*{-10mm}
\caption{P-values of paired t-tests for each pair of imputation method and for each learning algorithm in a MCAR model ($\rho=0.5$, missing rate of $40\%$)}
\label{fig:ttests_mcar_miss4_rho5}
\end{figure}

\newpage 

\begin{figure}[H]
    \centering
\includepdf[scale=0.8, nup=2x3, pages=1-5]{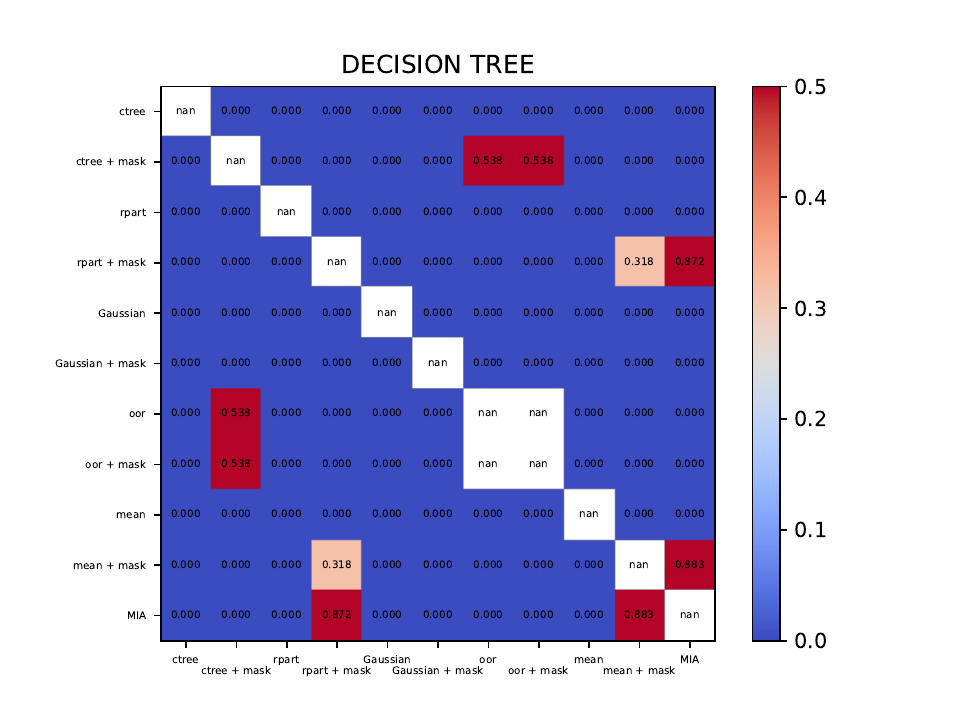}
\vspace*{-10mm}
\caption{P-values of paired t-tests for each pair of imputation method and for each learning algorithm in a MNAR model ($\rho=0.5$, missing rate of $40\%$)}
\label{fig:ttests_mnar_miss4_rho5}
\end{figure}

\newpage 

\begin{figure}[H]
    \centering
\includepdf[scale=0.8, nup=2x3, pages=1-5]{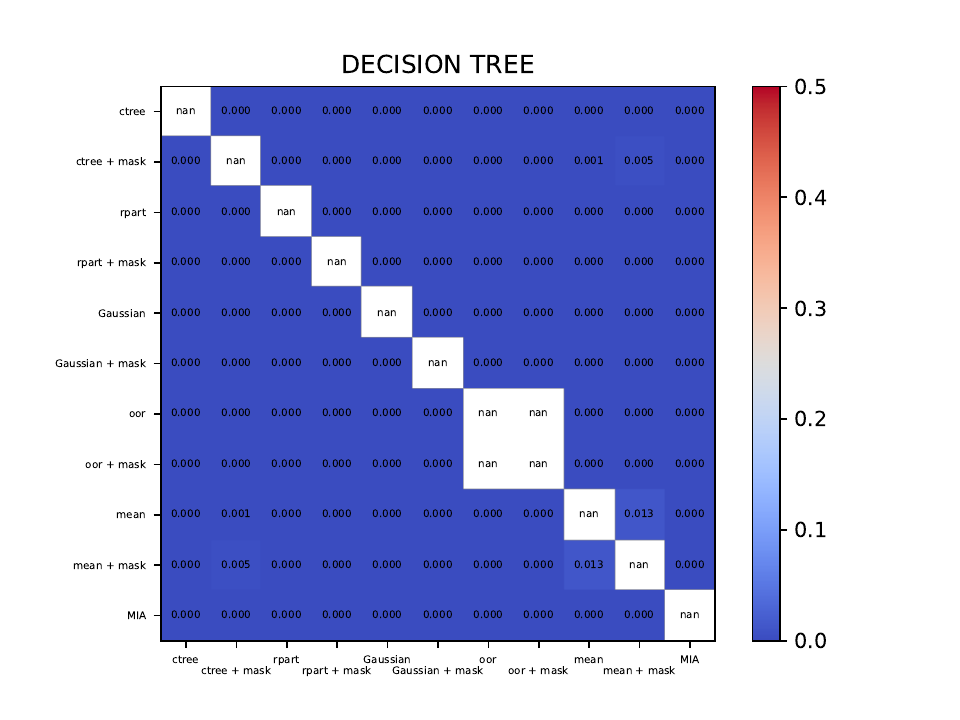}
\vspace*{-10mm}
\caption{P-values of paired t-tests for each pair of imputation method and for each learning algorithm in a Predictive M model ($\rho=0.5$, missing rate of $40\%$)}
\label{fig:ttests_pred_miss4_rho5}
\end{figure}

\newpage

\begin{figure}[H]
    \centering
\includepdf[scale=0.8, nup=2x3, pages=1-5]{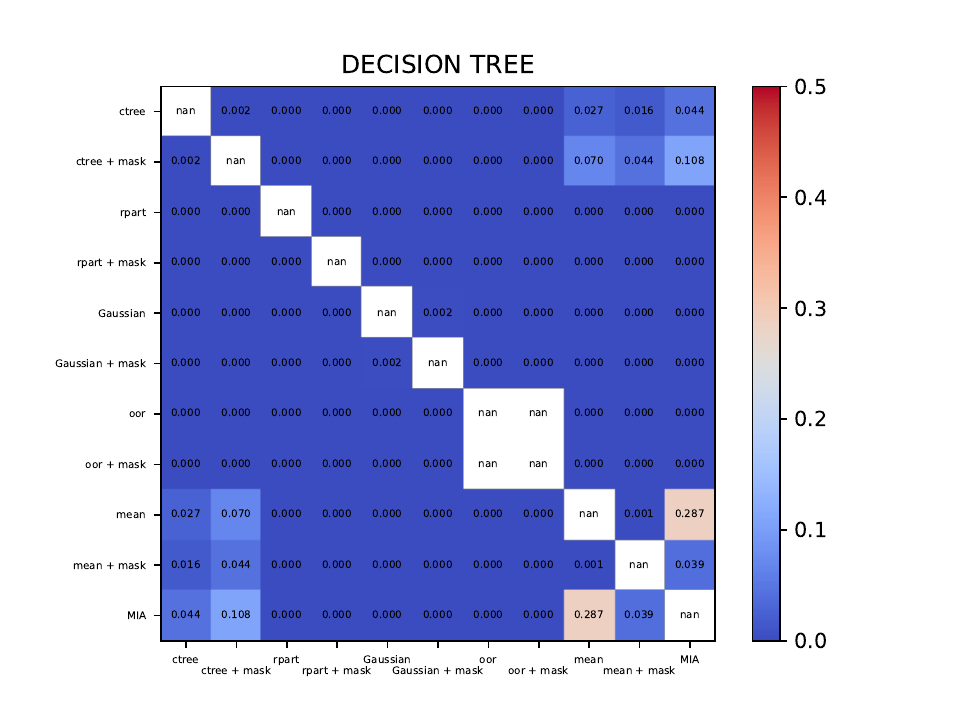}
\vspace*{-10mm}
\caption{P-values of paired t-tests for each pair of imputation method and for each learning algorithm in a MCAR model ($\rho=0.5$, missing rate of $60\%$)}
\label{fig:ttests_mcar_miss6_rho5}
\end{figure}

\newpage 

\begin{figure}[H]
    \centering
\includepdf[scale=0.8, nup=2x3, pages=1-5]{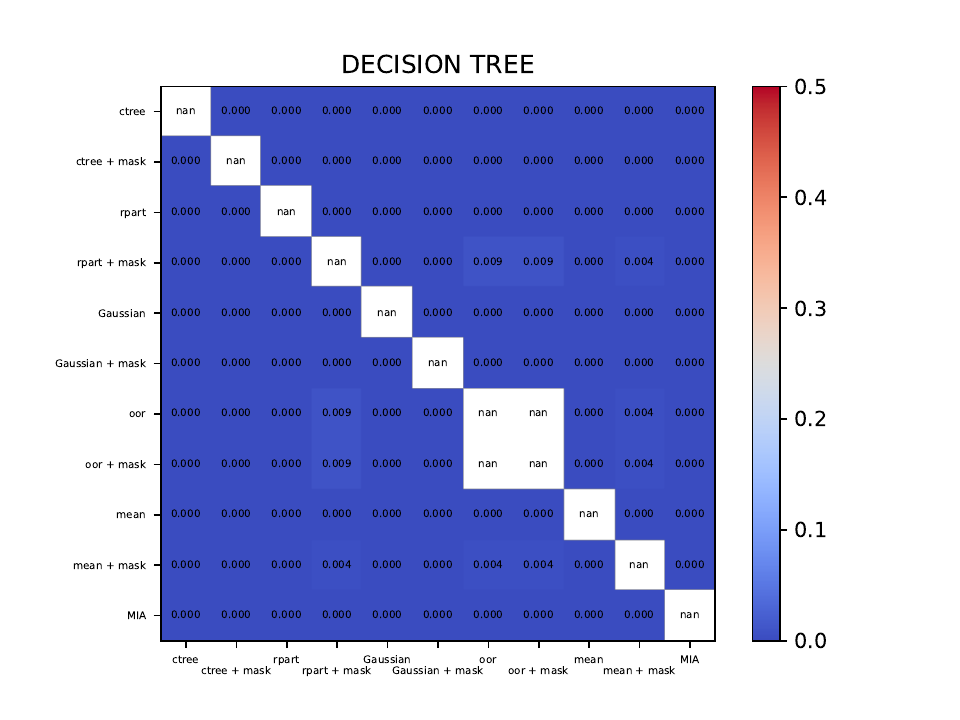}
\vspace*{-10mm}
\caption{P-values of paired t-tests for each pair of imputation method and for each learning algorithm in a MNAR model ($\rho=0.5$, missing rate of $60\%$)}
\label{fig:ttests_mnar_miss6_rho5}
\end{figure}

\newpage 

\begin{figure}[H]
    \centering
\includepdf[scale=0.8, nup=2x3, pages=1-5]{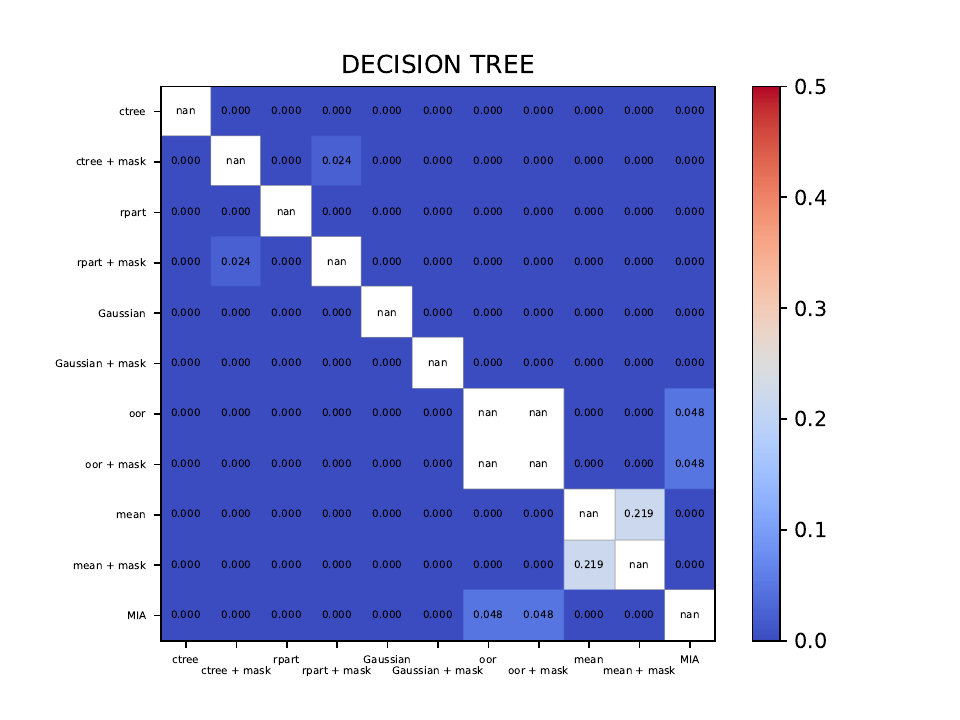}
\vspace*{-10mm}
\caption{P-values of paired t-tests for each pair of imputation method and for each learning algorithm in a Predictive M model ($\rho=0.5$, missing rate of $60\%$)}
\label{fig:ttests_pred_miss6_rho5}
\end{figure}

\end{document}